\documentclass{article}

\usepackage{microtype}
\usepackage{graphicx}
\usepackage{subcaption}
\usepackage{booktabs} % for professional tables
\usepackage{multirow}
\usepackage{tikz}
\usetikzlibrary{positioning,arrows.meta,calc,fit,backgrounds}
\usepackage{hyperref}

\usepackage[preprint]{icml2026}

\usepackage{amsmath}
\usepackage{amssymb}
\usepackage{mathtools}
\usepackage{amsthm}
\usepackage[capitalize,noabbrev]{cleveref}

\theoremstyle{plain}
\newtheorem{theorem}{Theorem}[section]
\newtheorem{proposition}[theorem]{Proposition}
\newtheorem{lemma}[theorem]{Lemma}

\theoremstyle{definition}
\newtheorem{definition}[theorem]{Definition}
\newtheorem{assumption}[theorem]{Assumption}
\theoremstyle{remark}

\usepackage[textsize=tiny]{todonotes}

\newcommand{\E}{\mathbb E}

\newcommand{\best}[1]{\textbf{#1}}
\newcommand{\second}[1]{\underline{#1}}

\usepackage{adjustbox}

\icmltitlerunning{GEPC: Group-Equivariant Posterior Consistency}

\begin{document}

\twocolumn[
  \icmltitle{GEPC: Group-Equivariant Posterior Consistency \\
    for Out-of-Distribution Detection in Diffusion Models}

  % It is OKAY to include author information, even for blind submissions: the
  % style file will automatically remove it for you unless you've provided
  % the [accepted] option to the icml2026 package.

  % List of affiliations: The first argument should be a (short) identifier you
  % will use later to specify author affiliations Academic affiliations
  % should list Department, University, City, Region, Country Industry
  % affiliations should list Company, City, Region, Country

  % You can specify symbols, otherwise they are numbered in order. Ideally, you
  % should not use this facility. Affiliations will be numbered in order of
  % appearance and this is the preferred way.
  \icmlsetsymbol{equal}{*}

\begin{icmlauthorlist}
  \icmlauthor{Yadang Alexis Rouzoumka}{onera,sondra}
  \icmlauthor{Jean Pinsolle}{sondra}
  \icmlauthor{Eugénie Terreaux}{onera}
  \icmlauthor{Christèle Morisseau}{onera}
  \icmlauthor{Jean-Philippe Ovarlez}{onera,sondra}
  \icmlauthor{Chengfang Ren}{sondra}
\end{icmlauthorlist}

\icmlaffiliation{onera}{DEMR, ONERA, Université Paris-Saclay, 91120 Palaiseau, France}
\icmlaffiliation{sondra}{SONDRA, CentraleSupélec, Université Paris-Saclay, 91190 Gif-sur-Yvette, France}

% Put the real corresponding author email here:
\icmlcorrespondingauthor{Yadang Alexis Rouzoumka}{yadang-alexis.rouzoumka@centralesupelec.fr}

  % You may provide any keywords that you find helpful for describing your
  % paper; these are used to populate the "keywords" metadata in the PDF but
  % will not be shown in the document
  \icmlkeywords{Machine Learning, ICML}

  \vskip 0.3in
]

% this must go after the closing bracket ] following \twocolumn[ ...

% This command actually creates the footnote in the first column listing the
% affiliations and the copyright notice. The command takes one argument, which
% is text to display at the start of the footnote. The \icmlEqualContribution
% command is standard text for equal contribution. Remove it (just {}) if you
% do not need this facility.

% Use ONE of the following lines. DO NOT remove the command.
% If you have no special notice, KEEP empty braces:
\printAffiliationsAndNotice{}  % no special notice (required even if empty)
% Or, if applicable, use the standard equal contribution text:
% \printAffiliationsAndNotice{\icmlEqualContribution}

% \begin{abstract}
%   This document provides a basic paper template and submission guidelines.
%   Abstracts must be a single paragraph, ideally between 4--6 sentences long.
%   Gross violations will trigger corrections at the camera-ready phase.
% \end{abstract}

%=============================================================
% ABSTRACT
%=============================================================
\begin{abstract}
Diffusion models learn a time-indexed score field $\mathbf{s}_\theta(\mathbf{x}_t,t)$ that often inherits approximate equivariances  (flips, rotations, circular shifts)  from in-distribution (ID) data and convolutional backbones.
Most diffusion-based out-of-distribution (OOD) detectors exploit score \emph{magnitude} or \emph{local geometry} (energies, curvature, covariance spectra) and largely ignore equivariances.
We introduce Group-Equivariant Posterior Consistency (GEPC), a training-free probe that measures how consistently the learned score transforms under a finite group $\mathcal{G}$, detecting \emph{equivariance breaking} even when score magnitude remains unchanged.
At the population level, we propose the ideal GEPC residual, which averages an equivariance-residual functional over $\mathcal{G}$, and we derive ID upper bounds and OOD lower bounds under mild assumptions.
GEPC requires only score evaluations and produces interpretable equivariance-breaking maps.
On OOD image benchmark datasets, we show that GEPC achieves competitive or improved AUROC compared to recent diffusion-based baselines while remaining computationally lightweight. On high-resolution synthetic aperture radar imagery where OOD corresponds to targets or anomalies in clutter, GEPC  yields strong target-background separation and visually interpretable equivariance-breaking maps.
Code is available at \url{https://github.com/RouzAY/gepc-diffusion/}.
\end{abstract}

%%%%%%%%%%%%%%%%%%%%%%%%%%%%%%%%%%%%%%%%%%%%%%%%%%%%%%%%%%%%
\section{Introduction}
%%%%%%%%%%%%%%%%%%%%%%%%%%%%%%%%%%%%%%%%%%%%%%%%%%%%%%%%%%%%

Detecting out-of-distribution (OOD) inputs is a fundamental challenge for deploying reliable machine learning models.
Classic post-hoc scores for classifiers rely on confidence or energy, such as maximum softmax probability (MSP), ODIN, and energy-based scores~\cite{hendrycks2017baseline, liang2018odin, liu2020energy}, while subsequent work exploits representation geometry (e.g., $k$NN- or PCA-style feature models)~\cite{sun2022knnOod, guan2023pca}.

Diffusion models~\cite{ho2020ddpm, song2021sde, karras2022edm, yang2023diffusionSurvey} have recently emerged as strong priors for OOD and anomaly detection.
Beyond raw likelihoods, they expose a time-indexed score field and a generative trajectory, motivating diffusion OOD scores that often rely on either
(i) trajectory/energy criteria along the reverse process or probability-flow ODE~\cite{graham2023ddpmood, diffpath, shin2023spr},
or (ii) local score-field geometry such as curvature or covariance-spectrum diagnostics~\cite{scoped2025, eigenscore2025}.
These approaches primarily exploit score magnitude or local differential structure, and may require additional reverse steps or Jacobian-related computations.

In parallel, explicitly equivariant score-based and diffusion models have advanced rapidly, especially for 3D and molecular data.
E(3)-equivariant diffusion models~\cite{pmlr-v162-hoogeboom22a,NEURIPS2024_587b3f36,DBLP:journals/bmcbi/ZhangLLWG24} combine invariant noise processes with equivariant networks to guarantee that learned distributions inherit known symmetries.
Recent analyses~\cite{chen2024equivariantscorebasedgenerativemodels, tahmasebi2024sample} relate score matching to a symmetrized score term plus a deviation-from-equivariance penalty, while group-convolutional / steerable CNNs~\cite{cohen2016gcnn, cohen2017steerable} and studies of approximate shift equivariance in vanilla CNNs~\cite{zhang2019shiftinvariant, bruintjes2023equivariancefactors} show that augmentation and anti-aliasing yield only approximate equivariance in practice.

These works primarily treat equivariance as an inductive bias for training.
We take the complementary viewpoint: we do not enforce equivariance at training time; we measure its (in)consistency as a test-time statistic for OOD detection.

\paragraph{Our perspective: equivariance breaking as an OOD signal.}
We hypothesize that when the in-distribution (ID) is approximately invariant under a group $\mathcal{G}$ (e.g., flips, rotations, circular shifts) and the backbone is convolutional and trained with augmentations, the learned diffusion scores should be \emph{approximately $\mathcal{G}$-equivariant} on ID samples, but this \emph{posterior consistency} should break for OOD inputs that violate the learned symmetries or lie far from the ID manifold.
Concretely, group-transforming a noisy input $\mathbf{x}_t$ and transporting the predicted score back should preserve the score on ID; systematic violations indicate distribution shift.
Importantly, this is not a pixel-space invariance test: we probe equivariance of the \emph{learned score field} at noisy levels, hence the model's posterior geometry rather than raw image symmetries.

We operationalise this via \textbf{GEPC} (Group-Equivariant Posterior Consistency), a training-free probe of pretrained diffusion models.
For a group $\mathcal{G}$ and an operator $\mathcal{P}_g\in \mathcal{G}$ and selected timesteps, we compare $\mathcal{P}_g^\top \mathbf{s}_\theta(\mathcal{P}_g \mathbf{x}_t,t)$ and $\mathbf{s}_\theta(\mathbf{x}_t,t)$, aggregate residuals over $\mathcal{G}$ and $t$, and calibrate the resulting statistic using only ID data.
GEPC produces both a scalar OOD score and spatial heatmaps highlighting equivariance failures.

Figure~\ref{fig:gepc_overview} summarizes GEPC: we noise the input, probe score-field equivariance via group transports, aggregate residuals across timesteps, and calibrate using ID-only statistics to obtain an OOD score and equivariance-breaking maps.
\paragraph{Relation to equivariance-based conformal OOD detectors (iDECODe).}
iDECODe~\cite{kaur2022idecode} turns equivariance violations under random group actions into a conformal non-conformity score, enabling distribution-free calibrated decisions.
GEPC is complementary: rather than wrapping equivariance errors in a conformal layer, we probe pretrained diffusion score fields across timesteps and analyze the corresponding \emph{population} equivariance-breaking functional, yielding ID upper and OOD lower bounds under mild score-error assumptions.

\paragraph{Relation to diffusion OOD geometry.}
GEPC complements the dominant diffusion OOD families above.
Trajectory/energy and curvature/covariance-spectrum methods probe the \emph{local} geometry of $\mathbf{s}_\theta$ along time, and some require Jacobian-related computations.
GEPC instead targets \emph{global group consistency}: we measure how consistently the score transforms under $\mathcal{G}$ and turn deviations from equivariance into an OOD statistic, without computing any Jacobian or modifying the backbone.

At the population level, we give a equivariance-breaking characterization of the ideal GEPC residual under $\mathcal{G}$, closely related to deviation-from-equivariance analyses in equivariant score matching~\cite{chen2024equivariantscorebasedgenerativemodels}.
Under mild assumptions, we derive ID upper bounds and OOD lower bounds for the expected GEPC residual, clarifying when posterior consistency should hold or break.

% % Orthogonally, feature-space OOD methods refine matrix-induced distances or exploit neural-collapse structure in classifier representations, but they operate purely in feature space; GEPC instead probes how a generative score field responds to group actions.
% Unlike feature-space OOD scores, GEPC probes how a \emph{generative score field} responds to group actions.

\paragraph{Contributions.}
(1) We introduce \textbf{GEPC}, a training-free OOD score that tests \emph{group-consistency} of diffusion score fields across timestep and group actions.
  GEPC requires only inference access to a pretrained DDPM-style backbone (including improved diffusion), with no architectural changes, fine-tuning, or Jacobian evaluation.
\newline
(2) We provide a practical recipe combining group pooling, stability-based timestep selection, ID-only calibration (KDE or vector Mahalanobis), and stochastic subsampling of timestep and group elements.
  We characterise the computational cost and show that GEPC operates in a similar number-of-function-evaluations (NFE) regime as simple score-norm baselines while approaching the performance of more expensive trajectory and curvature-based methods.
  % \item[(3)] We connect the ideal GEPC residual to a \emph{group score-matching (Fisher) divergence} on $p_t$, derive ID upper and OOD lower bounds under symmetry and score-error assumptions, and discuss cross-backbone regimes where the diffusion model is trained on a different source distribution.
\newline
(3) We provide a population-level analysis of GEPC: we relate the ideal residual to an equivariance-breaking functional under $\mathcal{G}$, derive ID upper bounds and OOD lower bounds under mild score-error assumptions, and discuss cross-backbone regimes where the diffusion model is trained on a different source distribution.
\newline
(4) We empirically show that GEPC is competitive with and complementary to curvature, spectrum, and trajectory-based diffusion OOD scores on CIFAR-scale near/far OOD benchmarks under a shared CelebA backbone, and that in a cross-domain high-resolution setting where a $256\times256$ LSUN-trained backbone is applied to radar SAR imagery, GEPC yields strong detection performance and interpretable equivariance-breaking maps.

\begin{figure*}[t]
\centering
\resizebox{\textwidth}{!}{%
\begin{tikzpicture}[
    font=\large,
    >=Stealth,
    every node/.style={transform shape},
    box/.style={
        draw, rounded corners=2.5mm, line width=1pt, align=center,
        inner sep=6pt, minimum height=10mm, fill=#1!10, text width=5.0cm
    },
    smallbox/.style={
        draw, rounded corners=2.0mm, line width=0.9pt, align=center,
        inner sep=4pt, minimum height=8mm, fill=#1!6, text width=6.8cm
    },
    flow/.style={-Stealth, line width=1.2pt, line cap=round},
    optflow/.style={-Stealth, line width=1.0pt, line cap=round, dashed},
    lab/.style={font=\normalsize\bfseries}
]

% ---- libs needed: positioning,arrows.meta,calc,fit,backgrounds ----

% Colors
\definecolor{cIn}{RGB}{52,58,64}
\definecolor{cNoi}{RGB}{79,98,214}
\definecolor{cGrp}{RGB}{111,66,193}
\definecolor{cRes}{RGB}{214,51,132}
\definecolor{cCal}{RGB}{25,135,84}
\definecolor{cOOD}{RGB}{220,53,69}

% ==========================================================
% ROW 1 (top)
% ==========================================================
\node[box=cIn] (x0)
  {$\mathbf{x}_0$\\[0.6mm]\textcolor{cNoi!90!black}{ID or OOD input}};

\node[box=cNoi, right=18mm of x0] (noise)
  {Forward noising\\[0.6mm]
   $\mathbf{x}_t \sim q(\mathbf{x}_t\mid \mathbf{x}_0)$\\[0.2mm]
   \textcolor{cNoi!90!black}{$t\in\mathcal{T}$}};

%\node[box=cNoi, right=18mm of noise] (xt)
%  {$\mathbf{x}_t$};

\node[box=cGrp, right=18mm of noise] (branch)
  {Group transport\\[0.6mm]
   \textbf{(a)} $\mathcal{P}_g\mathbf{x}_t$,\ \textbf{(b)} $\mathbf{x}_t$\\[0.2mm]
   \textcolor{cGrp!90!black}{$g\sim \mathrm{Unif}(\mathcal{G})$}};

\node[box=cRes, right=18mm of branch] (resid)
  {Score + transport back\\[0.6mm]
   $\tilde{\mathbf{s}}_\theta=\mathcal{P}_g^{-1}\mathbf{s}_\theta(\mathcal{P}_g\mathbf{x}_t,t)$\\[0.3mm]
   $\mathbf r_t=\tilde{\mathbf{s}}_\theta - \mathbf{s}_\theta(\mathbf{x}_t,t)$};

\draw[flow] (x0) -- (noise);
%\draw[flow] (noise) -- (xt);
\draw[flow] (noise) -- (branch);
\draw[flow] (branch) -- (resid);

\node[lab, above=1mm of branch, text=cGrp!90!black]
  {Probe equivariance on $\mathbf{s}_\theta(\cdot,t)$};

% ==========================================================
% ROW 2 (bottom)  <-- aligned under row 1
% ==========================================================
\node[box=cRes, below=13mm of resid] (pool)
  {Pool \& normalise\\[0.8mm]
   $R_t=\|\mathbf r_t\|_2^2$,\quad
   $z_t=\mathbb{E}_g\!\left[\dfrac{R_t}{b_t(\mathbf{x}_0)}\right]$};

\node[box=cRes, left=18mm of pool] (agg)
  {Aggregate across time\\[0.8mm]
   $\mathrm{GEPC}_s(\mathbf{x}_0)=\sum_{t\in\mathcal{T}} w_t\,z_t$\\[0.2mm]
   \textcolor{cRes!90!black}{(keep-$K$ timesteps, weights $w_t$)}};

\node[lab, above=1mm of agg, text=cRes!90!black]
  {Patch level statistics};

\node[box=cCal, left=18mm of agg] (calib)
  {ID-only calibration\\[0.8mm]
   KDE / z-score / MVN\\[0.2mm]
   \textcolor{cCal!90!black}{fit on ID-train}};

\node[box=cCal, left=18mm of calib] (out)
  {Output\\[0.8mm]
   OOD score + map};

   \node[lab, above=1mm of out, text=cCal!90!black]
  {Decision / Thresolding};

% connect row1 -> row2 neatly (down then left, like your example)
\draw[flow] (resid.south) -- (pool.north);
\draw[flow] (pool) -- (agg);
\draw[flow] (agg) -- (calib);
\draw[flow] (calib) -- (out);

% ==========================================================
% Optional ribbon (ID-train)
% ==========================================================
\node[smallbox=cCal, above=9mm of noise] (tsel)
  {ID-train: select $\mathcal{T}$ and $w_t$ \ (\textcolor{cCal!90!black}{stability/CV})};
\draw[optflow] (tsel) -- (noise.north);

% ==========================================================
% Density inset (below calib) -- now very visible
% ==========================================================
\node[smallbox=cIn, below=13mm of calib, minimum height=26mm] (dens) {};
\node[lab, above=1mm of dens, text=cIn!90!black] {ID vs OOD score density (normalized)};

\coordinate (A) at ([xshift=6mm,yshift=6mm]dens.south west);
\coordinate (B) at ([xshift=6mm,yshift=20mm]dens.south west);
\coordinate (C) at ([xshift=62mm,yshift=6mm]dens.south west);
\draw[line width=0.9pt, draw=cIn!80] (A) -- (B);
\draw[line width=0.9pt, draw=cIn!80] (A) -- (C);
\node[font=\scriptsize, rotate=90, text=cIn!80] at ([xshift=2mm,yshift=13mm]dens.south west) {density};
\node[font=\scriptsize, text=cIn!80] at ([xshift=35mm,yshift=3mm]dens.south west) {GEPC score};

% ID curve (blue)
\path[fill=cNoi!25, draw=cNoi!90!black, line width=1.0pt]
  ([xshift=10mm,yshift=6mm]dens.south west)
  .. controls ([xshift=16mm,yshift=20mm]dens.south west) and ([xshift=25mm,yshift=20mm]dens.south west)
  .. ([xshift=30mm,yshift=11mm]dens.south west)
  .. controls ([xshift=32mm,yshift=8mm]dens.south west) and ([xshift=33mm,yshift=7mm]dens.south west)
  .. ([xshift=34mm,yshift=6mm]dens.south west)
  -- cycle;
\node[font=\scriptsize\bfseries, text=cNoi!90!black] at ([xshift=21mm,yshift=19mm]dens.south west) {ID};

% OOD curve (red)
\path[fill=cOOD!22, draw=cOOD!90!black, line width=1.0pt]
  ([xshift=28mm,yshift=6mm]dens.south west)
  .. controls ([xshift=36mm,yshift=9mm]dens.south west) and ([xshift=46mm,yshift=20mm]dens.south west)
  .. ([xshift=54mm,yshift=15mm]dens.south west)
  .. controls ([xshift=58mm,yshift=12mm]dens.south west) and ([xshift=61mm,yshift=8mm]dens.south west)
  .. ([xshift=62mm,yshift=6mm]dens.south west)
  -- cycle;
\node[font=\scriptsize\bfseries, text=cOOD!90!black] at ([xshift=52mm,yshift=20mm]dens.south west) {OOD};

% threshold
\draw[dashed, line width=0.9pt, draw=cIn!80]
  ([xshift=45mm,yshift=6mm]dens.south west) -- ([xshift=45mm,yshift=20mm]dens.south west);
\node[font=\scriptsize, text=cIn!80] at ([xshift=45mm,yshift=21.5mm]dens.south west) {$\tau$};

\draw[optflow] (calib.south) -- (dens.north);

% ==========================================================
% Map vignette (below out)
% ==========================================================
\node[smallbox=cRes, below=15mm of out, minimum height=24mm, text width=5.0cm] (map) {};
\node[lab, above=1mm of map, text=cRes!90!black] {Equivariance-breaking map};
\draw[optflow] (out.south) -- (map.north);

\begin{scope}
  \clip ([xshift=3mm,yshift=3mm]map.south west) rectangle ([xshift=-3mm,yshift=-3mm]map.north east);
  \fill[cRes!12] (map.south west) rectangle (map.north east);
  \fill[cRes!35] ([xshift=24mm,yshift=11mm]map.south west) circle [radius=5mm];
  \fill[cRes!60] ([xshift=27mm,yshift=13mm]map.south west) circle [radius=3.3mm];
  \fill[cRes!80] ([xshift=29mm,yshift=14mm]map.south west) circle [radius=2.0mm];
  \draw[draw=cRes!90!black, line width=1.2pt, rounded corners=1.2mm]
    ([xshift=18mm,yshift=7mm]map.south west) rectangle ++(18mm,13mm);
\end{scope}

% ==========================================================
% Background grouping
% ==========================================================
\begin{pgfonlayer}{background}
  \node[rounded corners=3mm, draw=cGrp!35, fill=cGrp!6, line width=1pt,
        fit=(branch)(resid), inner sep=6mm] {};
  \node[rounded corners=3mm, draw=cRes!35, fill=cRes!5, line width=1pt,
        fit=(pool)(agg), inner sep=6mm] {};
  \node[rounded corners=3mm, draw=cCal!35, fill=cCal!5, line width=1pt,
        fit=(calib)(out), inner sep=6mm] {};
\end{pgfonlayer}

\end{tikzpicture}%
}
\caption{\textbf{GEPC.} We probe group-consistency of a pretrained diffusion score field by transporting $\mathbf{x}_t$ under $g\in\mathcal{G}$, transporting scores back, and measuring $\mathbf r_t$. Residual energies are pooled, aggregated over selected timesteps, and calibrated with ID-only statistics, yielding an OOD score and equivariance-breaking maps.}
\vspace{-3mm}
\label{fig:gepc_overview}
\end{figure*}

%%%%%%%%%%%%%%%%%%%%%%%%%%%%%%%%%%%%%%%%%%%%%%%%%%%%%%%%%%%%
\section{Related Work}
\label{sec:related}
%%%%%%%%%%%%%%%%%%%%%%%%%%%%%%%%%%%%%%%%%%%%%%%%%%%%%%%%%%%%

\textbf{OOD detection with discriminative models.}
Post-hoc OOD scores for classifiers are often defined on logits or penultimate features: maximum softmax probability (MSP), ODIN, and energy-based scores~\cite{hendrycks2017baseline, liang2018odin, liu2020energy}; deep $k$NN and class-aware feature decoupling further exploit representation geometry~\cite{sun2022knnOod, ling2025cadref}; gradient-based projections and PCA / kernel PCA probe feature manifolds~\cite{behpour2023gradorth, guan2023pca, fang2024kernelpca}.
A complementary line builds explicitly on \emph{matrix-induced distances} and covariance geometry: Mahalanobis-based detectors fit a Gaussian model on ID features and use the induced distance as an OOD score~\cite{lee2018mahalanobis}, while residual-space methods such as ViM and NECO weight directions in the residual subspace or exploit neural-collapse structure~\cite{wang2022vim, ammar2024neco}.
Recent work further adapts the effective covariance at test time using the current feature, shrinking directions aligned with residual activations~\cite{guo2025dcc}, and studies how controlling neural collapse via entropy regularization trades off OOD detection and OOD generalization~\cite{harun2025ncoodg}.
All these approaches operate in classifier feature space; our work is orthogonal in that we probe the \emph{score field} of a generative model through group equivariance.

\textbf{Diffusion models for OOD and anomaly detection.}
Diffusion models~\cite{ho2020ddpm, song2021sde, karras2022edm, yang2023diffusionSurvey} have been adapted to OOD via denoising- and reconstruction-based scores, trajectory energies and path discrepancies (DiffPath)~\cite{diffpath}, perturbation robustness (SPR)~\cite{shin2023spr}, and curvature- or covariance-based diagnostics (SCOPED, EigenScore)~\cite{scoped2025, eigenscore2025}.
These methods typically exploit score magnitude or local geometry along time and often require additional reverse steps or Jacobian--vector products/power iterations.
GEPC is complementary: it probes global group consistency of noised distributions via equivariance residuals, without computing Jacobian or modifying the backbone, and can be combined with curvature- or trajectory-based scores.

\textbf{Equivariance and score-based models.}
Equivariant score-based generative models combine group-equivariant parameterizations with score matching to model symmetric distributions efficiently~\cite{niu2020permutation, cohen2016gcnn, cohen2017steerable, chen2024equivariantscorebasedgenerativemodels}, while standard CNNs exhibit only approximate equivariance, degraded by subsampling and mitigated by anti-aliasing~\cite{zhang2019shiftinvariant, bruintjes2023equivariancefactors}.
GEPC takes a diagnostic angle: we treat group transports as a probe on a fixed pretrained diffusion model and interpret equivariance residuals as an empirical symmetry-breaking functional that separates ID and OOD.

\textbf{Conformal and equivariance-based OOD detection.}
iDECODe~\cite{kaur2022idecode} uses equivariance deviations as a conformal non-conformity score to obtain distribution-free calibrated decisions under random group actions.
GEPC is not a conformal method per se, but its multi-$t$ equivariance features can, in principle, be wrapped in a conformal layer when distribution-free guarantees are required.

\textbf{Equivariance as an inductive bias for OOD detection.}
Beyond score-based models, equivariance has also been used as an explicit inductive bias in discriminative unsupervised OOD detectors, e.g., via equivariant contrastive learning with soft cluster-aware semantics~\cite{11029244}.
This line is complementary to GEPC: we do not modify training or architecture, but instead use equivariance breaking of a pretrained diffusion score field as a test-time OOD signal.

% A more extended survey and additional connections to distance-based and collapse-based OOD detectors are provided in Appendix~\ref{app:extended-related}.

%%%%%%%%%%%%%%%%%%%%%%%%%%%%%%%%%%%%%%%%%%%%%%%%%%%%%%%%%%%%
\section{Background}
\label{sec-background}
%%%%%%%%%%%%%%%%%%%%%%%%%%%%%%%%%%%%%%%%%%%%%%%%%%%%%%%%%%%%

\subsection{Diffusion and score-based models}

We briefly review the foundations of DDPMs. \cite{ho2020ddpm,nichol2021improved}.
Given data $\mathbf{x}_0 \sim q(\mathbf{x}_0)$ in $\mathbb{R}^d$, we define a forward process that generates latent variables $\mathbf{x}_1$ through $\mathbf{x}_T$ by adding a white Gaussian noise of variance $\beta_t$ at time $t$ as follows:
\begin{equation}
  q(\mathbf{x}_t \mid \mathbf{x}_{t-1})
  = \mathcal{N}\bigl(\mathbf{x}_t ; \sqrt{\alpha_t}\mathbf{x}_{t-1}, \beta_t  \mathbf{I}\bigr), \, t=1,\dots,T.
\end{equation}
% with  $\alpha_t >0$ and  $\beta_t>0 $ such that $\alpha_t + \beta_t =1$.
where $\alpha_t = 1-\beta_t$ with $\beta_t\in(0,1)$.
Alternatively, we can formulate the marginal at time $t$ directly as:
\begin{equation}
  q(\mathbf{x}_t \mid \mathbf{x}_0)
  = \mathcal{N}\bigl(\mathbf{x}_t ; \sqrt{\bar\alpha_t}\,\mathbf{x}_0, (1-\bar\alpha_t)\, \mathbf{I}\bigr),
\end{equation}
with $\bar\alpha_t = \displaystyle\prod_{s=1}^t \alpha_s$. We will slightly abuse notation and refer to the forward marginal distribution of $\mathbf{x}_t$ either as $q_t(\mathbf{x}_t)$ or simply as $q_t$ when no ambiguity arises.

Equivalently, we can sample $\mathbf{x}_t$ via the reparameterization
\begin{equation}
  \mathbf{x}_t = \sqrt{\bar\alpha_t}\,\mathbf{x}_0 + \sqrt{1-\bar\alpha_t}\,\boldsymbol{\epsilon}\, ,  
\end{equation}
where $\boldsymbol{\epsilon} \sim\mathcal N(\mathbf{0},\mathbf{I})$ is independent of $\mathbf{x}_0$.

A generative model approximates the reverse conditionals $q(\mathbf{x}_{t-1}\mid \mathbf{x}_t)$ by Gaussian distributions
    $p_\theta(\mathbf{x}_{t-1} \mid \mathbf{x}_t)
      = \mathcal{N}\bigl(\mathbf{x}_{t-1}; \mu_\theta(\mathbf{x}_t,t), \tilde\beta_t \,\mathbf{I}\bigr)$, $p_\theta(\mathbf{x}_T)= \mathcal{N}(\mathbf{0},\mathbf{I})$
where $\tilde\beta_t$ is a fixed reverse variance schedule (e.g.\ the DDPM posterior variance). It is typically trained via the "simple" denoising objective:
\begin{equation}
  \mathcal{L}_{\text{simple}}(\theta)
    = \mathbb{E}_{t,\mathbf{x}_0,\boldsymbol{\epsilon}}\bigl[\|\boldsymbol{\epsilon} - \boldsymbol{\epsilon}_\theta(\mathbf{x}_t,t)\|_2^2\bigr]\, ,
\end{equation} 
where $t$ is sampled from a fixed distribution on $\{1,\dots,T\}$ (often uniform) and
$\boldsymbol{\epsilon}_\theta(\mathbf{x}_t,t)$ denotes the noise-prediction network (e.g.\ a U-Net) trained to predict the
forward noise $\boldsymbol{\epsilon}$ in $\mathbf{x}_t=\sqrt{\bar\alpha_t}\,\mathbf{x}_0+\sigma_t\boldsymbol{\epsilon}$ with $\sigma_t^2=1-\bar\alpha_t$.
Under the MSE objective, the pointwise optimum satisfies $\boldsymbol{\epsilon}_\theta(\mathbf{x}_t,t) = \E[\boldsymbol{\epsilon}\mid \mathbf{x}_t]$,
hence the associated score estimate is
\begin{equation}
\label{eq:score_estimation}
\mathbf{s}_\theta(\mathbf{x}_t,t) := -\sigma_t^{-1}\,\boldsymbol{\epsilon}_\theta(\mathbf{x}_t,t)\, .
\end{equation}
See Appendix~\ref{app:s_theta} for a detailed derivation.

\subsection{Scores and equivariance}
\label{sec-score_eq}
For any non-degenerate distribution $p$,
we denote by $\nabla_\mathbf{x}$ the (vector) gradient w.r.t.\ $\mathbf{x}\in\mathbb R^d$; thus $\nabla_\mathbf{x}\log p(\mathbf{x})\in\mathbb R^d$.
Let
\begin{equation}
\label{eq:score_exact}
  \mathbf{s}_p(\mathbf{x}) \;:=\; \nabla_\mathbf{x} \log p(\mathbf{x})\, ,
\end{equation}
denote the corresponding ideal score at time $t$. 
Thus, for any marginal $q_t(\mathbf{x}_t)$ of the forward diffusion process used to noise the data, the estimator $\mathbf{s}_\theta$ defined in equation~\eqref{eq:score_estimation} aims to predict the corresponding deterministic score $\mathbf{s}_{q_t}$, as explained in Appendix~\ref{app:s_theta}.

Let $\mathcal{G}$ be a finite group acting on $\mathbb{R}^d$ via orthogonal matrices:
for any $g \in \mathcal{G}$, we denote $\mathcal{P}_g$ the corresponding operator, with  $\mathcal{P}_g^\top \mathcal{P}_g = \mathcal{I}d$.

We say a distribution $p$ on $\mathbb{R}^d$ is \emph{$\mathcal{G}$-invariant} if
\begin{equation}
X\sim p \ \Longrightarrow\ \mathcal{P}_gX \stackrel{d}{=} X,\qquad \forall g\in \mathcal{G}\, .
\label{eq:ginv-law}
\end{equation}
% Equivalently, in pushforward notation this means $g_\#u=u$.
Since each $g$ is orthogonal, then \eqref{eq:ginv-law} is equivalent to $p(\mathcal{P}_g \mathbf{x})=p(\mathbf{x})$.
In that case, the score is $\mathcal{G}$-equivariant:
\begin{equation}
  \mathbf{s}_t(\mathcal{P}_g \mathbf{x}) \;=\; \mathcal{P}_g \mathbf{s}_t(\mathbf{x}),
  \qquad \forall \mathbf{x} ,\ \forall g\in \mathcal{G},
\end{equation}
as can be seen by differentiating $\log p(\mathcal{P}_g \mathbf{x})=\log p(\mathbf{x})$ and using $\mathcal{P}_g^\top=\mathcal{P}_g^{-1}$; see Appendix~\ref{app:inv_eq}.
%\ref{app:group-equivariance}.
If $q_0$ is approximately $\mathcal{G}$-invariant and the forward noise is isotropic, then each $q_t$ remains approximately $\mathcal{G}$-invariant, and the
corresponding scores remain approximately $\mathcal{G}$-equivariant.

In practice, approximate equivariance arises because denoising score matching fits $\mathbf{s}_\theta(\cdot,t)$ to the ideal score $\mathbf{s}_{q_t}(\cdot,t)=\nabla_{\mathbf{x}}\log q_t(\mathbf{x})$ in expectation over $\mathbf{x}\sim q_t$. Indeed, the learned score $\mathbf{s}_\theta$ appears to inherit the approximate $s_{q_t}$ equivariance in high-density regions, where the training loss is concentrated. Outside these regions, the objective provides little constraint, and equivariance may be violated arbitrarily. Architectural biases such as translation-equivariant convolutions and data augmentation can further promote such approximate symmetries. In cross-backbone settings, however, this learned equivariance is not expected to persist far from the source high-density region, which motivates the distance-to-manifold perspective in Section~\ref{sec-gepc}.

%%%%%%%%%%%%%%%%%%%%%%%%%%%%%%%%%%%%%%%%%%%%%%%%%%%%%%%%%%%%%%%%%%%%%%
\section{GEPC: Group-Equivariant Posterior Consistency}
\label{sec-gepc}
%%%%%%%%%%%%%%%%%%%%%%%%%%%%%%%%%%%%%%%%%%%%%%%%%%%%%%%%%%%%%%%%%%%%%%

For any vector field $f(\cdot,t)$ and any $g\in\mathcal{G}$ acting on $\mathbb{R}^d$ through an orthogonal matrix
$\mathcal{P}_g$ (so $\mathcal{P}_g^{-1}=\mathcal{P}_g^\top$), define the equivariance residual operator
\begin{equation}
\Delta_g f(\mathbf{x},t)\;\coloneqq\;\mathcal{P}_g^{-1} f(\mathcal{P}_g\mathbf{x},t)-f(\mathbf{x},t)\, .
\label{eq:delta-def}
\end{equation}

\begin{definition}[GEPC]
\label{def-gepc}
Let $\mathbf{s}_\theta(\cdot,t)$ denote the score field of a pretrained diffusion backbone.
Given an input $\mathbf{x}_0$, sample $\mathbf{x}_t\sim q(\mathbf{x}_t\mid \mathbf{x}_0)$ from the forward noising process.
Define the equivariance residual
\begin{equation}
R_t(\mathbf{x}_t,g)\;\coloneqq\;\|\Delta_g \mathbf{s}_\theta(\mathbf{x}_t,t)\|_2^2\, ,
\label{eq-gepc-rt}
\end{equation}
and the GEPC score
\begin{equation}
\mathrm{GEPC}(\mathbf{x}_0)\coloneqq\sum_{t\in\mathcal T} w_t
\mathbb{E}_{\mathbf{x}_t\sim q(\cdot\mid \mathbf{x}_0),\;g\sim \nu_\mathcal G}\!\big[R_t(\mathbf{x}_t,g)\big]\, ,
\label{eq-gepc}
\end{equation}
where $\nu_\mathcal G$ is uniform over the finite set $\mathcal G$, and $w_t\ge 0$ with $\sum_{t\in\mathcal T}w_t=1$.
% (For orthogonal actions, $\Delta_g \mathbf{s}_\theta(\mathbf{x},t)=\mathbf{P}_g^\top \mathbf{s}_\theta(\mathbf{P}_g\mathbf{x},t)-\mathbf{s}_\theta(\mathbf{x},t)$.)
\end{definition}

\paragraph{Why equivariance, not $\|\mathbf{s}_\theta\|$? (Gaussian mean-shift).}
Let $p=\mathcal N(\boldsymbol{\mu},\sigma^2\, \mathbf{I}_d)$, whose score is
$\mathbf{s}(\mathbf{x})=-(\mathbf{x}-\boldsymbol{\mu})/\sigma^2$.
Then $\mathbb{E}_{\mathbf{x}\sim p}\big[\|\mathbf{s}(\mathbf{x})\|_2^2\big]=d/\sigma^2$ ($d$ being the dimension of $\mathbf x$ ), independent of $\boldsymbol{\mu}$.
In contrast, the equivariance residual detects mean shifts. For $\mathcal{G}=\{\mathcal{I}_d,-\mathcal{I}_d\}$ with uniform $\nu_{\mathcal G}$,
\begin{equation}
\mathbb{E}_{g\sim\nu_\mathcal{G}} \left[\|\Delta_g \mathbf{s}(\mathbf{x})\|_2^2\right]
=\frac{2}{\sigma^4}\|\boldsymbol{\mu}\|_2^2,
\label{eq-gauss-shift}
\end{equation}
which separates $\boldsymbol{\mu}=\mathbf{0}$ (centered / invariant) from $\boldsymbol{\mu}\neq \mathbf{0}$ (non-invariant),
even though $\|\mathbf{s}(\mathbf{x})\|$ does not. Further checks are in Appendix~\ref{app:gaussian-sanity}. This intuition from the Gaussian example is confirmed in Figure~\ref{fig:hists_svhn_c100}, where GEPC shows better separation than $\|\mathbf{s}_{\theta}(\mathbf{x})\|$ on real image datasets.

\paragraph{Decomposition.}
Fix a time $t$ and let $p_t$ be any test marginal density of $\mathbf{x}_t$.
Its ideal score is $\mathbf{s}_{p_t}(\mathbf{x})\coloneqq \nabla_{\mathbf{x}}\log p_t(\mathbf{x})$, and the score approximation error is
\begin{equation}
\mathbf{e}_{p_t}(\mathbf{x},t)\;\coloneqq\;\mathbf{s}_\theta(\mathbf{x},t)-\mathbf{s}_{p_t}(\mathbf{x})\, .
\label{eq:err-def}
\end{equation}
Define the equivariance-breaking functional
\begin{equation}
\mathcal{B}^{(\mathcal{G})}(p_t)
\;\coloneqq\;
\mathbb{E}_{\mathbf{x}\sim p_t,\;g\sim\nu_\mathcal{G}}
\big[\|\Delta_g \mathbf{s}_{p_t}(\mathbf{x},t)\|_2^2\big]\, .
\label{eq:symbreak-functional}
\end{equation}
If $p_t$ is $\mathcal{G}$-invariant distribution, then $\mathcal{B}^{(\mathcal{G})}(p_t)=0$ since invariance is equivalent to score equivariance
(Appendix~\ref{app:inv_eq}).

\paragraph{Expected residual bounds (ID vs OOD).}
Let $q_t$ denote the time-$t$ marginal distribution induced by the ID training distribution $q(\mathbf{x}_0)$, and let $p_t$ denote the time-$t$ marginal distribution induced by any test distribution.

\begin{proposition}[Expected GEPC residual bounds]
\label{prop-bounds}
For any marginal $p_t$, define
\[
\Delta_E(p_t,t)\;\coloneqq\;
\E_{\mathbf{x}\sim p_t,\;g\sim\nu_\mathcal{G}}
\Big[\|\mathbf{e}_{p_t}(\mathcal{P}_g\mathbf{x},t)-\mathbf{e}_{p_t}(\mathbf{x},t)\|_2^2\Big].
\]
With the shorthand $\E_{p_t,g}[\cdot]\coloneqq \E_{\mathbf{x}\sim p_t,\;g\sim\nu_\mathcal{G}}[\cdot]$, we have
\begin{align}
\E_{p_t,g}\big[R_t(\mathbf{x},g)\big] &\le  2\,\mathcal{B}^{(\mathcal{G})}(p_t)
+4\,\E_{\mathbf{x}\sim p_t}\left[\|\mathbf{e}_{p_t}(\mathbf{x},t)\|_2^2 \right]\nonumber \\
&+4\,\E_{p_t,g}\left[\|\mathbf{e}_{p_t}(\mathcal{P}_g\mathbf{x},t)\|_2^2\right]\, :=  u_b(p_t) \,,\nonumber \\
\E_{p_t,g}\big[R_t(\mathbf{x},g)\big]  &\ge  \mathcal{B}^{(\mathcal{G})}(p_t)+\Delta_E(p_t,t)\\
&- 2\sqrt{\mathcal{B}^{(\mathcal{G})}(p_t)\,\Delta_E(p_t,t)} := l_b(p_t)\, . \nonumber 
\end{align}
\end{proposition}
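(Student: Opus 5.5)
The plan is to split the learned-score residual into an ideal-score part and an error part, and then apply elementary norm inequalities in expectation. Fix $t$ and $g$. Using $\mathbf{s}_\theta=\mathbf{s}_{p_t}+\mathbf{e}_{p_t}$ from \eqref{eq:err-def} and the linearity of $\Delta_g$ in \eqref{eq:delta-def}, we get
\[
\Delta_g\mathbf{s}_\theta(\mathbf{x},t)=\underbrace{\Delta_g\mathbf{s}_{p_t}(\mathbf{x},t)}_{\mathbf{a}_g(\mathbf{x})}+\underbrace{\mathcal{P}_g^{-1}\mathbf{e}_{p_t}(\mathcal{P}_g\mathbf{x},t)-\mathbf{e}_{p_t}(\mathbf{x},t)}_{\mathbf{b}_g(\mathbf{x})}.
\]
Then $R_t(\mathbf{x},g)=\|\mathbf{a}_g+\mathbf{b}_g\|_2^2$. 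By \eqref{eq:symbreak-functional}, $\E_{p_t,g}\|\mathbf{a}_g\|_2^2=\mathcal{B}^{(\mathcal{G})}(p_t)$.

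\emph{Upper bound.} Apply $\|\mathbf{u}+\mathbf{v}\|^2\le 2\|\mathbf{u}\|^2+2\|\mathbf{v}\|^2$ twice. The first application gives $R_t\le 2\|\mathbf{a}_g\|^2+2\|\mathbf{b}_g\|^2$. The second gives $\|\mathbf{b}_g\|^2\le 2\|\mathcal{P}_g^{-1}\mathbf{e}_{p_t}(\mathcal{P}_g\mathbf{x},t)\|^2+2\|\mathbf{e}_{p_t}(\mathbf{x},t)\|^2$. Since $\mathcal{P}_g$ is orthogonal, $\|\mathcal{P}_g^{-1}\mathbf{v}\|=\|\mathbf{v}\|$, so the transport drops out of the norm. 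Taking $\E_{p_t,g}$ yields exactly $u_b(p_t)$, because the term $\|\mathbf{e}_{p_t}(\mathbf{x},t)\|^2$ does not depend on $g$ and its expectation reduces to $\E_{\mathbf{x}\sim p_t}$. Finiteness of the second moments of $\mathbf{s}_{p_t}$ and $\mathbf{e}_{p_t}$ under $p_t$ and under each pushforward $\mathcal{P}_g{}_\# p_t$ is assumed; otherwise the bound is trivial.

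\emph{Lower bound.} Expand $\E\|\mathbf{a}_g+\mathbf{b}_g\|^2=\E\|\mathbf{a}_g\|^2+\E\|\mathbf{b}_g\|^2+2\E\langle\mathbf{a}_g,\mathbf{b}_g\rangle$. Bound the cross term by Cauchy--Schwarz in $L^2(p_t\otimes\nu_{\mathcal G})$: $\E\langle\mathbf{a}_g,\mathbf{b}_g\rangle\ge-\sqrt{\E\|\mathbf{a}_g\|^2\,\E\|\mathbf{b}_g\|^2}$. This gives
\[
\E_{p_t,g}[R_t]\ \ge\ \mathcal{B}+D-2\sqrt{\mathcal{B}D}=\big(\sqrt{\mathcal{B}}-\sqrt{D}\big)^2,\qquad D:=\E_{p_t,g}\|\mathbf{b}_g\|^2.
\]
This is $l_b(p_t)$ with $\Delta_E$ identified as $D$.

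The step I expect to be the main obstacle is this identification. As written, $\Delta_E$ contains $\mathbf{e}_{p_t}(\mathcal{P}_g\mathbf{x},t)$ without the back-transport $\mathcal{P}_g^{-1}$. The natural quantity coming out of the decomposition is instead the transported error difference $\|\mathcal{P}_g^{-1}\mathbf{e}_{p_t}(\mathcal{P}_g\mathbf{x},t)-\mathbf{e}_{p_t}(\mathbf{x},t)\|^2$, that is, $\|\Delta_g\mathbf{e}_{p_t}\|^2$. The two coincide for $\mathcal{G}=\{\mathcal{I}_d\}$ and when $\mathcal{P}_g$ fixes the error vector. In general they differ, and orthogonality alone does not reconcile them, because $\mathcal{P}_g$ sits inside a difference rather than a single norm. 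I would therefore prove the statement with $\Delta_E(p_t,t)$ read as $\E_{p_t,g}\|\Delta_g\mathbf{e}_{p_t}(\mathbf{x},t)\|_2^2$, i.e.\ with the error understood in its transported frame, consistent with how $R_t$ is defined, and flag this convention explicitly. The upper bound is unaffected by this choice.
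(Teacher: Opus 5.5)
Your argument is the paper's own proof. It uses the same split $\Delta_g\mathbf{s}_\theta=\Delta_g\mathbf{s}_{p_t}+\Delta_g\mathbf{e}_{p_t}$, the same double use of $\|\mathbf{u}+\mathbf{v}\|_2^2\le 2\|\mathbf{u}\|_2^2+2\|\mathbf{v}\|_2^2$ together with orthogonality for $u_b(p_t)$, and the same Cauchy--Schwarz control of the cross term for the lower bound.

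The step you flagged is exactly where the paper's proof goes wrong. The paper asserts $\E_{p_t,g}\|\Delta_g\mathbf{e}_{p_t}(\mathbf{x},t)\|_2^2=\Delta_E(p_t,t)$ on the grounds that $\|\mathcal{P}_g^{-1}\mathbf{v}\|_2=\|\mathbf{v}\|_2$, which does not follow, and the lower bound with the untransported $\Delta_E$ is in fact false. For example, take $p_t=\mathcal{N}(\mathbf{0},\mathbf{I}_d)$, $\mathcal{G}=\{\mathbf{I}_d,-\mathbf{I}_d\}$ and $\mathbf{s}_\theta(\mathbf{x},t)=(c-1)\mathbf{x}$ with $c\neq 0$. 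The error $\mathbf{e}_{p_t}(\mathbf{x},t)=c\mathbf{x}$ is equivariant, so $\mathcal{B}^{(\mathcal{G})}(p_t)=0$ and $R_t\equiv 0$, yet $\Delta_E(p_t,t)=2c^2d>0$. Your transported reading $\Delta_E=\E_{p_t,g}\|\Delta_g\mathbf{e}_{p_t}\|_2^2$ is therefore necessary for the statement to hold, not merely a convention.
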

The proof is provided in Appendix~\ref{app:gepc-bounds}.

\paragraph{Backbone trained on ID.}
In the ideal detection regime, the ID expected residual is small while the OOD expected residual is large:
$u_b(q_t)\ll l_b(p_t)$ for relevant OOD marginals $p_t$.
When the backbone is well trained on $q_t$, the score error
$\mathbb{E}_{\mathbf{x}\sim q_t}\|\mathbf{e}_{q_t}(\mathbf{x},t)\|_2^2$ is small.
Moreover, $\mathbb{E}_{\mathbf{x}\sim q_t,\;g}\|\mathbf{e}_{q_t}(\mathcal{P}_g\mathbf{x},t)\|_2^2$ remains small if the backbone preserves score consistency under $\mathcal{G}$ transformations,
often observed for convolutional architectures on approximately invariant data (Section~\ref{sec-score_eq}).
Finally, when $q_t$ is approximately $\mathcal{G}$-invariant, $\mathcal{B}^{(\mathcal{G})}(q_t)$ is also small, so $u_b(q_t)$ is small.
For an OOD marginal $p_t$ that violates the assumed invariances, $\mathcal{B}^{(\mathcal{G})}(p_t)$ and/or the error terms increase,
pushing $l_b(p_t)$ upward, which formalizes how GEPC separates ID from OOD via non-invariance and score mismatch.

\paragraph{Cross-backbone case.}
In cross-backbone detection, the backbone is trained on a \emph{source} distribution $r(\mathbf{x}_0)$ while detection is performed on another ID distribution $q(\mathbf{x}_0)$ (and OODs).
Score accuracy is then expected only near high-density regions under the source marginal $r_t$.
We model this by an effective source manifold of $r_t$ , $\mathcal{M}_t$, and the ambient space $\mathcal{N}_t$ of a distribution $p_t$ such that $\mathcal{N}_t\supset\mathcal{M}_t$. We denote
the projection $\pi_t:\mathcal{N}_t\to\mathcal{M}_t$ commuting with the group action. Let define $d_t(\mathbf{x})\coloneqq \|\mathbf{x}-\pi_t(\mathbf{x})\|_2$ and assume $\mathbf{s}_\theta(\cdot,t)$ is $L_t$-Lipschitz on $\mathcal{N}_t$:
\begin{equation}
\|\mathbf{s}_\theta(\mathbf{x},t)-\mathbf{s}_\theta(\mathbf{y},t)\|_2
\le  L_t\|\mathbf{x}-\mathbf{y}\|_2,
\,\,\, \forall \mathbf{x},\mathbf{y}\in\mathcal N_t.
\label{eq-lip}
\end{equation}

\begin{proposition}[Cross-backbone pointwise bounds]
\label{prop-cross-bounds}
Assume \eqref{eq-lip} and $\pi_t(\mathcal{P}_g\mathbf{x})=\mathcal{P}_g\pi_t(\mathbf{x})$ for all $\mathbf{x}\in\mathcal{N}_t$, $g\in\mathcal{G}$.
Then, for any $\mathbf{x}\in\mathcal N_t$,
\begin{equation}
\mathbb{E}_{g\sim\nu_\mathcal{G}} \left[R_t(\mathbf{x},g)\right]
\ \le\
2\,\mathbb{E}_{g\sim\nu_\mathcal{G}} \left[R_t(\pi_t(\mathbf{x}),g)\right]
\;+\;
8L_t^2\,d_t(\mathbf{x})^2.
\label{eq:cross-upper}
\end{equation}
If moreover there exist $m_t>0$ and $d_{0,t}\ge 0$ such that for all $\mathbf x\in\mathcal N_t$ with $d_t(\mathbf{x})\ge d_{0,t}$,
\begin{equation}
\Big\langle \mathbf s_\theta(\mathbf{x},t)- \mathbf s_\theta(\pi_t(\mathbf{x}),t),
\frac{\mathbf{x}-\pi_t(\mathbf{x})}{\|\mathbf{x}-\pi_t(\mathbf{x})\|_2}\Big\rangle
 \le - m_t\,d_t(\mathbf{x}),
\label{eq-dir}
\end{equation}
then, writing $\rho_t(\mathbf{x})\coloneqq \sqrt{\mathbb{E}_{g\sim\nu_\mathcal{G}} \left[R_t(\pi_t(\mathbf{x}),g)\right]}$, we have
\begin{equation}
\mathbb{E}_{g\sim\nu_\mathcal{G}} \left[R_t(\mathbf{x},g)\right]
\ \ge\
\Big(\,(m_t-L_t)\,d_t(\mathbf{x}) - \rho_t(\mathbf{x})\,\Big)^2, %_+
\label{eq-cross-lower}
\end{equation}

%where $(\cdot)_+=\max(\cdot,0)$.
\end{proposition}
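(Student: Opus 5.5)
Both bounds come from comparing the residual at $\mathbf{x}$ with the residual at its projection $\mathbf{y}\coloneqq\pi_t(\mathbf{x})$. Since $\pi_t$ commutes with the group action, $\pi_t(\mathcal{P}_g\mathbf{x})=\mathcal{P}_g\mathbf{y}$. Because $\mathcal{P}_g$ is orthogonal, $\|\mathcal{P}_g\mathbf{x}-\mathcal{P}_g\mathbf{y}\|_2=d_t(\mathbf{x})$. We will also need $\mathcal{P}_g\mathbf{x}\in\mathcal{N}_t$, so that the Lipschitz bound \eqref{eq-lip} applies there; this is implicit in the hypothesis that $\pi_t(\mathcal{P}_g\mathbf{x})$ is defined, and I will state it explicitly.

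\textbf{Decomposition.} For each $g$, write
\[
\Delta_g\mathbf{s}_\theta(\mathbf{x},t)=\Delta_g\mathbf{s}_\theta(\mathbf{y},t)+\mathbf{a}_g-\mathbf{b},
\]
where
\[
\mathbf{a}_g\coloneqq\mathcal{P}_g^{-1}\bigl[\mathbf{s}_\theta(\mathcal{P}_g\mathbf{x},t)-\mathbf{s}_\theta(\mathcal{P}_g\mathbf{y},t)\bigr],\qquad \mathbf{b}\coloneqq\mathbf{s}_\theta(\mathbf{x},t)-\mathbf{s}_\theta(\mathbf{y},t).
\]
The Lipschitz bound and orthogonality give $\|\mathbf{a}_g\|_2\le L_t d_t(\mathbf{x})$ and $\|\mathbf{b}\|_2\le L_t d_t(\mathbf{x})$.

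\textbf{Upper bound.} Apply $\|\mathbf{u}+\mathbf{v}\|^2\le 2\|\mathbf{u}\|^2+2\|\mathbf{v}\|^2$ with $\mathbf{u}=\Delta_g\mathbf{s}_\theta(\mathbf{y},t)$ and $\mathbf{v}=\mathbf{a}_g-\mathbf{b}$. Then
\[
\|\mathbf{v}\|^2\le 2\|\mathbf{a}_g\|^2+2\|\mathbf{b}\|^2\le 4L_t^2d_t^2,
\]
so $R_t(\mathbf{x},g)\le 2R_t(\mathbf{y},g)+8L_t^2d_t(\mathbf{x})^2$. Averaging over $g\sim\nu_\mathcal{G}$ gives \eqref{eq:cross-upper}.

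\textbf{Lower bound.} Work in $L^2(\nu_\mathcal{G};\mathbb{R}^d)$, treating each residual as a function of $g$. The reverse triangle inequality gives
\[
\sqrt{\E_g R_t(\mathbf{x},g)}\ \ge\ \|\mathbf{a}_g-\mathbf{b}\|_{L^2(g)}-\rho_t(\mathbf{x}),
\]
so it suffices to show $\|\mathbf{a}_g-\mathbf{b}\|_{L^2(g)}\ge (m_t-L_t)d_t(\mathbf{x})$. Let $\mathbf{n}\coloneqq(\mathbf{x}-\mathbf{y})/d_t$. Project onto $\mathbf{n}$, noting that $\|\mathbf{w}\|\ge -\langle\mathbf{w},\mathbf{n}\rangle$ for any $\mathbf{w}$. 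Then
\[
\|\mathbf{a}_g-\mathbf{b}\|\ \ge\ \langle\mathbf{b},\mathbf{n}\rangle\cdot(-1)-\|\mathbf{a}_g\|\ \ge\ m_t d_t-L_t d_t,
\]
using \eqref{eq-dir} when $d_t\ge d_{0,t}$. Since this holds pointwise in $g$, it holds in $L^2(g)$.

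If the right-hand side $(m_t-L_t)d_t-\rho_t$ is nonnegative, squaring gives \eqref{eq-cross-lower}. Squaring is only legitimate when that quantity is nonnegative. I expect the main subtlety here: the stated bound without $(\cdot)_+$ can fail when the quantity is negative, e.g.\ for $d_t<d_{0,t}$ or when $m_t\le L_t$. I would therefore restrict the claim to the regime $d_t(\mathbf{x})\ge d_{0,t}$ and $(m_t-L_t)d_t\ge\rho_t$, noting it is trivial otherwise under a positive-part reading.

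A second caveat: \eqref{eq-dir} combined with Lipschitz forces $m_t\le L_t$, so the bound is informative only in degenerate cases. I would remark on this but still present the argument as stated.
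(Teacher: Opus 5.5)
Your upper bound is the paper's argument essentially verbatim: the same add-and-subtract around $\pi_t(\mathbf{x})$ and $\mathcal{P}_g\pi_t(\mathbf{x})$, and the same constant $8L_t^2$. The paper also uses $\mathcal{P}_g\mathbf{x}\in\mathcal{N}_t$ silently, exactly where you flag it.

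For the lower bound your route differs, and it is the sounder one. The paper works pointwise in $g$. From $\|\Delta_g\mathbf{s}_\theta(\mathbf{x},t)\|_2\ge a-\|\Delta_g\mathbf{s}_\theta(\pi_t(\mathbf{x}),t)\|_2$, with $a=(m_t-L_t)d_t(\mathbf{x})$, it squares without a positive part. It then applies Jensen to $\varphi(y)=(a-y)^2$ and replaces $\mathbb{E}_g\|\Delta_g\mathbf{s}_\theta(\pi_t(\mathbf{x}),t)\|_2$ by the larger $\rho_t(\mathbf{x})$. The squaring step needs $a\ge\|\Delta_g\mathbf{s}_\theta(\pi_t(\mathbf{x}),t)\|_2$ for every $g$. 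The replacement step needs $\varphi$ to be decreasing up to $\rho_t(\mathbf{x})$, i.e.\ $a\ge\rho_t(\mathbf{x})$. Neither condition holds here, as the next paragraph shows.

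Your reverse triangle inequality in $L^2(\nu_{\mathcal{G}};\mathbb{R}^d)$ produces $\rho_t(\mathbf{x})$ directly and needs no Jensen step. It also isolates the single sign condition required before squaring. So what you actually establish, $\mathbb{E}_g R_t(\mathbf{x},g)\ge\big(\max\{(m_t-L_t)d_t(\mathbf{x})-\rho_t(\mathbf{x}),0\}\big)^2$, is the correct positive-part statement.

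You should, however, merge your two caveats. Together they are decisive, not a matter of ``degenerate cases''. Take any point where the lower bound is asserted with $d_t(\mathbf{x})>0$. There \eqref{eq-dir} and the Lipschitz bound give
$m_t d_t(\mathbf{x})\le\|\mathbf{s}_\theta(\mathbf{x},t)-\mathbf{s}_\theta(\pi_t(\mathbf{x}),t)\|_2\le L_t d_t(\mathbf{x})$,
so $(m_t-L_t)d_t(\mathbf{x})-\rho_t(\mathbf{x})\le 0$ always. Consequently the positive-part bound is identically $0$, and the literal bound is false.

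A concrete counterexample: take $\mathbf{s}_\theta(\mathbf{x},t)=-\mathbf{x}$, $\mathcal{N}_t=\mathbb{R}^d$, $\mathcal{M}_t=\{\mathbf{0}\}$, and $\pi_t\equiv\mathbf{0}$, which commutes with every $\mathcal{P}_g$. Set $L_t=1$, $m_t=1/2$, $d_{0,t}=1$.
\begin{itemize}
\item The directional inner product equals $-d_t(\mathbf{x})$, so \eqref{eq-dir} holds.
\item $R_t\equiv 0$, hence $\rho_t\equiv 0$.
\item Yet the claimed right-hand side is $d_t(\mathbf{x})^2/4\ge 1/4>0$.
\end{itemize}

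Rather than ``restricting the claim'', state plainly that the second half of the proposition cannot be proved as written, and that its correct positive-part form is vacuous.
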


The proof is provided in Appendix~\ref{app:cross-backbone-proof}, and the derivation of regularity hypothesis are discussed in Appendix~\ref{app:dir}.

\paragraph{Implications for detection.}
If the backbone is accurate and approximately equivariant on the high-density region of the source distribution, we may assume that
$\mathbb{E}_{g}\left[R_t(\mathbf{z},g)\right]$ is small for $\mathbf{z}\in\mathcal{M}_t$.
In this regime, the residual terms in Proposition~\ref{prop-cross-bounds} become negligible and the bounds are dominated by the distance-to-manifold terms (quadratic in $d_t(\mathbf{x})$), implying that the GEPC score increases as samples move away from the source manifold.

Taking expectations over $\mathbf{x}\sim p_t$ yields a comparison between in-distribution and out-of-distribution residuals: in-distribution samples satisfy an upper bound of order
$8L_t^2\,\mathbb{E}_{q_t}[d_t(\mathbf{x})^2]$,
whereas out-of-distribution samples exceed
$(m_t-L_t)^2\,\mathbb{E}_{p_t}[d_t(\mathbf{x})^2]$.
This separation suggests good detection performance when
$ \frac{\mathbb{E}_{q_t}[d_t(\mathbf{x})^2]}{\mathbb{E}_{p_t}[d_t(\mathbf{x})^2]}
\ll
\left(\frac{m_t}{L_t} - 1 \right)^2.
$

%%%%%%%%%%%%%%%%%%%%%%%%%%%%%%%%%%%%%%%%%%%%%%%%%%%%%%%%%%%%
\section{Practical GEPC for DDPM}
%%%%%%%%%%%%%%%%%%%%%%%%%%%%%%%%%%%%%%%%%%%%%%%%%%%%%%%%%%%%
We now describe how GEPC is computed in practice for discrete-time DDPM or improved-diffusion backbones.

\subsection{Per-sample GEPC, pooling, and normalisation}
Let $\mathcal{G}$ be a set of invertible image transformations with known inverses. Throughout, unless stated otherwise,
$\mathcal{G}=\{\mathrm{id},\mathrm{flip}_x,\mathrm{flip}_y,\mathrm{rot}_{90},\mathrm{rot}_{180},\mathrm{shift}_x,\mathrm{shift}_y\}$
with 1-pixel circular shifts, so $|\mathcal{G}|=7$ on $32\times32$ square images. %We denote by $\mathcal{P}_g$ the action of $g\in\mathcal{G}$ and by $\mathcal{P}_g^{-1}$ its inverse.

\paragraph{Pooling convention.}
Given a field $A\in\mathbb{R}^{C\times h\times w}$, $\mathrm{pool}(A)$ denotes a standard spatial pooling that first averages across channels and then aggregates over spatial locations by either mean-pooling or top-$k$ pooling (top-$k$ averages the $k$ largest spatial responses). With a slight abuse of notation, $\mathrm{pool}(\|\mathbf{.}\|_2^2)$ denotes pooling applied to the pointwise squared $\ell_2$-norm over channels.

Given an input $\mathbf{x}_0$ and timestep $t$, we sample $\mathbf{x}_t$ via
$\mathbf{x}_t=\sqrt{\bar\alpha_t}\,\mathbf{x}_0+\sqrt{1-\bar\alpha_t}\,\boldsymbol{\epsilon}$,
 $\boldsymbol{\epsilon}\sim\mathcal N(\mathbf{0},\mathbf{I}).$
Define the transported score residual field
\begin{equation}
\mathbf r_t(\mathbf{x}_t,g)\coloneqq\mathcal{P}_g^{-1}\,\mathbf{s}_\theta(\mathcal{P}_g\mathbf{x}_t,t)-\mathbf{s}_\theta(\mathbf{x}_t,t)\in\mathbb{R}^{C\times h\times w}.
\end{equation}
We also define the pooled score-energy normaliser
\begin{equation}
b_t(\mathbf{x}_0)\;\coloneqq\;\mathrm{pool}\big(\|\mathbf{s}_\theta(\mathbf{x}_t,t)\|_2^2\big).
\end{equation}
Our default per-timestep GEPC scalar (denoted GEPC$_s$ in the code) is the base-normalised residual energy
\begin{equation}
z^{(s)}_t(\mathbf{x}_0)\coloneqq\mathbb{E}_{g\sim\mathrm{Unif}(\mathcal{G})}\left[b_t^{-1}(\mathbf{x}_0)\mathrm{pool}\big(\|\mathbf r_t(\mathbf{x}_t,g)\|_2^2\big)\right].
\label{eq-prac-gepc-t}
\end{equation}
We optionally average \eqref{eq-prac-gepc-t} over $m$ Monte Carlo noise draws $\boldsymbol{\epsilon}$ (\texttt{mc\_samples}). Using the same transported scores $\{\mathcal{P}_g^{-1}\mathbf{s}_\theta(\mathcal{P}_g \mathbf{x}_t,t)\}_{g\in\mathcal{G}}$, we also compute alternative GEPC features, including cosine consistency, pairwise dispersion, $\mathbf{x}_0$-consistency, and cycle consistency; see
Appendix~\ref{app:gepc-features}. All quadratic (L2-type) features are reported in base-normalised form (with a feature-specific normaliser when appropriate), while the cosine feature is scale-invariant and therefore left unnormalised.

Finally, we aggregate across a small set of selected timesteps $\mathcal{T}$ using \texttt{agg\_t} (default: weighted mean)
\begin{equation}
\widehat{\mathrm{GEPC}}(\mathbf{x}_0)\;\coloneqq\;\sum_{t\in\mathcal{T}} w_t\, z^{(s)}_t(\mathbf{x}_0),\qquad \sum_{t\in\mathcal{T}}w_t=1.
\label{eq-prac-gepc}
\end{equation}

\subsection{ID-only timestep selection and calibration}
To avoid OOD-labelled tuning, we select timesteps, per-timestep weights, and calibration using ID samples only. We first form a candidate set $\mathcal{T}_{\mathrm{cand}}$ by mapping a fixed list of target schedule levels \texttt{snr\_levels} to discrete indices (for DDPM schedules this is implemented by nearest-neighbour matching on $\sqrt{\bar\alpha_t}$).

On ID-train, for each $t\in\mathcal{T}_{\mathrm{cand}}$ we compute a stability score via the coefficient of variation,  
$\mathrm{CV}(t)=\frac{\mathrm{std}(u_t(\mathbf{x}))}{|\mathrm{mean}(u_t(\mathbf{x}))|},$

where $u_t(\mathbf{x})$ is a base GEPC statistic at timestep $t$ (default: $z^{(s)}_t$). We keep the $K$ most stable steps (lowest CV), yielding $\mathcal{T}$ with $|\mathcal{T}|=K$ (\texttt{keep\_k}). Optionally, we set weights $w_t\propto 1/\mathrm{CV}(t)$ and normalise them (\texttt{weight\_t=inv\_cv}); otherwise $w_t$ is uniform (\texttt{weight\_t=none}).

\paragraph{Calibration modes (ID-only).}
Let $z_{t,f}(\mathbf{x})$ denote the enabled feature scalars (each OOD-high by construction). We support three ID-only calibration modes:
(i) \textbf{KDE} (\texttt{density\_mode=kde}): fit a 1D KDE $p_{t,f}$ per $(t,f)$ (Silverman rule-of-thumb with robust IQR bandwidth) and aggregate log-densities;
(ii) \textbf{z-score} (\texttt{density\_mode=zscore}): fit $(\mu_{t,f},\sigma_{t,f})$ and use the Gaussian log-score $-\tfrac12((z-\mu)/\sigma)^2$;
(iii) \textbf{raw} (\texttt{density\_mode=none}): no density model is fit and we directly aggregate raw OOD-high feature values.
Alternatively, \textbf{vector MVN} (\texttt{vector\_mode=mvn}) fits a single Gaussian/Mahalanobis model on the concatenated multi-$(t,f)$ feature vector.
For all density-based modes, the final anomaly score is the negative ID score (OOD-high), matching the implementation.

\subsection{Metrics and compute (F+J)}
We report AUROC and forward-equivalent compute as  $F+J$, where $F$ is one score-network forward evaluation and $J$ is one Jacobian--vector product, each counted as a forward-equivalent operation. 

GEPC is fully test-time and uses only score-network evaluations.
For GEPC, at each timestep $t$, we compute one reference score $\mathbf{s}_\theta(\mathbf{x}_t,t)$ and one batched evaluation over $\{\mathcal{P}_g\mathbf{x}_t\}_{g\in\mathcal{G}}$, hence $F=(1+|\mathcal{G}|)\,|\mathcal{T}|\,m$ and $J=0$. All GEPC feature variants reuse the same score evaluations at each $(t,g)$, so enabling additional features or feature fusion does not change $F+J$.

For methods that require a reverse trajectory of $T$ steps, we count $F=T$ score evaluations (and the corresponding $J$ terms when applicable).

%%%%%%%%%%%%%%%%%%%%%%%%%%%%%%%%%%%%%%%%%%%%%%%%%%%%%%%%%%%%
\section{Experiments}
\label{sec:experiments}
%%%%%%%%%%%%%%%%%%%%%%%%%%%%%%%%%%%%%%%%%%%%%%%%%%%%%%%%%%%%

We evaluate GEPC as a diffusion-based OOD detector under two regimes: (i) CIFAR-scale benchmarks at $32\times32$, using a single CelebA-trained improved-diffusion backbone; and (ii) a cross-domain, high-resolution setting, where a $256\times256$ LSUN-trained backbone is evaluated on radar SAR imagery, with OOD samples corresponding to targets or anomalies embedded in clutter.
We address two questions: (i) whether GEPC is competitive with state-of-the-art diffusion-based OOD scores under a strictly comparable backbone; and (ii) whether GEPC provides robust and interpretable OOD signals when a high-resolution LSUN-trained backbone is applied cross-domain to SAR imagery.

\subsection{Setup}
\label{sec:setup}

\paragraph{Backbones and evaluation regime.}
Unless stated otherwise, all diffusion-based scores are computed from a \emph{single} unconditional improved-diffusion backbone trained on CelebA at $32\times32$ using the public \texttt{improved-diffusion} codebase~\cite{ho2020ddpm}.
This checkpoint is never fine-tuned; methods differ only by their test-time statistic.
For high-resolution cross-domain evaluation, we further probe an unconditional LSUN-$256$ improved-diffusion backbone on $256\times256$ SAR patches.

\vspace{-2em}
\paragraph{Baselines.}

We compare GEPC against the two classes of OOD detection methods. First,  we consider \textbf{ID-trained} discriminative and generative baselines, including energy-based models such as IGEBM~\cite{igebm}, VAEBM~\cite{vaebm}, and Improved Contrastive Divergence (CD) ~\cite{improvedcd}, as well as Input Complexity (IC)~\cite{inputcomp}, Density of States (DOS)~\cite{dos}, Watanabe--Akaike Information Criterion (WAIC)~\cite{waic}, the Typicality Test (TT)~\cite{tt}, and the Likelihood Ratio (LR)~\cite{likelihood}.
Second, we compare to \textbf{training-free diffusion-based} scores computed from the \emph{same} CelebA-$32$ backbone, including NLL and DiffPath~\cite{diffpath}, MSMA~\cite{msma}, DDPM-OOD~\cite{graham2023ddpmood}, LMD~\cite{liu2023lmd}, and SCOPED~\cite{scoped2025}.

\subsection{CIFAR-10 / SVHN / CelebA at $32\times32$}
\label{sec:celeba-main}

We evaluate GEPC on the low-resolution regime with three ID datasets: CIFAR-10 (C10), SVHN, and CelebA (downsampled to $32\times32$).
To enable direct comparison with recent diffusion-OOD benchmarks under the same backbone, we report the 9 canonical ID/OOD pairs used in SCOPED \cite{scoped2025} and DiffPath \cite{diffpath}.

Table~\ref{tab:main_32} reports AUROC for all 9 ID/OOD pairs.
The upper block groups \emph{ID-trained} likelihood/energy-based model (EBM)-style baselines from prior work (trained per ID dataset).
The lower block groups \emph{training free} methods that operate on a single pretrained CelebA improved-diffusion backbone and differ only by their test-time scoring rule, including DiffPath, SCOPED, and our GEPC.

\begin{table*}[t]
  \centering
  \caption{AUROC for in-distribution vs.\ out-of-distribution tasks at $32\times32$ (9 standard ID/OOD pairs at 32×32).
  Higher is better. We report compute as $F+J$ (forward passes + JVPs).
  Baseline numbers for non-GEPC methods follow prior diffusion-OOD benchmarks under a CelebA backbone.}
  \label{tab:main_32}
  \small
  \setlength{\tabcolsep}{4pt}
  \begin{tabular}{lccc|ccc|ccc|cc}
    \toprule
    & \multicolumn{3}{c|}{CIFAR-10 (ID)} & \multicolumn{3}{c|}{SVHN (ID)} & \multicolumn{3}{c|}{CelebA (ID)} & \multirow{2}{*}{Avg.} & \multirow{2}{*}{$F+J$} \\
    \cmidrule(lr){2-4}\cmidrule(lr){5-7}\cmidrule(lr){8-10}
    Method
    & SVHN & CelebA & C100
    & C10 & CelebA & C100
    & C10 & SVHN & C100
    & & \\
    \midrule
    \multicolumn{12}{c}{\emph{ID-trained baselines (trained per ID)}} \\
    IC                & 0.950 & 0.863 & \second{0.736} & --    & --    & --    & --    & --    & --    & --    & -- \\
    IGEBM             & 0.630 & 0.700 & 0.500 & --    & --    & --    & --    & --    & --    & --    & -- \\
    VAEBM             & 0.830 & 0.770 & 0.620 & --    & --    & --    & --    & --    & --    & --    & -- \\
    Improved CD       & 0.910 & --    & \best{0.830} & --    & --    & --    & --    & --    & --    & --    & -- \\
    DoS               & 0.955 & 0.995 & 0.571 & 0.962 & \best{1.00}  & 0.965 & 0.949 & \second{0.997} & 0.956 & \best{0.928} & -- \\
    WAIC$^{\dagger}$        & 0.143 & 0.928 & 0.532 & 0.802 & 0.991 & 0.831 & 0.507 & 0.139 & 0.535 & 0.601 & -- \\
    TT$^{\dagger}$          & 0.870 & 0.848 & 0.548 & 0.970 & \best{1.00}  & 0.965 & 0.634 & 0.982 & 0.671 & 0.832 & -- \\
    LR$^{\dagger}$          & 0.064 & 0.914 & 0.520 & 0.819 & 0.912 & 0.779 & 0.323 & 0.028 & 0.357 & 0.524 & -- \\
    \midrule
    \multicolumn{12}{c}{\emph{Training-free diffusion methods (single CelebA backbone)}} \\
    NLL               & 0.091 & 0.574 & 0.521 & \best{0.990} & \second{0.999} & \best{0.992} & 0.814 & 0.105 & 0.786 & 0.652 & $1000F+0J$ \\
    IC (diffusion)    & 0.921 & 0.516 & 0.519 & 0.080 & 0.028 & 0.100 & 0.485 & 0.972 & 0.510 & 0.459 & $1000F+0J$ \\
    MSMA              & \second{0.957} & \best{1.00}  & 0.615 & \second{0.976} & 0.995 & \second{0.980} & 0.910 & 0.996 & 0.927 & \best{0.928} & $10F+0J$ \\
    DDPM-OOD          & 0.390 & 0.659 & 0.536 & 0.951 & 0.986 & 0.945 & 0.795 & 0.636 & 0.778 & 0.742 & $350F+0J$ \\
    LMD               & \best{0.992} & 0.557 & 0.604 & 0.919 & 0.890 & 0.881 & 0.989 & \best{1.00}  & 0.979 & 0.868 & $104F+0J$ \\
    DiffPath          & 0.910 & 0.897 & 0.590 & 0.939 & 0.979 & 0.953 & \second{0.998} & \best{1.00}  & \second{0.998} & \second{0.918} & $10F+0J$ \\
    SCOPED            & 0.814 & 0.940 & 0.477 & 0.971 & 0.996 & 0.959 & 0.925 & 0.994 & 0.962 & 0.892 & $2F+2J$ \\
    \textbf{GEPC (ours)} & 0.842 & \second{0.999} & 0.554 & 0.880 & \best{1.00} & 0.897 & \best{1.00} & \best{1.00} & \best{1.00} & 0.908 &
    $16F+0J$ \\
    \bottomrule
  \end{tabular}\\
  {\footnotesize $\dagger$ Results obtained from~\cite{dos}.}
\end{table*}
% $(1+|\mathcal{G}|)\,|\mathcal{T}|\,m\,F+0J$

\subsection{Radar SAR OOD detection and localisation}
\label{sec:sar}

We evaluate GEPC for ship/wake localisation on high-resolution SAR imagery (HRSID).
We construct an OOD task where \emph{sea-clutter-only} patches are in-distribution (ID) and patches containing at least one ship or wake are out-of-distribution (OOD).
We apply a pretrained LSUN-$256$ diffusion backbone \emph{as-is} to $256\times256$ SAR patches (no SAR fine-tuning) and compute GEPC patch-wise.
GEPC residual maps remain low on homogeneous sea clutter while concentrating on ships and wakes, yielding interpretable symmetry-breaking localisation (Figure~\ref{fig:sar_gepc_main}).
Additional datasets (SSDD), quantitative results, and further qualitative examples are provided in Appendix~\ref{app:radar-details} and Figure \ref{fig:sar_qual_appendix}.

\begin{figure*}[t]
  \centering
  \begin{subfigure}[t]{0.22\linewidth}
    \centering
    \includegraphics[width=\linewidth]{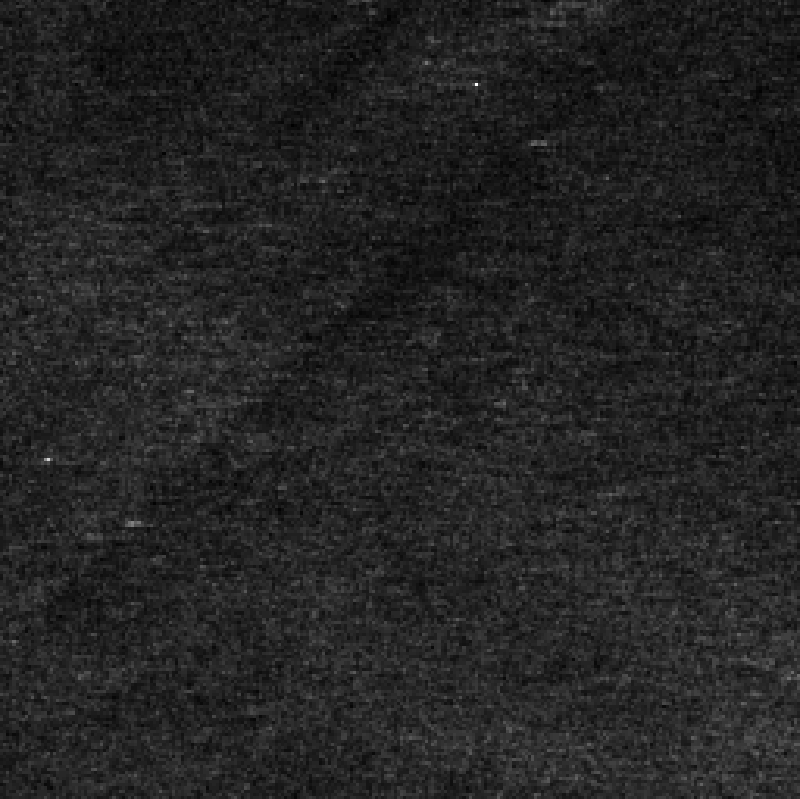}
    \caption{ID: log-mag}
  \end{subfigure}
  \hfill
  \begin{subfigure}[t]{0.26\linewidth}
    \centering
    \includegraphics[width=\linewidth]{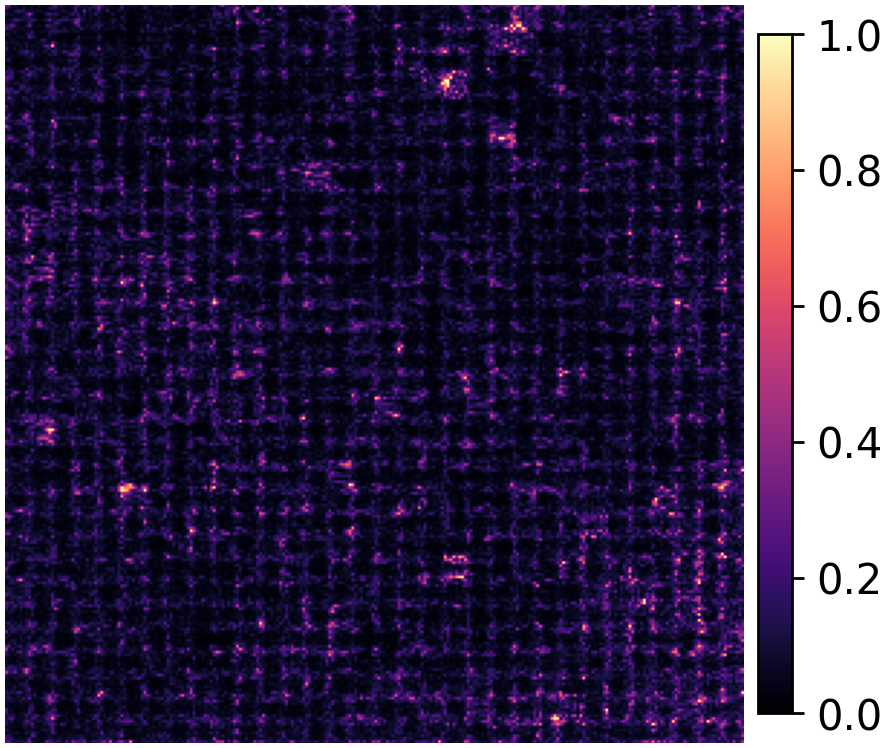}
    \caption{ID: GEPC}
  \end{subfigure}
  \hfill
  \begin{subfigure}[t]{0.22\linewidth}
    \centering
    \includegraphics[width=\linewidth]{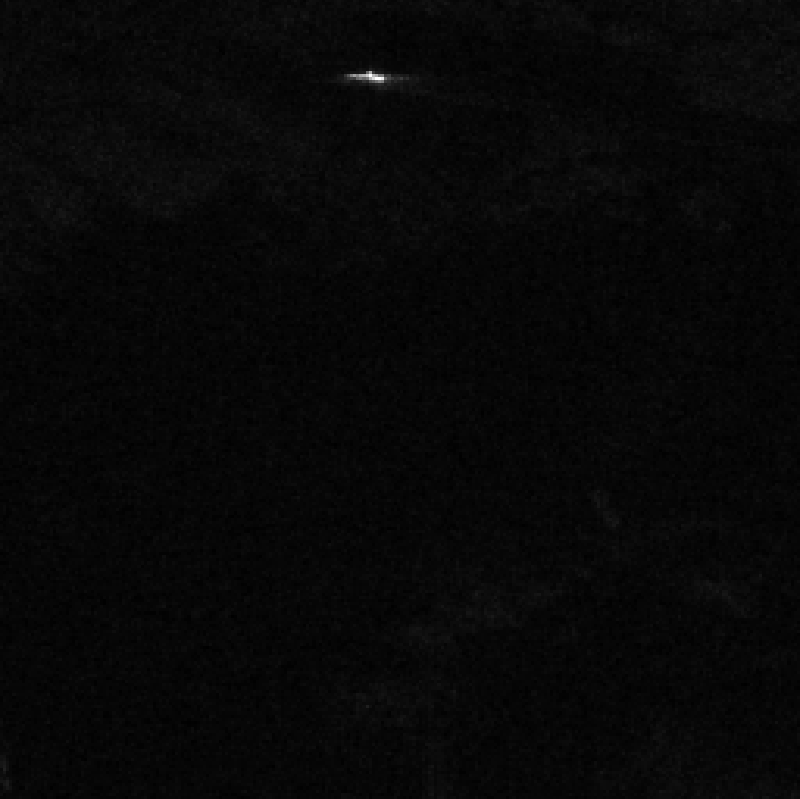}
    \caption{OOD: log-mag}
  \end{subfigure}
  \hfill
  \begin{subfigure}[t]{0.26\linewidth}
    \centering
    \includegraphics[width=\linewidth]{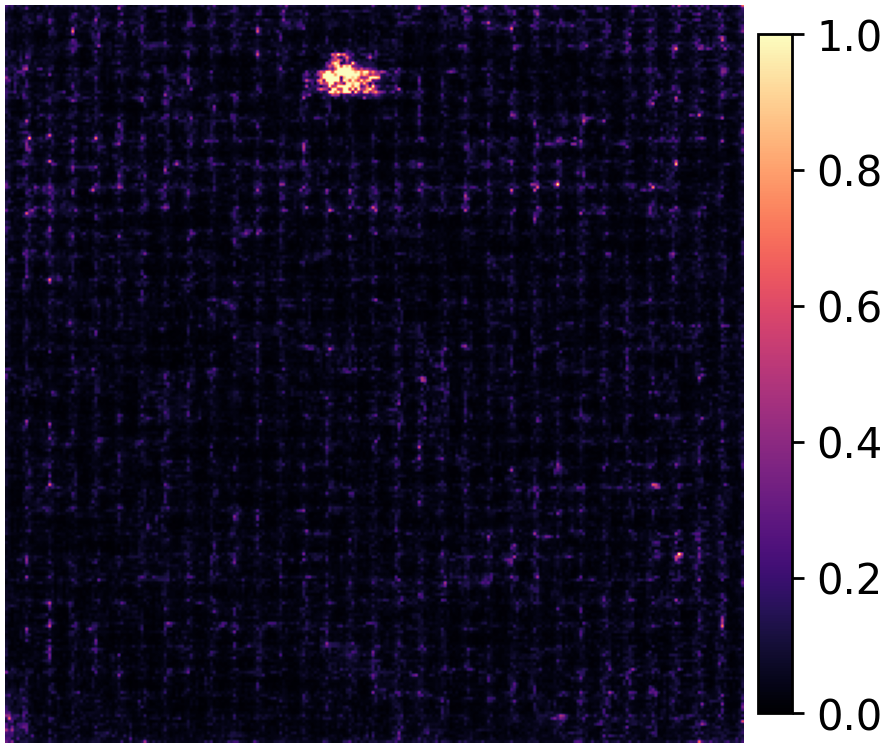}
    \caption{OOD: GEPC}
  \end{subfigure}

  \caption{
  GEPC on HRSID SAR imagery (LSUN-$256$ backbone, no SAR fine-tuning).
  We visualise the pre-pooling residual magnitude map using a \emph{global} normalisation (shared scale) to enable direct comparison between ID and OOD (Appendix~\ref{app:radar-details}, Figure~\ref{fig:sar_qual_appendix}).
  }
  \label{fig:sar_gepc_main}
\end{figure*}

\subsection{Ablations and runtime}
\label{sec:ablations}

We conduct ablations to assess robustness, sensitivity to design choices, and computational cost.
Detailed ablation tables across the 9 ID/OOD pairs are reported in Appendix~\ref{app:timestep-ablation}, along with representative plots and score histograms.

\textbf{Group elements.}
Using our default transport set (identity, flips, rotations, and 1-pixel circular shifts), we report a diagnostic AUROC obtained by isolating each transform contribution on the \emph{raw transported-gap component} (no KDE/z-score calibration), and compare it to the same component averaged over $\mathcal{G}$.
Across pairs, performance is not dominated by a single element, supporting that GEPC captures a stable symmetry-breaking effect rather than an isolated artifact (Appendix~\ref{app:timestep-ablation}, Table~\ref{tab:per_g_allpairs} and Figure~\ref{fig:ablation_c10_svhn}).

\textbf{Timestep selection and weighting.}
Single-timestep AUROC-vs-$t$ curves are shown for the \emph{raw transported-gap component} to localise where symmetry-breaking arises.
Our ID-only coefficient-of-variation (CV) rule then selects a small retained set $\mathcal{T}$ (fixed $K$ across datasets for comparable compute) and achieves performance close to the best single-timestep choices without any OOD labels (Appendix~\ref{app:timestep-ablation}, Table~\ref{tab:tselect_sweep_allpairs} and Figure~\ref{fig:ablation_c10_svhn}).

\textbf{Calibration and feature fusion.}
We compare 1D KDE calibration to z-score normalisation, the raw uncalibrated score, and a Gaussian/Mahalanobis model on multi-$t$ GEPC feature vectors.
We also ablate feature fusion via mean (Table~\ref{tab:feat_allpairs}).

\textbf{Runtime and NFEs.}
GEPC requires no backpropagation, Jacobian-vector products, nor fine-tuning.
For each retained timestep $t$, we evaluate one reference score field $\mathbf{s}_\theta(\mathbf{x}_t,t)$ and one batched evaluation over transported inputs $\{\mathcal{P}_g\mathbf{x}_t\}_{g\in\mathcal{G}}$, i.e.\ $(1+|\mathcal{G}|)$ forward passes per timestep.
With $m$ Monte-Carlo noise samples and $K=|\mathcal{T}|$ retained timesteps, the total cost is
$\mathrm{NFE}=(1+|\mathcal{G}|)Km$ forward passes per input, parallelisable over $g$ (and, memory permitting, over $t$).
We report the accuracy-compute trade-off via a sweep over $K$ with the implied NFE in Appendix~\ref{app:tselect} (Table~\ref{tab:tselect_sweep_allpairs}),
and provide measured wall-clock timing in Appendix~\ref{app:runtime}.

\paragraph{Representative plots.}
For readability, we visualise per-transform and per-timestep behaviours on a representative pair (SVHN as ID, CIFAR-100 as OOD) in Figure~\ref{fig:ablation_c10_svhn}, and show score histograms in Figure~\ref{fig:hists_svhn_c100}.
Complete 9-pair ablation tables are provided in Appendix~\ref{app:timestep-ablation}.

%%%%%%%%%%%%%%%%%%%%%%%%%%%%%%%%%%%%%%%%%%%%%%%%%%%%%%%%%%%%
\section{Conclusion and discussions}
%%%%%%%%%%%%%%%%%%%%%%%%%%%%%%%%%%%%%%%%%%%%%%%%%%%%%%%%%%%%
GEPC enables OOD detection with diffusion models by leveraging symmetry properties. It achieves competitive performance both with an ID-trained backbone and in a training-free ID setting, and provides equivariance maps that facilitate detection on complex images such as SAR imagery.

\textbf{Computational cost.}
GEPC requires multiple score evaluations per input; stochastic subsampling of group elements and timestep reduces cost but remains higher than scalar diagnostics such as score norm.
However, GEPC avoids Jacobian/Hessian evaluations and remains competitive in NFE with many diffusion-based baselines.

\textbf{Symmetry assumptions.}
GEPC relies on approximate invariances under a chosen group $\mathcal{G}$.
For modalities lacking such symmetries (e.g., strongly oriented or structured data), performance may degrade or require adapting $\mathcal{G}$ (e.g., using learned or domain-specific transformations).

\textbf{Backbone reliance.}
GEPC requires a pretrained diffusion backbone, which may not be available for all domains.
Our cross-backbone experiments, however, suggest that even mismatched backbones can be informative, consistent with recent "foundation" diffusion models reused across tasks.

\textbf{Relation to dynamic covariance calibration and neural-collapse-based OOD.}
Feature-space approaches that adapt covariance geometry or exploit neural collapse structure~\cite{guo2025dcc,ammar2024neco,harun2025ncoodg} are complementary to GEPC: they refine matrix-induced distances on classifier features, whereas GEPC probes equivariance breaking directly in the diffusion score field.
% Our experiments suggest that these signals are largely orthogonal, and combining GEPC with dynamic covariance or neural-collapse-aware backbones is a promising direction for future work.

\textbf{Extensions.}
Future work includes continuous groups and steerable operators, learned group actions, combining GEPC with curvature and path-based diagnostics, and applying GEPC multi-modal diffusion models.

\section*{Impact Statement}
This paper advances out-of-distribution detection for diffusion models, with potential applications in safety-critical sensing scenarios such as anomaly detection in radar imaging; we do not anticipate specific negative societal impacts beyond standard considerations in machine learning.  %SAR imagery;

% In the unusual situation where you want a paper to appear in the
% references without citing it in the main text, use \nocite
% \nocite{langley00}

\bibliography{example_paper}

@article{yang2023diffusionSurvey,
  title   = {Diffusion Models: A Comprehensive Survey of Methods and Applications},
  author  = {Ling Yang and Zhilong Zhang and Yang Song and Shenda Hong and Runsheng Xu and Yue Zhao and Wentao Zhang and Bin Cui and Ming-Hsuan Yang},
  journal = {ACM Computing Surveys},
  volume  = {56},
  number  = {4},
  pages   = {1--39},
  year    = {2023},
  doi     = {10.1145/3626235},
}

@inproceedings{diffpath,
title={Out-of-Distribution Detection with a Single Unconditional Diffusion Model},
author={Alvin Heng and Alexandre H. Thiery and Harold Soh},
booktitle={The Thirty-eighth Annual Conference on Neural Information Processing Systems},
year={2024},
}

@inproceedings{ho2020ddpm,
  title     = {Denoising Diffusion Probabilistic Models},
  author    = {Jonathan Ho and Ajay Jain and Pieter Abbeel},
  booktitle = {NeurIPS},
  year      = {2020},
}

@inproceedings{song2021sde,
  title     = {Score-Based Generative Modeling through Stochastic Differential Equations},
  author    = {Yang Song and Jascha Sohl-Dickstein and Diederik P. Kingma and Abhishek Kumar and Stefano Ermon and Ben Poole},
  booktitle = {ICLR},
  year      = {2021},
}

@inproceedings{karras2022edm,
  title     = {Elucidating the Design Space of Diffusion-Based Generative Models},
  author    = {Tero Karras and Miika Aittala and Timo Aila and Samuli Laine},
  booktitle = {NeurIPS},
  year      = {2022},
}

@inproceedings{sun2022knnOod,
  title     = {Out-of-Distribution Detection with Deep Nearest Neighbors},
  author    = {Yiyou Sun and Yifei Ming and Xiaojin Zhu and Yixuan Li},
  booktitle = {ICML},
  year      = {2022},
}

@inproceedings{graham2023ddpmood,
  title     = {Denoising Diffusion Models for Out-of-Distribution Detection},
  author    = {Graham, Mark S. and Pinaya, Walter H. L. and Tudosiu, Petru-Daniel and Nachev, Parashkev and Ourselin, Sebastien and Cardoso, M. Jorge},
  booktitle = {Proceedings of the IEEE/CVF Conference on Computer Vision and Pattern Recognition Workshops (CVPRW)},
  year      = {2023}
}

@inproceedings{liu2023lmd,
  title     = {Unsupervised Out-of-Distribution Detection with Diffusion Inpainting},
  author    = {Zhenzhen Liu and Jin Peng Zhou and Yufan Wang and Kilian Q. Weinberger},
  booktitle = {ICML},
  year      = {2023},
}

@inproceedings{shin2023spr,
  title     = {Anomaly Detection using Score-based Perturbation Resilience},
  author    = {Yechan Shin and Jaeseok Jang and Jinwoo Choi},
  booktitle = {ICCV},
  year      = {2023},
  pages     = {21245--21254},
}

@article{scoped2025,
  title   = {SCOPED: Score–Curvature Out-of-distribution Proximity Evaluator for Diffusion},
  author  = {Brett Barkley and Preston Culbertson and David Fridovich-Keil},
  journal = {arXiv:2510.01456},
  year    = {2025},
}

@article{eigenscore2025,
  title   = {EigenScore: OOD Detection using Covariance in Diffusion Models},
  author  = {Shirin Shoushtari and Yi Wang and Xiao Shi and M. Salman Asif and Ulugbek S. Kamilov},
  journal = {arXiv:2510.07206},
  year    = {2025}
}

@inproceedings{kaur2022idecode,
  title     = {In-Distribution Equivariance for Conformal Out-of-Distribution Detection},
  author    = {Kaur, Ramneet and Jha, Susmit and Roy, Anirban and Park, Sangdon and Dobriban, Edgar and Sokolsky, Oleg and Lee, Insup},
  booktitle = {Proceedings of the AAAI Conference on Artificial Intelligence},
  year      = {2022},
  volume    = {36},
  number    = {7},
  pages     = {7104--7114},
}

@inproceedings{zhang2019shiftinvariant,
  title     = {Making Convolutional Networks Shift-Invariant Again},
  author    = {Zhang, Richard},
  booktitle = {International Conference on Machine Learning (ICML)},
  year      = {2019},
  pages     = {7324--7334},
}

@inproceedings{bruintjes2023equivariancefactors,
  title     = {What Affects Learned Equivariance in Deep Image Recognition Models?},
  author    = {Bruintjes, Ruben and Vreuls, Vincent and Koniusz, Piotr and Georgoulis, Stamatios and Gavves, Efstratios},
  booktitle = {Proceedings of the IEEE/CVF Conference on Computer Vision and Pattern Recognition Workshops (CVPRW)},
  year      = {2023}
}

@inproceedings{cohen2016gcnn,
  title     = {Group Equivariant Convolutional Networks},
  author    = {Cohen, Taco S. and Welling, Max},
  booktitle = {International Conference on Machine Learning (ICML)},
  year      = {2016},
  pages     = {2990--2999},
}

@inproceedings{cohen2017steerable,
  title     = {Steerable {CNN}s},
  author    = {Cohen, Taco S. and Welling, Max},
  booktitle = {International Conference on Learning Representations (ICLR)},
  year      = {2017},
}

@inproceedings{ling2025cadref,
  title={CADRef: Robust Out-of-Distribution Detection via Class-Aware Decoupled Relative Feature Leveraging},
  author={Ling, Zhiwei and Chang, Yachen and Zhao, Hailiang and Zhao, Xinkui and Chow, Kingsum and Deng, Shuiguang},
  booktitle={Proceedings of the IEEE/CVF Conference on Computer Vision and Pattern Recognition (CVPR)},
  year={2025},
}

@inproceedings{guan2023pca,
  title={Revisit PCA-based Technique for Out-of-Distribution Detection},
  author={Guan, Xiaoyuan and Liu, Zhouwu and Zheng, Wei-Shi and Zhou, Yuren and Wang, Ruixuan},
  booktitle={Proceedings of the IEEE/CVF International Conference on Computer Vision (ICCV)},
  year={2023},
}

@inproceedings{fang2024kernelpca,
  title={Kernel PCA for Out-of-Distribution Detection},
  author={Fang, Kun and Tao, Qinghua and Lv, Kexin and He, Mingzhen and Huang, Xiaolin and Yang, Jie},
  booktitle={Advances in Neural Information Processing Systems},
  year={2024}
}

@inproceedings{behpour2023gradorth,
  title={GradOrth: A Simple yet Efficient Out-of-Distribution Detection with Orthogonal Projection of Gradients},
  author={Behpour, Sima and Doan, Thang and Li, Xin and He, Wenbin and Gou, Liang and Ren, Liu},
  booktitle={Advances in Neural Information Processing Systems},
  year={2023}
}

@inproceedings{niu2020permutation,
  author    = {Chenhao Niu and Yang Song and Jiaming Song and Shengjia Zhao
               and Aditya Grover and Stefano Ermon},
  title     = {Permutation Invariant Graph Generation via Score-Based Generative Modeling},
  booktitle = {Proceedings of the 23rd International Conference on Artificial Intelligence and Statistics (AISTATS)},
  year      = {2020},
  series    = {Proceedings of Machine Learning Research},
  volume    = {108},
  pages     = {4474--4484}
}

@article{hendrycks2017baseline,
  author    = {Dan Hendrycks and Kevin Gimpel},
  title     = {A Baseline for Detecting Misclassified and Out-of-Distribution Examples in Neural Networks},
  journal = {Proceedings of International Conference on Learning Representations},
  year = {2017},
}

@inproceedings{liang2018odin,
  author={Shiyu Liang and Yixuan Li and R. Srikant},
  title={Enhancing The Reliability of Out-of-distribution Image Detection in Neural Networks},
  year={2018},
  cdate={1514764800000},
}

@article{liu2020energy,
  title={Energy-based Out-of-distribution Detection},
  author={Liu, Weitang and Wang, Xiaoyun and Owens, John and Li, Yixuan},
  journal={Advances in Neural Information Processing Systems},
  year={2020}
 }

@InProceedings{pmlr-v162-hoogeboom22a,
  title = 	 {Equivariant Diffusion for Molecule Generation in 3{D}},
  author =       {Hoogeboom, Emiel and Satorras, V\'{\i}ctor Garcia and Vignac, Cl{\'e}ment and Welling, Max},
  booktitle = 	 {Proceedings of the 39th International Conference on Machine Learning},
  pages = 	 {8867--8887},
  year = 	 {2022},
  editor = 	 {Chaudhuri, Kamalika and Jegelka, Stefanie and Song, Le and Szepesvari, Csaba and Niu, Gang and Sabato, Sivan},
  volume = 	 {162},
  series = 	 {Proceedings of Machine Learning Research},
  month = 	 {17--23 Jul},
  publisher =    {PMLR},
}

@inproceedings{NEURIPS2024_587b3f36,
 author = {Cornet, Fran\c{c}ois and Bartosh, Grigory and Schmidt, Mikkel N. and Naesseth, Christian A.},
 booktitle = {Advances in Neural Information Processing Systems},
 doi = {10.52202/079017-1564},
 editor = {A. Globerson and L. Mackey and D. Belgrave and A. Fan and U. Paquet and J. Tomczak and C. Zhang},
 pages = {49429--49460},
 publisher = {Curran Associates, Inc.},
 title = {Equivariant Neural Diffusion for Molecule Generation},
 volume = {37},
 year = {2024}
}

@article{DBLP:journals/bmcbi/ZhangLLWG24,
  author={Hao Zhang and Yang Liu and Xiaoyan Liu and Cheng Wang and Maozu Guo},
  title={Equivariant score-based generative diffusion framework for 3D molecules},
  year={2024},
  month={December},
  cdate={1733011200000},
  journal={BMC Bioinform.},
  volume={25},
  number={1},
  pages={203},
}

@misc{chen2024equivariantscorebasedgenerativemodels,
      title={Equivariant score-based generative models provably learn distributions with symmetries efficiently}, 
      author={Ziyu Chen and Markos A. Katsoulakis and Benjamin J. Zhang},
      year={2024},
      eprint={2410.01244},
      archivePrefix={arXiv},
      primaryClass={stat.ML},
}

@inproceedings{
tahmasebi2024sample,
title={Sample Complexity Bounds for Estimating Probability Divergences under Invariances},
author={Behrooz Tahmasebi and Stefanie Jegelka},
booktitle={Forty-first International Conference on Machine Learning},
year={2024},
}

@inproceedings{guo2025dcc,
title={Improving Out-of-Distribution Detection via Dynamic Covariance Calibration},
author={Kaiyu Guo and Zijian Wang and Tan Pan and Brian C. Lovell and Mahsa Baktashmotlagh},
booktitle={Forty-second International Conference on Machine Learning},
year={2025},
}

@INPROCEEDINGS{wang2022vim,
  author={Wang, Haoqi and Li, Zhizhong and Feng, Litong and Zhang, Wayne},
  booktitle={2022 IEEE/CVF Conference on Computer Vision and Pattern Recognition (CVPR)}, 
  title={ViM: Out-Of-Distribution with Virtual-logit Matching}, 
  year={2022},
  volume={},
  number={},
  pages={4911-4920},
  doi={10.1109/CVPR52688.2022.00487}}

@inproceedings{ammar2024neco,
title={{NECO}: {NE}ural Collapse Based Out-of-distribution detection},
author={Mou{\"\i}n Ben Ammar and Nacim Belkhir and Sebastian Popescu and Antoine Manzanera and Gianni Franchi},
booktitle={The Twelfth International Conference on Learning Representations},
year={2024},
}

@inproceedings{harun2025ncoodg,
title={Controlling Neural Collapse Enhances Out-of-Distribution Detection and Transfer Learning},
author={Md Yousuf Harun and Jhair Gallardo and Christopher Kanan},
booktitle={Forty-second International Conference on Machine Learning},
year={2025},
}

@misc{lee2018mahalanobis,
  title        = {A Simple Unified Framework for Detecting Out-of-Distribution Samples and Adversarial Attacks},
  author       = {Lee, Kimin and Lee, Kibok and Lee, Honglak and Shin, Jinwoo},
  year         = {2018},
  eprint       = {1807.03888},
  archivePrefix= {arXiv},
  primaryClass = {stat.ML},
  note         = {Accepted in NeurIPS (NIPS) 2018},
  doi          = {10.48550/arXiv.1807.03888}
}

@article{vincent2011connection,
  title   = {A Connection Between Score Matching and Denoising Autoencoders},
  author  = {Pascal Vincent},
  journal = {Neural Computation},
  volume  = {23},
  number  = {7},
  pages   = {1661--1674},
  year    = {2011}
}

@inproceedings{nichol2021improved,
  title     = {Improved Denoising Diffusion Probabilistic Models},
  author    = {Alex Nichol and Prafulla Dhariwal},
  booktitle = {Proceedings of the 38th International Conference on Machine Learning},
  year      = {2021},
  series    = {Proceedings of Machine Learning Research},
  volume    = {139},
  pages     = {8162--8171},
}

@ARTICLE{11029244,
  author={Huang, Kuiyun and Chen, Menglong and Zheng, Hong and Lin, Baihong and Fan, Shicai},
  journal={IEEE Transactions on Circuits and Systems for Video Technology}, 
  title={Soft Cluster-Aware Equivariant Contrastive Learning for Unsupervised Out-of-Distribution Detection}, 
  year={2025},
  volume={35},
  number={11},
  pages={11309-11322},
  doi={10.1109/TCSVT.2025.3578365}}

@inproceedings{robbins1956empiricalbayes,
  title     = {An Empirical Bayes Approach to Statistics},
  author    = {Robbins, Herbert},
  booktitle = {Proceedings of the Third Berkeley Symposium on Mathematical Statistics and Probability, Volume 1: Contributions to the Theory of Statistics},
  pages     = {157--163},
  year      = {1956},
  publisher = {University of California Press}
}

@article{efron2011tweedie,
  title   = {Tweedie's Formula and Selection Bias},
  author  = {Efron, Bradley},
  journal = {Journal of the American Statistical Association},
  volume  = {106},
  number  = {496},
  pages   = {1602--1614},
  year    = {2011},
  doi     = {10.1198/jasa.2011.tm11181}
}

@article{saremi2019neuralEB,
  title   = {Neural Empirical Bayes},
  author  = {Saremi, Saeed and Hyv{\"a}rinen, Aapo},
  journal = {Journal of Machine Learning Research},
  volume  = {20},
  number  = {181},
  pages   = {1--23},
  year    = {2019}
}

@article{guo2005immse,
  title   = {Mutual Information and Minimum Mean-Square Error in Gaussian Channels},
  author  = {Guo, Dongning and Shamai, Shlomo and Verd{\'u}, Sergio},
  journal = {IEEE Transactions on Information Theory},
  volume  = {51},
  number  = {4},
  pages   = {1261--1282},
  year    = {2005},
  doi     = {10.1109/TIT.2005.844072}
}

@article{brascamp1976bl,
  title   = {On Extensions of the Brunn--Minkowski and Pr{\'e}kopa--Leindler Theorems, Including Inequalities for Log Concave Functions, and with an Application to the Diffusion Equation},
  author  = {Brascamp, Herm Jan and Lieb, Elliott H.},
  journal = {Journal of Functional Analysis},
  volume  = {22},
  number  = {4},
  pages   = {366--389},
  year    = {1976}
}

@article{dalalyan2017sampling,
  title   = {Theoretical Guarantees for Sampling from Smooth and Log-Concave Densities},
  author  = {Dalalyan, Arnak S.},
  journal = {Journal of the Royal Statistical Society: Series B},
  year    = {2017}
}

@article{durmus2019ula,
  title   = {High-Dimensional Bayesian Inference via the Unadjusted Langevin Algorithm},
  author  = {Durmus, Alain and Moulines, {\'E}ric},
  journal = {Bernoulli},
  year    = {2019}
}

@article{alain2014regularizedAEs,
  title   = {What Regularized Auto-Encoders Learn from the Data-Generating Distribution},
  author  = {Alain, Guillaume and Bengio, Yoshua},
  journal = {Journal of Machine Learning Research},
  volume  = {15},
  pages   = {3563--3593},
  year    = {2014}
}

@book{boyd2004convex,
  title     = {Convex Optimization},
  author    = {Boyd, Stephen and Vandenberghe, Lieven},
  year      = {2004},
  publisher = {Cambridge University Press}
}

@inproceedings{igebm,
 author = {Du, Yilun and Mordatch, Igor},
 booktitle = {Advances in Neural Information Processing Systems},
 editor = {H. Wallach and H. Larochelle and A. Beygelzimer and F. d\textquotesingle Alch\'{e}-Buc and E. Fox and R. Garnett},
 pages = {},
 publisher = {Curran Associates, Inc.},
 title = {Implicit Generation and Modeling with Energy Based Models},
 volume = {32},
 year = {2019}
}

@inproceedings{vaebm,
title={{\{}VAEBM{\}}: A Symbiosis between Variational Autoencoders and Energy-based Models},
author={Zhisheng Xiao and Karsten Kreis and Jan Kautz and Arash Vahdat},
booktitle={International Conference on Learning Representations},
year={2021}
}

@misc{improvedcd,
title={Improved Contrastive Divergence Training of Energy Based Models},
author={Yilun Du and Shuang Li and Joshua B. Tenenbaum and Igor Mordatch},
year={2021},
}

@misc{dos,
      title={Density of States Estimation for Out-of-Distribution Detection}, 
      author={Warren R. Morningstar and Cusuh Ham and Andrew G. Gallagher and Balaji Lakshminarayanan and Alexander A. Alemi and Joshua V. Dillon},
      year={2020},
      eprint={2006.09273},
      archivePrefix={arXiv},
      primaryClass={cs.LG},
}

@misc{waic,
      title={WAIC, but Why? Generative Ensembles for Robust Anomaly Detection}, 
      author={Hyunsun Choi and Eric Jang and Alexander A. Alemi},
      year={2019},
      eprint={1810.01392},
      archivePrefix={arXiv},
      primaryClass={stat.ML},
}

@misc{tt,
      title={Detecting Out-of-Distribution Inputs to Deep Generative Models Using Typicality}, 
      author={Eric Nalisnick and Akihiro Matsukawa and Yee Whye Teh and Balaji Lakshminarayanan},
      year={2019},
      eprint={1906.02994},
      archivePrefix={arXiv},
      primaryClass={stat.ML},
}

@misc{likelihood,
      title={Likelihood Ratios for Out-of-Distribution Detection}, 
      author={Jie Ren and Peter J. Liu and Emily Fertig and Jasper Snoek and Ryan Poplin and Mark A. DePristo and Joshua V. Dillon and Balaji Lakshminarayanan},
      year={2019},
      eprint={1906.02845},
      archivePrefix={arXiv},
      primaryClass={stat.ML},
}

@inproceedings{inputcomp,
title={Input Complexity and Out-of-distribution Detection with Likelihood-based Generative Models},
author={Joan Serrà and David Álvarez and Vicenç Gómez and Olga Slizovskaia and José F. Núñez and Jordi Luque},
booktitle={International Conference on Learning Representations},
year={2020},
}

@inproceedings{msma,
title={Multiscale Score Matching for Out-of-Distribution Detection},
author={Ahsan Mahmood and Junier Oliva and Martin Andreas Styner},
booktitle={International Conference on Learning Representations},
year={2021},
}
\bibliographystyle{icml2026}

%%%%%%%%%%%%%%%%%%%%%%%%%%%%%%%%%%%%%%%%%%%%%%%%%%%%%%%%%%%%%%%%%%%%%%%%%%%%%%%
%%%%%%%%%%%%%%%%%%%%%%%%%%%%%%%%%%%%%%%%%%%%%%%%%%%%%%%%%%%%%%%%%%%%%%%%%%%%%%%
% APPENDIX
%%%%%%%%%%%%%%%%%%%%%%%%%%%%%%%%%%%%%%%%%%%%%%%%%%%%%%%%%%%%%%%%%%%%%%%%%%%%%%%
%%%%%%%%%%%%%%%%%%%%%%%%%%%%%%%%%%%%%%%%%%%%%%%%%%%%%%%%%%%%%%%%%%%%%%%%%%%%%%%
\newpage
\appendix
\onecolumn
% \section{You \emph{can} have an appendix here.}

% You can have as much text here as you want. The main body must be at most $8$
% pages long. For the final version, one more page can be added. If you want, you
% can use an appendix like this one.

% The $\mathtt{\backslash onecolumn}$ command above can be kept in place if you
% prefer a one-column appendix, or can be removed if you prefer a two-column
% appendix.  Apart from this possible change, the style (font size, spacing,
% margins, page numbering, etc.) should be kept the same as the main body.

%%%%%%%%%%%%%%%%%%%%%%%%%%%%%%%%%%%%%%%%%%%%%%%%%%%%%%%%%%%%%%%%%%%%%%
\section{Diffusion and score-matching identities (detailed)}
\label{app:diffusion-details}
%%%%%%%%%%%%%%%%%%%%%%%%%%%%%%%%%%%%%%%%%%%%%%%%%%%%%%%%%%%%%%%%%%%%%%

We collect detailed derivations for the identities used in the main text:
(i) denoising-score identities linking $\boldsymbol{\epsilon}$-prediction to scores,
(ii) Tweedie's formula under DDPM scaling,
(iii) posterior covariance identities and their relation to the Jacobian,
(iv) Lipschitz / contractivity properties derived from posterior covariance bounds.

\subsection{Forward noising closed-form (DDPM)}
Recall that the forward diffusion process is defined by
\begin{equation}
q(\mathbf{x}_t\mid \mathbf{x}_{t-1})=\mathcal N\left(\mathbf{x}_t;\sqrt{\alpha_t}\,\mathbf{x}_{t-1},\beta_t \, \mathbf{I}\right),
\qquad \alpha_t=1-\beta_t,
\end{equation}
where we denote $\bar\alpha_t=\displaystyle\prod_{s=1}^t\alpha_s$. It follows that the marginal distribution admits the closed form:
\begin{equation}
q(\mathbf{x}_t\mid \mathbf{x}_0)=\mathcal N\left(\mathbf{x}_t;\sqrt{\bar\alpha_t}\,\mathbf{x}_0,(1-\bar\alpha_t)\,\mathbf{I}\right),
\end{equation}
and equivalently that
\begin{equation}
\mathbf{x}_t=\sqrt{\bar\alpha_t}\,\mathbf{x}_0+\sigma_t\,\boldsymbol{\epsilon},
\ \sigma_t^2\, \coloneqq 1-\bar\alpha_t.
\label{eq:app-forward}
\end{equation}

\subsection{Denoising-score identity: $\E[\boldsymbol{\epsilon}\mid \mathbf{x}_t]$ and applying the forward process in $\mathbf{s}_t(\mathbf{x}_t)$}
\label{app:s_theta}
Let $p_t$ denote the marginal density of $\mathbf{x}_t$ induced by $\mathbf{x}_0\sim p_0$ and \eqref{eq:app-forward}.
We define the ideal score as $\mathbf{s}_t(\mathbf{x})\coloneqq \nabla_\mathbf x\log p_t(\mathbf{x})$~\cite{vincent2011connection,saremi2019neuralEB}.

\begin{lemma}[Conditional-noise / score identity]
\label{lem:app-eps-score}
For each fixed $t$,
\begin{equation}
\mathbf{s}_t(\mathbf{x}_t)= -\frac{1}{\sigma_t}\,\E[\boldsymbol{\epsilon}\mid \mathbf{x}_t].
\label{eq:app-score-eps}
\end{equation}
\end{lemma}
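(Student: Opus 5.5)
We prove Lemma~\ref{lem:app-eps-score} by writing the marginal $p_t$ as a Gaussian mixture over $\mathbf{x}_0$, differentiating under the integral sign, and reading the result as a posterior expectation.

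\textbf{Step 1 (mixture form).} By \eqref{eq:app-forward}, the marginal of $\mathbf{x}_t$ is
\[
p_t(\mathbf{x})=\int q(\mathbf{x}\mid \mathbf{x}_0)\,p_0(\mathbf{x}_0)\,d\mathbf{x}_0,
\qquad
q(\mathbf{x}\mid\mathbf{x}_0)=\mathcal N\big(\mathbf{x};\sqrt{\bar\alpha_t}\,\mathbf{x}_0,\sigma_t^2\mathbf{I}\big),
\]
with $\sigma_t>0$. Hence $p_t$ is smooth and strictly positive, so $\log p_t$ is well defined everywhere.

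\textbf{Step 2 (differentiate under the integral).} The Gaussian kernel and its gradient are bounded uniformly in $\mathbf{x}_0$ locally in $\mathbf{x}$, so dominated convergence justifies exchanging $\nabla_{\mathbf{x}}$ and the integral. Using
\[
\nabla_{\mathbf{x}} q(\mathbf{x}\mid\mathbf{x}_0)=-\frac{\mathbf{x}-\sqrt{\bar\alpha_t}\,\mathbf{x}_0}{\sigma_t^2}\,q(\mathbf{x}\mid\mathbf{x}_0),
\]
and dividing by $p_t(\mathbf{x})$, we get
\[
\nabla_{\mathbf{x}}\log p_t(\mathbf{x})=\int -\frac{\mathbf{x}-\sqrt{\bar\alpha_t}\,\mathbf{x}_0}{\sigma_t^2}\,\frac{q(\mathbf{x}\mid\mathbf{x}_0)\,p_0(\mathbf{x}_0)}{p_t(\mathbf{x})}\,d\mathbf{x}_0 .
\]
By Bayes' rule, the weight $q(\mathbf{x}\mid\mathbf{x}_0)p_0(\mathbf{x}_0)/p_t(\mathbf{x})$ is the posterior density $p(\mathbf{x}_0\mid\mathbf{x}_t=\mathbf{x})$. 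Therefore
\[
\mathbf{s}_t(\mathbf{x})=-\sigma_t^{-2}\,\E\big[\mathbf{x}_t-\sqrt{\bar\alpha_t}\,\mathbf{x}_0\mid \mathbf{x}_t=\mathbf{x}\big].
\]

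\textbf{Step 3 (substitute the noise).} From the reparameterization, $\mathbf{x}_t-\sqrt{\bar\alpha_t}\,\mathbf{x}_0=\sigma_t\boldsymbol{\epsilon}$ almost surely. Substituting gives $\mathbf{s}_t(\mathbf{x}_t)=-\sigma_t^{-1}\E[\boldsymbol{\epsilon}\mid\mathbf{x}_t]$, which is \eqref{eq:app-score-eps}. As a corollary, the MSE-optimal $\boldsymbol{\epsilon}_\theta$ equals $\E[\boldsymbol{\epsilon}\mid\mathbf{x}_t]$, which recovers \eqref{eq:score_estimation}.

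\textbf{Main obstacle.} The only delicate point is justifying the exchange of gradient and integral in Step~2, which needs a mild moment condition on $p_0$. We handle it as follows:
\begin{itemize}
\item On a ball $\|\mathbf{x}\|\le R$, the function $\|\mathbf{x}-\sqrt{\bar\alpha_t}\,\mathbf{x}_0\|\,q(\mathbf{x}\mid\mathbf{x}_0)$ is bounded uniformly in $\mathbf{x}_0$.
\item This is because $r e^{-r^2/(2\sigma_t^2)}$ is bounded in $r$.
\end{itemize}
Dominated convergence against the probability measure $p_0$ therefore applies with no moment assumption at all. The conditional expectation is finite for the same reason.
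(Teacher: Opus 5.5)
Your proof is correct and follows the paper's own argument: mixture representation of $p_t$, differentiation under the integral, reading the normalised weight as the posterior, and substituting $\mathbf{x}_t-\sqrt{\bar\alpha_t}\,\mathbf{x}_0=\sigma_t\boldsymbol{\epsilon}$. Your dominated-convergence justification, which the paper omits, is valid and needs no moment assumption on $p_0$ (the dominating function is a constant), so your opening remark that a moment condition is required contradicts your own argument and should be dropped.
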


\begin{proof}
Let $K_t(\mathbf{x}_t\mid \mathbf{x}_0)=\mathcal N\left(\mathbf{x}_t;\sqrt{\bar\alpha_t}\,\mathbf{x}_0,\sigma_t^2 \,\mathbf{I}\right)$ denote the Gaussian transition kernel of the forward process. The marginal density of $\mathbf{x}_t$ can then be written as
\[
p_t(\mathbf{x}_t)=\int p_0(\mathbf{x}_0)\,K_t(\mathbf{x}_t\mid \mathbf{x}_0)\,d\mathbf{x}_0.
\]

Differentiating under the integral yields:
\[
\nabla_{\mathbf{x}_t}p_t(\mathbf{x}_t)=\int p_0(\mathbf{x}_0)\,K_t(\mathbf{x}_t\mid \mathbf{x}_0)\,\nabla_{\mathbf{x}_t}\log K_t(\mathbf{x}_t\mid \mathbf{x}_0)\,d\mathbf{x}_0.
\]
Since
\[
\nabla_{\mathbf{x}_t}\log K_t(\mathbf{x}_t\mid \mathbf{x}_0)= -\frac{\mathbf{x}_t-\sqrt{\bar\alpha_t}\,\mathbf{x}_0}{\sigma_t^2}.
\]
We obtain:
\[
\nabla_{\mathbf{x}_t}\log p_t(\mathbf{x}_t)=
\frac{\nabla p_t(\mathbf{x}_t)}{p_t(\mathbf{x}_t)}
=
-\E\!\left[\frac{\mathbf{x}_t-\sqrt{\bar\alpha_t}\,\mathbf{x}_0}{\sigma_t^2}\,\Big|\,\mathbf{x}_t\right].
\]
Using the identity, $\boldsymbol{\epsilon}=(\mathbf{x}_t-\sqrt{\bar\alpha_t}\,\mathbf{x}_0)/\sigma_t$, this simplifies to
\[
\nabla_{\mathbf{x}_t}\log p_t(\mathbf{x}_t)= -\frac{1}{\sigma_t}\E[\boldsymbol{\epsilon}\mid \mathbf{x}_t]\,,
\]
which establishes \eqref{eq:app-score-eps}.
\end{proof}

\paragraph{Implication for $\boldsymbol{\epsilon}$-prediction.}
By definition of the mean squared error objective, $\boldsymbol{\epsilon}_\theta(\mathbf{x}_t,t)$ is an estimator of $ \E[\boldsymbol{\epsilon}\mid \mathbf{x}_t]$. Combining this observation with \eqref{eq:app-score-eps} yields
\begin{equation}
\mathbf{s}_\theta(\mathbf{x}_t,t)=-\frac{1}{\sigma_t}\,\boldsymbol{\epsilon}_\theta(\mathbf{x}_t,t).
\label{eq:app-score-from-eps}
\end{equation}

\subsection{Tweedie formula under DDPM scaling (posterior mean of $\mathbf{x}_0$)}
\label{app:dir}
The classical Tweedie formula is for additive noise model  $\mathbf{y}=\mathbf{x}+\sigma\,\boldsymbol{\epsilon}$~\cite{robbins1956empiricalbayes,efron2011tweedie}.
whereas DDPM involves an additional scaling factor $\sqrt{\bar{\alpha}_t}$. We therefore reduce to the additive setting by introducing a rescaled variable:
\begin{equation}
\mathbf{y}_t \coloneqq \frac{\mathbf{x}_t}{\sqrt{\bar\alpha_t}}
= \mathbf{x}_0 + \tilde\sigma_t \,\boldsymbol{\epsilon},
\qquad \tilde\sigma_t\coloneqq \frac{\sigma_t}{\sqrt{\bar\alpha_t}}.
\label{eq:app-rescale}
\end{equation}
Let $\tilde p_t$ denote the marginal density of $\mathbf{y}_t$ and define its score by $\tilde{\mathbf{s}}_t(\mathbf{y})\coloneqq \nabla_\mathbf{y}\log \tilde p_t(\mathbf{y})$.
For clarity, in the next two subsections we work with a generic additive Gaussian model
$\mathbf{y}=\mathbf{x}_0+\tilde\sigma\,\boldsymbol{\epsilon}$ and omit the time index $t$,
writing $\tilde p$ and $\tilde{\mathbf{s}}$ for the corresponding marginal and score.

\begin{lemma}[Tweedie (additive form)]
\label{lem:app-tweedie-add}
For the additive noise model $\mathbf{y}=\mathbf{x}_0+\tilde\sigma\,\boldsymbol{\epsilon}$, we have
\begin{equation}
\E[\mathbf{x}_0\mid \mathbf{y}] = \mathbf{y} + \tilde\sigma^2\,\tilde{\mathbf{s}}(\mathbf{y}).
\label{eq:app-tweedie-add}
\end{equation}
\end{lemma}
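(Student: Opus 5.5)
The approach is the same as the one used for Lemma~\ref{lem:app-eps-score}: write the marginal as a Gaussian convolution, differentiate under the integral sign, and read off the score as a posterior expectation. Let $\varphi_{\tilde\sigma}(\mathbf{u})\propto\exp(-\|\mathbf{u}\|_2^2/(2\tilde\sigma^2))$ denote the isotropic Gaussian density. Under the additive model $\mathbf{y}=\mathbf{x}_0+\tilde\sigma\,\boldsymbol{\epsilon}$ the marginal is
\[
\tilde p(\mathbf{y})=\int p_0(\mathbf{x}_0)\,\varphi_{\tilde\sigma}(\mathbf{y}-\mathbf{x}_0)\,d\mathbf{x}_0 .
\]
This density is smooth and strictly positive for any $p_0$, because $\varphi_{\tilde\sigma}$ is. Hence $\log\tilde p$ is well defined, and no regularity assumption on $p_0$ is needed beyond its being a probability measure.

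The first step is to differentiate in $\mathbf{y}$ under the integral. This is justified by dominated convergence: $\|\nabla\varphi_{\tilde\sigma}(\mathbf{u})\|_2=\tilde\sigma^{-2}\|\mathbf{u}\|_2\,\varphi_{\tilde\sigma}(\mathbf{u})$ is bounded uniformly in $\mathbf{u}$, and $p_0$ integrates to one. Using $\nabla_{\mathbf{y}}\varphi_{\tilde\sigma}(\mathbf{y}-\mathbf{x}_0)=-\tilde\sigma^{-2}(\mathbf{y}-\mathbf{x}_0)\,\varphi_{\tilde\sigma}(\mathbf{y}-\mathbf{x}_0)$, we get
\[
\nabla\tilde p(\mathbf{y})=-\tilde\sigma^{-2}\int(\mathbf{y}-\mathbf{x}_0)\,p_0(\mathbf{x}_0)\,\varphi_{\tilde\sigma}(\mathbf{y}-\mathbf{x}_0)\,d\mathbf{x}_0 .
\]

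The second step is to divide by $\tilde p(\mathbf{y})$. By Bayes' rule, $p_0(\mathbf{x}_0)\varphi_{\tilde\sigma}(\mathbf{y}-\mathbf{x}_0)/\tilde p(\mathbf{y})$ is the posterior density of $\mathbf{x}_0$ given $\mathbf{y}$. Therefore
\[
\tilde{\mathbf{s}}(\mathbf{y})=-\tilde\sigma^{-2}\bigl(\mathbf{y}-\E[\mathbf{x}_0\mid\mathbf{y}]\bigr).
\]
Rearranging gives \eqref{eq:app-tweedie-add}.

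As a sanity check, the same conclusion follows from Lemma~\ref{lem:app-eps-score}. Writing $\mathbf{y}=\mathbf{x}_0+\tilde\sigma\boldsymbol{\epsilon}$ and conditioning on $\mathbf{y}$ gives $\E[\mathbf{x}_0\mid\mathbf{y}]=\mathbf{y}-\tilde\sigma\,\E[\boldsymbol{\epsilon}\mid\mathbf{y}]$. The lemma, applied with $\bar\alpha=1$, gives $\tilde\sigma\,\E[\boldsymbol{\epsilon}\mid\mathbf{y}]=-\tilde\sigma^2\tilde{\mathbf{s}}(\mathbf{y})$, which yields the identity in one line.

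The only step that needs any care is the interchange of gradient and integral. The uniform bound on $\nabla\varphi_{\tilde\sigma}$ above settles it, so I do not expect a genuine obstacle.
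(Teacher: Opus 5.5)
Your proof is correct and is essentially the paper's: the paper's proof is exactly your closing ``sanity check'' (apply Lemma~\ref{lem:app-eps-score} in the additive model to get $\tilde{\mathbf{s}}(\mathbf{y})=-\tilde\sigma^{-1}\E[\boldsymbol{\epsilon}\mid\mathbf{y}]$, then take conditional expectations in $\mathbf{x}_0=\mathbf{y}-\tilde\sigma\boldsymbol{\epsilon}$), and your main argument simply inlines that lemma's convolution-and-differentiate proof. Your version also supplies explicit justifications the paper leaves implicit, namely the dominated-convergence interchange of gradient and integral and the strict positivity of $\tilde p$.
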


\begin{proof}
Applying Lemma~\ref{lem:app-eps-score} in the additive model gives
$\tilde{\mathbf{s}}(\mathbf{y})=-(1/\tilde\sigma)\, \E[\boldsymbol{\epsilon}\mid \mathbf{y}]$ and
$\mathbf{x}_0=\mathbf{y}-\tilde\sigma\,\boldsymbol{\epsilon}$. Taking the conditional expectation yields
$\E[\mathbf{x}_0\mid \mathbf{y}]=\mathbf{y}-\tilde\sigma\,\E[\boldsymbol{\epsilon}\mid \mathbf{y}]=\mathbf{y}+\tilde\sigma^2 \,\tilde{\mathbf{s}}(\mathbf{y})$, which establishes \eqref{eq:app-tweedie-add}.
\end{proof}

We now translate the Tweedie formula back to $\mathbf{x}_t$. Since $\mathbf{y}_t=\mathbf{x}_t/\sqrt{\bar\alpha_t}$, the
score transforms by the chain rule:
\begin{equation}
\tilde{\mathbf{s}}_t(\mathbf{y}_t)
=\nabla_{\mathbf{y}_t}\log \tilde p_t(\mathbf{y}_t)
=\sqrt{\bar\alpha_t}\,\nabla_{\mathbf{x}_t}\log p_t(\mathbf{x}_t)
=\sqrt{\bar\alpha_t}\,\mathbf{s}_t(\mathbf{x}_t).
\label{eq:app-score-rescale}
\end{equation}
Combining \eqref{eq:app-tweedie-add} and \eqref{eq:app-score-rescale}  gives the DDPM-scaled Tweedie formula:

\begin{lemma}[Tweedie for DDPM]
\label{lem:app-tweedie-ddpm}
Let $m(\mathbf{x}_t)\coloneqq \E[\mathbf{x}_0\mid \mathbf{x}_t]$ denote the Bayes denoiser (posterior mean). Then
\begin{equation}
m(\mathbf{x}_t)
=
\frac{1}{\sqrt{\bar\alpha_t}}\Big(\mathbf{x}_t+\sigma_t^2\,\mathbf{s}_t(\mathbf{x}_t)\Big).
\label{eq:app-tweedie-ddpm}
\end{equation}
\end{lemma}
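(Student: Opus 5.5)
The statement to prove is Lemma~\ref{lem:app-tweedie-ddpm}. The plan is to reduce it to the additive Tweedie formula, Lemma~\ref{lem:app-tweedie-add}, through the rescaling \eqref{eq:app-rescale}.

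\emph{Step 1 (rescaling).} Set $\mathbf{y}_t=\mathbf{x}_t/\sqrt{\bar\alpha_t}=\mathbf{x}_0+\tilde\sigma_t\boldsymbol{\epsilon}$ with $\tilde\sigma_t=\sigma_t/\sqrt{\bar\alpha_t}$. The map $\mathbf{x}_t\mapsto\mathbf{y}_t$ is a deterministic bijection, so both variables generate the same $\sigma$-algebra. Hence
\[
m(\mathbf{x}_t)=\E[\mathbf{x}_0\mid\mathbf{x}_t]=\E[\mathbf{x}_0\mid\mathbf{y}_t].
\]

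\emph{Step 2 (additive Tweedie).} Lemma~\ref{lem:app-tweedie-add}, applied with noise level $\tilde\sigma_t$, gives
\[
\E[\mathbf{x}_0\mid\mathbf{y}_t]=\mathbf{y}_t+\tilde\sigma_t^2\,\tilde{\mathbf{s}}_t(\mathbf{y}_t).
\]

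\emph{Step 3 (change of variables in the score).} Since $\tilde p_t(\mathbf{y})=\bar\alpha_t^{d/2}\,p_t(\sqrt{\bar\alpha_t}\,\mathbf{y})$, the Jacobian constant disappears after taking the logarithm and the gradient. The chain rule then gives \eqref{eq:app-score-rescale}:
\[
\tilde{\mathbf{s}}_t(\mathbf{y}_t)=\sqrt{\bar\alpha_t}\,\mathbf{s}_t(\mathbf{x}_t).
\]

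\emph{Step 4 (substitution).} Substituting into Step 2,
\[
m(\mathbf{x}_t)=\frac{\mathbf{x}_t}{\sqrt{\bar\alpha_t}}+\frac{\sigma_t^2}{\bar\alpha_t}\sqrt{\bar\alpha_t}\,\mathbf{s}_t(\mathbf{x}_t)=\frac{1}{\sqrt{\bar\alpha_t}}\bigl(\mathbf{x}_t+\sigma_t^2\,\mathbf{s}_t(\mathbf{x}_t)\bigr),
\]
which is \eqref{eq:app-tweedie-ddpm}.

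As a cross-check, the result also follows directly from Lemma~\ref{lem:app-eps-score}. Write $\mathbf{x}_0=(\mathbf{x}_t-\sigma_t\boldsymbol{\epsilon})/\sqrt{\bar\alpha_t}$ and take conditional expectations given $\mathbf{x}_t$. Then replace $\E[\boldsymbol{\epsilon}\mid\mathbf{x}_t]$ by $-\sigma_t\,\mathbf{s}_t(\mathbf{x}_t)$.

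The only delicate point is in Step 3: confirming that the density transformation introduces only a multiplicative constant, so that it does not affect the score. Differentiation under the integral is already justified in Lemma~\ref{lem:app-eps-score} because the kernel is Gaussian.
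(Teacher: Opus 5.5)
Your proposal is correct and follows the paper's own route. You rescale to $\mathbf{y}_t=\mathbf{x}_t/\sqrt{\bar\alpha_t}$, apply the additive Tweedie formula (Lemma~\ref{lem:app-tweedie-add}), and substitute the chain-rule identity $\tilde{\mathbf{s}}_t(\mathbf{y}_t)=\sqrt{\bar\alpha_t}\,\mathbf{s}_t(\mathbf{x}_t)$ from \eqref{eq:app-score-rescale}, which is exactly the paper's argument; your remarks about the shared conditioning $\sigma$-algebra and the Jacobian constant dropping out of the score, along with the direct cross-check via Lemma~\ref{lem:app-eps-score}, only make explicit details the paper leaves implicit.
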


\subsection{Posterior covariance and Jacobian: $\mathrm{Cov}(\mathbf{x}_0\mid \mathbf{x}_t)$}
This subsection makes explicit the identity “Jacobian = posterior covariance” that underlies Lipschitz and contractivity arguments~\cite{saremi2019neuralEB,guo2005immse}. We work in the additive form $\mathbf{y}=\mathbf{x}_0+\tilde\sigma\,\boldsymbol{\epsilon}$ for clarity. Let $m(\mathbf{y})\coloneqq \E[\mathbf{x}_0\mid \mathbf{y}]$ and $C(\mathbf{y})\coloneqq \mathrm{Cov}(\mathbf{x}_0\mid \mathbf{y})$.

\begin{lemma}[Posterior covariance identity]
\label{lem:app-postcov-hess}
For additive Gaussian noise,
\begin{equation}
C(\mathbf{y})=\tilde\sigma^2 \, \mathbf{I} + \tilde\sigma^4 \,\nabla_\mathbf{y}^2 \log \tilde p(\mathbf{y}),
\label{eq:app-cov-hess}
\end{equation}
and equivalently, using $m(\mathbf{y})=\mathbf{y}+\tilde\sigma^2 \, \nabla_\mathbf{y}\log\tilde p(\mathbf{y})$,
\begin{equation}
\nabla_\mathbf{y} m(\mathbf{y}) = \mathbf{I} + \tilde\sigma^2 \,\nabla_\mathbf{y}^2 \log \tilde p(\mathbf{y}),
\qquad
C(\mathbf{y})=\tilde\sigma^2\,\nabla_\mathbf{y} m(\mathbf{y}).
\label{eq:app-cov-jac}
\end{equation}
\end{lemma}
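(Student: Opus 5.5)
The plan is to derive the covariance identity by differentiating the posterior mean from Lemma~\ref{lem:app-tweedie-add} once more with respect to $\mathbf{y}$. The key fact is that the Gaussian likelihood makes the derivative of a posterior expectation expressible as a posterior covariance.

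\textbf{Step 1: write the posterior explicitly.} Write
\[
m(\mathbf{y})=\frac{1}{\tilde p(\mathbf{y})}\int \mathbf{x}_0\, p_0(\mathbf{x}_0)\,\varphi_{\tilde\sigma}(\mathbf{y}-\mathbf{x}_0)\,d\mathbf{x}_0,
\]
where $\varphi_{\tilde\sigma}$ is the $\mathcal N(\mathbf 0,\tilde\sigma^2\mathbf I)$ density. The posterior is therefore $p(\mathbf{x}_0\mid\mathbf{y})\propto p_0(\mathbf{x}_0)\exp\!\big(-\|\mathbf{y}-\mathbf{x}_0\|_2^2/(2\tilde\sigma^2)\big)$. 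Expanding the square shows this is an exponential family in $\mathbf{x}_0$ with natural parameter $\mathbf{y}/\tilde\sigma^2$. Its log-partition function is $\log\tilde p(\mathbf{y})+\|\mathbf{y}\|_2^2/(2\tilde\sigma^2)$ plus constants.

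\textbf{Step 2: differentiate the log-partition function.} For an exponential family, the gradient of the log-partition with respect to the natural parameter is the mean, and the Hessian is the covariance. By the chain rule, with $\boldsymbol\eta=\mathbf{y}/\tilde\sigma^2$:
\begin{itemize}
\item the first derivative gives $m(\mathbf{y})=\tilde\sigma^2\nabla_\mathbf{y}\log\tilde p(\mathbf{y})+\mathbf{y}$, which recovers Tweedie and checks consistency;
\item the second derivative gives $C(\mathbf{y})=\tilde\sigma^4\nabla^2_\mathbf{y}\log\tilde p(\mathbf{y})+\tilde\sigma^2\mathbf I$.
\end{itemize}
The second line is exactly \eqref{eq:app-cov-hess}.

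\textbf{Step 3: the Jacobian form.} Differentiating $m(\mathbf{y})=\mathbf{y}+\tilde\sigma^2\nabla_\mathbf{y}\log\tilde p(\mathbf{y})$ directly gives $\nabla_\mathbf{y}m=\mathbf I+\tilde\sigma^2\nabla^2_\mathbf{y}\log\tilde p$. Multiplying by $\tilde\sigma^2$ and comparing with Step 2 yields $C=\tilde\sigma^2\nabla_\mathbf{y}m$, which is \eqref{eq:app-cov-jac}.

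\textbf{Alternative route.} One can instead compute $\nabla_\mathbf{y}m$ directly by the quotient rule. Differentiating the numerator produces $\tilde\sigma^{-2}\E[\mathbf{x}_0(\mathbf{x}_0-\mathbf{y})^\top\mid\mathbf{y}]$, and differentiating $\tilde p$ produces $-m\,\tilde\sigma^{-2}\E[(\mathbf{x}_0-\mathbf{y})^\top\mid\mathbf{y}]$. The $\mathbf{y}$ terms cancel and leave $\tilde\sigma^{-2}C$.

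\textbf{Main obstacle.} The main difficulty is justifying differentiation under the integral twice. It requires $\mathbf{x}_0$ to have finite second moments, or at least local uniform integrability of $\|\mathbf{x}_0\|^2 e^{-\|\mathbf{y}-\mathbf{x}_0\|^2/2\tilde\sigma^2}$. I would state this as a mild moment assumption and then invoke dominated convergence, using that Gaussian tails dominate polynomial factors locally uniformly in $\mathbf{y}$.
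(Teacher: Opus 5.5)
Your proof is correct, and its main line differs from the paper's. The paper's argument is essentially your ``alternative route.'' It differentiates the quotient $m(\mathbf{y})=\tilde p(\mathbf{y})^{-1}\int \mathbf{x}\,p_0(\mathbf{x})\,\phi_{\tilde\sigma}(\mathbf{y}-\mathbf{x})\,d\mathbf{x}$ directly to get $\nabla_\mathbf{y} m=\tilde\sigma^{-2}C$ first. Only afterwards does it obtain \eqref{eq:app-cov-hess}, by differentiating Tweedie and multiplying by $\tilde\sigma^2$.

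You reverse the order. You read \eqref{eq:app-cov-hess} off as the Hessian of the log-partition $A(\boldsymbol\eta)=\log\tilde p(\mathbf{y})+\|\mathbf{y}\|_2^2/(2\tilde\sigma^2)+\mathrm{const}$ at $\boldsymbol\eta=\mathbf{y}/\tilde\sigma^2$, and the Jacobian form follows.

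The paper's computation is elementary and self-contained, needing only the quotient rule and the gradient of the Gaussian kernel. Your route obtains Tweedie and the covariance identity as the first two derivatives of a single convex function. It extends verbatim to higher posterior cumulants ($\tilde\sigma^{2k}\nabla^k_\mathbf{y}\log\tilde p$ for $k\ge 3$) and comes with a ready-made regularity theory.

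That last point resolves your ``main obstacle'' better than the fix you propose. The base measure $p_0(\mathbf{x}_0)e^{-\|\mathbf{x}_0\|_2^2/(2\tilde\sigma^2)}\,d\mathbf{x}_0$ has Laplace transform at most $e^{\tilde\sigma^2\|\boldsymbol\eta\|_2^2/2}$ for every $\boldsymbol\eta\in\mathbb{R}^d$. So the natural parameter space is all of $\mathbb{R}^d$, and $A$ is $C^\infty$ with cumulant derivatives, with no moment condition on $p_0$. Equivalently, for $\mathbf{y}$ in a compact set, $\|\mathbf{x}_0\|_2^2\,\phi_{\tilde\sigma}(\mathbf{y}-\mathbf{x}_0)$ is bounded uniformly in $\mathbf{x}_0$, as is every polynomial-times-Gaussian factor produced by differentiating twice. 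That already gives dominated convergence against the probability density $p_0$. Your finite-second-moment hypothesis is therefore unnecessary; the paper is silent on regularity and needs none either.
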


\begin{proof}
We start from the posterior mean expressed as $$m(\mathbf{y})=\displaystyle\frac{1}{\tilde p(\mathbf{y})}\int \mathbf{x}\,p_0(\mathbf{x})\,\phi_{\tilde\sigma}(\mathbf{y}-\mathbf{x})\,d\mathbf{x}\, ,$$
where $\phi_{\tilde\sigma}(.)$ is the Gaussian density with variance $\tilde\sigma^2\, \mathbf{I}$.
Differentiating componentwise with respect to $\mathbf{y}$ and using
$\nabla_\mathbf{y} \phi_{\tilde\sigma}(\mathbf{y}-\mathbf{x})=-(\mathbf{y}-\mathbf{x})\phi_{\tilde\sigma}(\mathbf{y}-\mathbf{x})/\tilde\sigma^2$, a standard quotient-rule calculation gives
\[
\nabla_\mathbf{y} m(\mathbf{y})=\frac{1}{\tilde\sigma^2}\Big(\E[\mathbf{x}_0\mathbf{x}_0^\top\mid \mathbf{y}]-\E[\mathbf{x}_0\mid \mathbf{y}]\, \E[\mathbf{x}_0\mid \mathbf{y}]^\top\Big)
=\frac{1}{\tilde\sigma^2}\,C(\mathbf{y})\, .
\]
This immediately yields $C(\mathbf{y})=\tilde\sigma^2\,\nabla_\mathbf{y} m(\mathbf{y})$.

To obtain \eqref{eq:app-cov-hess}, differentiate the Tweedie formula  $m(\mathbf{y})=\mathbf{y}+\tilde\sigma^2\, \nabla_\mathbf{y}\log\tilde p(\mathbf{y})$ to get
$\nabla_\mathbf{y} m(\mathbf{y})=\mathbf{I}+\tilde\sigma^2\,\nabla_\mathbf{y}^2\log\tilde p(\mathbf{y})$ and multiply both sides by $\tilde\sigma^2$.
\end{proof}

\paragraph{DDPM scaling.}
For the forward sample  $\mathbf{x}_t=\sqrt{\bar\alpha_t}\,\mathbf{x}_0+\sigma_t\,\boldsymbol{\epsilon}$, define the rescaled variable  $\mathbf{y}_t=\mathbf{x}_t/\sqrt{\bar\alpha_t}$.
Then $\tilde\sigma_t=\sigma_t/\sqrt{\bar\alpha_t}$ and the same identities hold for the posterior
of $\mathbf{x}_0\mid \mathbf{x}_t$ after change of variables.

\subsection{Lipschitzness and contractivity of the Bayes denoiser}

The identity $C(\mathbf{y})=\tilde\sigma^2\, \nabla_\mathbf{y} m(\mathbf{y})$ immediately provides Lipschitz control of the posterior mean. Such covariance bounds hold, for example, under (strong) log-concavity of the prior via Brascamp–Lieb inequalities~\cite{brascamp1976bl}.

\begin{lemma}[Covariance bound implies Lipschitz denoiser]
\label{lem:app-lip-denoiser}
Let $\Omega \subset \mathbb{R}^d$ be a region where the posterior covariance satisfies $\|C(\mathbf{y})\|_{\mathrm{op}}\le \rho\,\tilde\sigma^2$ for all $\mathbf{y} \in \Omega$, then the posterior 
mean $m(.)$ satisfies $\|\nabla_\mathbf{y} m(\mathbf{y})\|_{\mathrm{op}}\le \rho$ for all $\mathbf{y}\in\Omega$ and is therefore $\rho$-Lipschitz on $\Omega$.
\end{lemma}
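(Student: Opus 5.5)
The proof reduces to the mean value inequality applied to $m$ along line segments in $\Omega$. I will first use the identity of Lemma~\ref{lem:app-postcov-hess}, $C(\mathbf{y})=\tilde\sigma^2\,\nabla_\mathbf{y} m(\mathbf{y})$. Dividing by $\tilde\sigma^2>0$ gives $\nabla_\mathbf{y} m(\mathbf{y})=\tilde\sigma^{-2}C(\mathbf{y})$. Homogeneity of the operator norm then yields $\|\nabla_\mathbf{y} m(\mathbf{y})\|_{\mathrm{op}}=\tilde\sigma^{-2}\|C(\mathbf{y})\|_{\mathrm{op}}\le\rho$ for every $\mathbf{y}\in\Omega$. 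This already establishes the first claim.

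For the Lipschitz claim I will take $\mathbf{y},\mathbf{y}'\in\Omega$ and define $\gamma(\tau)=\mathbf{y}'+\tau(\mathbf{y}-\mathbf{y}')$ for $\tau\in[0,1]$. Smoothness of $m$ comes from differentiating under the integral against the Gaussian kernel, the same operation used in the proof of Lemma~\ref{lem:app-postcov-hess}. The fundamental theorem of calculus then gives
\[
m(\mathbf{y})-m(\mathbf{y}')=\int_0^1 \nabla_\mathbf{y} m(\gamma(\tau))\,(\mathbf{y}-\mathbf{y}')\,d\tau .
\]
Taking norms and bounding the integrand pointwise by $\rho\|\mathbf{y}-\mathbf{y}'\|_2$ yields $\|m(\mathbf{y})-m(\mathbf{y}')\|_2\le\rho\|\mathbf{y}-\mathbf{y}'\|_2$.

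The main obstacle is that this pointwise bound needs the whole segment $\gamma([0,1])$ to lie in $\Omega$. The lemma as stated does not say this, so I will make it explicit by assuming $\Omega$ is convex, which is the natural reading of ``region'' here.

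If $\Omega$ is not convex, I would first try to prove Lipschitzness with respect to the intrinsic path metric of $\Omega$, via the same integral along piecewise-linear paths. Alternatively, I would keep the local statement that $m$ is $\rho$-Lipschitz on every convex subset of $\Omega$. Since $C(\mathbf{y})$ is symmetric positive semidefinite, its operator norm equals its largest eigenvalue, which matches how the hypothesis is phrased.
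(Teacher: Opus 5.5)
Your proposal is correct and follows the paper's argument. The paper also uses $C(\mathbf{y})=\tilde\sigma^2\,\nabla_\mathbf{y} m(\mathbf{y})$ and takes operator norms to get $\|\nabla_\mathbf{y} m(\mathbf{y})\|_{\mathrm{op}}\le\rho$ on $\Omega$, but then simply asserts the Lipschitz property without the segment integral you supply. Your convexity caveat points to a genuine gap in the paper's statement, not over-caution: for the two-point prior $\tfrac12(\delta_{-a}+\delta_{a})$ one has $m(y)=a\tanh(ay/\tilde\sigma^2)$, and on $\Omega=\{|y|\ge b\}$ the covariance bound holds with $\rho$ exponentially small in $b$ while $|m(b)-m(-b)|/(2b)\approx a/b$. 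So the Lipschitz conclusion should indeed be read on convex subsets of $\Omega$ or with respect to its intrinsic path metric.
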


\begin{proof}
Using the identity $C(\mathbf{y})=\tilde\sigma^2\,\nabla_\mathbf{y} m(\mathbf{y})$, and taking operator norms, we have
$\|\nabla_\mathbf{y} m(\mathbf{y})\|_{\mathrm{op}}=\|C(\mathbf{y})\|_{\mathrm{op}}/\tilde\sigma^2\le \rho$, which establishes the Lipschitz bound.
\end{proof}

For directional contraction—used in the cross-backbone “normal-to-manifold” argument—we isolate a normal direction $\mathbf{n}$ and assume contraction along that direction.

\begin{assumption}[Directional contraction of the denoiser]~\cite{dalalyan2017sampling,durmus2019ula}
\label{ass:app-dir-contract}
There exists $\kappa\in(0,1]$ such that, for all $\mathbf{y},\mathbf{y}'$ in the tube,
\begin{equation}
\langle m(\mathbf{y})-m(\mathbf{y}'),\,\mathbf{y}-\mathbf{y}'\rangle \le (1-\kappa)\,\|\mathbf{y}-\mathbf{y}'\|_2^2
\qquad\text{whenever }(\mathbf{y}-\mathbf{y}')\parallel \mathbf{n}.
\label{eq:app-dir-contract}
\end{equation}
\end{assumption}

A sufficient condition is (locally, a.e.) a bound on the directional derivative along $\mathbf{n}$ in the tube: 
$\langle \mathbf{n},(\nabla_\mathbf{y} m(\mathbf{y})) \,\mathbf{n}\rangle\le 1-\kappa$. Using the covariance–Jacobian identity \eqref{eq:app-cov-jac}, this is equivalent to
$\langle \mathbf{n},C(\mathbf{y})\mathbf{n}\rangle\le (1-\kappa)\, \tilde\sigma^2$.

\subsection{From denoiser contraction to directional growth of the score}
This is the key step used to justify the main-text condition \eqref{eq-dir}. We work in additive coordinates, $\mathbf{y}=\mathbf{x}_0+\tilde\sigma\,\boldsymbol{\epsilon}$. From the Tweedie formula, $\tilde{\mathbf{s}}(\mathbf{y}) \coloneqq \nabla_\mathbf{y}\log\tilde p(\mathbf{y})$, 
Let $\mathbf{y}'$ denote a projection point~\cite{alain2014regularizedAEs} (e.g., $\mathbf{y}' = \pi(\mathbf{y}))$ and define $\mathbf{v} = \mathbf{y} - \mathbf{y}'$.

\begin{lemma}[Directional growth of the ideal score]
\label{lem:app-dir-score-star}
Assume \eqref{eq:app-dir-contract} holds for $\mathbf{y},\mathbf{y}'$ with $\mathbf{v}\parallel \mathbf{n}$.
Then
\begin{equation}
\Big\langle \tilde{\mathbf{s}}(\mathbf{y})-\tilde{\mathbf{s}}(\mathbf{y}'),\,\frac{\mathbf{v}}{\|\mathbf{v}\|}\Big\rangle
\le -\frac{\kappa}{\tilde\sigma^2}\,\|\mathbf{v}\|.
\label{eq:app-dir-score-star}
\end{equation}
\end{lemma}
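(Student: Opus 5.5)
The plan is to express the ideal score through the Bayes denoiser via the additive Tweedie formula (Lemma~\ref{lem:app-tweedie-add}), and then read off the directional inequality from the contraction assumption~\eqref{eq:app-dir-contract}. No further analytic input should be needed; the argument is a short algebraic rearrangement.

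First, rearrange Lemma~\ref{lem:app-tweedie-add} as
\[
\tilde{\mathbf{s}}(\mathbf{y})=\frac{m(\mathbf{y})-\mathbf{y}}{\tilde\sigma^2}.
\]
Subtracting the same identity at $\mathbf{y}'$ and using $\mathbf{v}=\mathbf{y}-\mathbf{y}'$ gives
\[
\tilde{\mathbf{s}}(\mathbf{y})-\tilde{\mathbf{s}}(\mathbf{y}')=\frac{1}{\tilde\sigma^2}\Big(m(\mathbf{y})-m(\mathbf{y}')-\mathbf{v}\Big).
\]
Next, take the inner product of this difference with $\mathbf{v}$ to obtain
\[
\langle \tilde{\mathbf{s}}(\mathbf{y})-\tilde{\mathbf{s}}(\mathbf{y}'),\mathbf{v}\rangle=\frac{1}{\tilde\sigma^2}\Big(\langle m(\mathbf{y})-m(\mathbf{y}'),\mathbf{v}\rangle-\|\mathbf{v}\|_2^2\Big).
\]

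Since $\mathbf{v}\parallel\mathbf{n}$ and both points lie in the tube, Assumption~\ref{ass:app-dir-contract} applies and bounds the first term by $(1-\kappa)\|\mathbf{v}\|_2^2$. The right-hand side is then at most $-\kappa\|\mathbf{v}\|_2^2/\tilde\sigma^2$. Dividing by $\|\mathbf{v}\|>0$ yields \eqref{eq:app-dir-score-star}. If $\mathbf{v}=0$, the normalized direction is undefined and the statement is vacuous, so we may assume $\mathbf{v}\neq 0$.

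The only point requiring care, and the step I would check most closely, is the scope of the hypotheses. The assumption must be invoked exactly for the pair $(\mathbf{y},\pi(\mathbf{y}))$, which requires $\pi(\mathbf{y})$ to lie in the tube and $\mathbf{y}-\pi(\mathbf{y})$ to be parallel to the normal direction $\mathbf{n}$. This is precisely the setting of the lemma's statement, so nothing further needs to be verified.

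For use in \eqref{eq-dir} one would optionally also convert back to DDPM coordinates. By \eqref{eq:app-score-rescale} and $\mathbf{y}_t=\mathbf{x}_t/\sqrt{\bar\alpha_t}$, the bound transfers to $\mathbf{s}_t$ with constant $m_t=\kappa/\sigma_t^2$, since the scaling factors $\sqrt{\bar\alpha_t}$ cancel against $\tilde\sigma_t^2=\sigma_t^2/\bar\alpha_t$. This conversion is not part of the lemma itself.
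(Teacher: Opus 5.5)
Your proposal is correct and follows the paper's proof step for step. You rewrite $\tilde{\mathbf{s}}(\mathbf{y})=(m(\mathbf{y})-\mathbf{y})/\tilde\sigma^2$ via Tweedie, take the inner product of the score difference with $\mathbf{v}$, apply the directional contraction assumption, and divide by $\|\mathbf{v}\|$; your side remarks on the $\mathbf{v}=\mathbf{0}$ case and the DDPM-coordinate constant $m_t=\kappa/\sigma_t^2$ are also correct, though not needed for the lemma.
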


\begin{proof}
Using $\tilde{\mathbf{s}}(\mathbf{y})=(m(\mathbf{y})-\mathbf{y})/\tilde\sigma^2$, we have
\[
\tilde{\mathbf{s}}(\mathbf{y})-\tilde{\mathbf{s}}(\mathbf{y}')
=\frac{(m(\mathbf{y})-m(\mathbf{y}'))-(\mathbf{y}-\mathbf{y}')}{\tilde\sigma^2}\, .
\]
Taking the inner product with $\mathbf{v} = \mathbf{y}-\mathbf{y}'$ gives:
\[
\langle \tilde{\mathbf{s}}(\mathbf{y})-\tilde{\mathbf{s}}(\mathbf{y}'),\,\mathbf{v}\rangle
=\frac{\langle m(\mathbf{y})-m(\mathbf{y}'),\mathbf{v}\rangle-\|\mathbf{v}\|^2}{\tilde\sigma^2}
\le -\frac{\kappa}{\tilde\sigma^2}\,\|\mathbf{v}\|^2\, ,
\]
by Assumption~\ref{ass:app-dir-contract}. Dividing both sides by $\|\mathbf{v}\|$ yields \eqref{eq:app-dir-score-star}.
\end{proof}

\subsection{From $s$ to $\mathbf{s}_\theta$ (approximation on a tube)}
Let $\mathbf{s}_\theta$ be a learned score that approximates the source score on a tube:
\[
\sup_{\mathbf{y}\in\Omega}\|\mathbf{s}_\theta(\mathbf{y})-\tilde{\mathbf{s}}(\mathbf{y})\|\le \delta\, .
\]
Then the directional inequality transfers with a slack.

\begin{lemma}[Directional growth for $\mathbf{s}_\theta$]
\label{lem:app-dir-score-theta}
Under the above uniform approximation, for $\mathbf{v}=\mathbf{y}-\mathbf{y}'$,
\begin{equation}
\Big\langle \mathbf{s}_\theta(\mathbf{y})-\mathbf{s}_\theta(\mathbf{y}'),\,\frac{\mathbf{v}}{\|\mathbf{v}\|}\Big\rangle
\le -\frac{\kappa}{\tilde\sigma^2}\|\mathbf{v}\| + 2\delta\, .
\label{eq:app-dir-score-theta-raw}
\end{equation}
In particular, if $\|\mathbf{v}\|\ge \displaystyle\frac{4\tilde\sigma^2}{\kappa}\delta$, then
\begin{equation}
\Big\langle \mathbf{s}_\theta(\mathbf{y})-\mathbf{s}_\theta(\mathbf{y}'),\,\frac{\mathbf{v}}{\|\mathbf{v}\|}\Big\rangle
\le -\underline m\,\|\mathbf{v}\|,
\qquad \underline m\coloneqq \frac{\kappa}{2\tilde\sigma^2}\, .
\label{eq:app-dir-score-theta}
\end{equation}
\end{lemma}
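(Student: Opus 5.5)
The plan is to write $\mathbf{s}_\theta = \tilde{\mathbf{s}} + (\mathbf{s}_\theta-\tilde{\mathbf{s}})$ at both points $\mathbf{y}$ and $\mathbf{y}'$. This splits the directional increment into an ideal-score part, controlled by Lemma~\ref{lem:app-dir-score-star}, and an error part, controlled by the uniform tube bound $\delta$. Concretely, with $\mathbf{u}\coloneqq\mathbf{v}/\|\mathbf{v}\|$ a unit vector, I would write
\[
\big\langle \mathbf{s}_\theta(\mathbf{y})-\mathbf{s}_\theta(\mathbf{y}'),\mathbf{u}\big\rangle
=\big\langle \tilde{\mathbf{s}}(\mathbf{y})-\tilde{\mathbf{s}}(\mathbf{y}'),\mathbf{u}\big\rangle
+\big\langle \mathbf{e}(\mathbf{y}),\mathbf{u}\big\rangle-\big\langle \mathbf{e}(\mathbf{y}'),\mathbf{u}\big\rangle,
\]
where $\mathbf{e}\coloneqq\mathbf{s}_\theta-\tilde{\mathbf{s}}$.

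The first term is at most $-\kappa\|\mathbf{v}\|/\tilde\sigma^2$ by Lemma~\ref{lem:app-dir-score-star}. This is legitimate because we are under the same hypotheses: $\mathbf{v}\parallel\mathbf{n}$ and Assumption~\ref{ass:app-dir-contract} holds on the tube. For each error term, Cauchy--Schwarz with $\|\mathbf{u}\|=1$ gives $|\langle \mathbf{e}(\cdot),\mathbf{u}\rangle|\le\|\mathbf{e}(\cdot)\|\le\delta$, since both $\mathbf{y}$ and $\mathbf{y}'$ lie in $\Omega$. Summing the three bounds gives \eqref{eq:app-dir-score-theta-raw}.

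For the second claim, assume $\|\mathbf{v}\|\ge 4\tilde\sigma^2\delta/\kappa$. Then $2\delta\le \kappa\|\mathbf{v}\|/(2\tilde\sigma^2)$, so
\[
-\frac{\kappa}{\tilde\sigma^2}\|\mathbf{v}\|+2\delta\le-\frac{\kappa}{2\tilde\sigma^2}\|\mathbf{v}\| .
\]
This is \eqref{eq:app-dir-score-theta} with $\underline m=\kappa/(2\tilde\sigma^2)$. The threshold is exactly the one that lets the slack absorb half of the contraction.

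There is no real obstacle here, only bookkeeping. The one point to state explicitly is that both $\mathbf{y}$ and the projection point $\mathbf{y}'$ must lie in the tube $\Omega$, so that the uniform bound $\delta$ and the contraction assumption apply at both points. I would also note that $\mathbf{s}_\theta$ here means the learned score expressed in the rescaled $\mathbf{y}$-coordinates.
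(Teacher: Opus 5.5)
Your proposal is correct and follows the paper's proof essentially line for line. The paper likewise writes $\mathbf{s}_\theta=\tilde{\mathbf{s}}+\boldsymbol{\xi}$ with $\|\boldsymbol{\xi}\|\le\delta$, applies Lemma~\ref{lem:app-dir-score-star} to the ideal part, bounds the error contribution by $\|\boldsymbol{\xi}(\mathbf{y})\|+\|\boldsymbol{\xi}(\mathbf{y}')\|\le 2\delta$, and uses the threshold $\|\mathbf{v}\|\ge 4\tilde\sigma^2\delta/\kappa$ to absorb $2\delta$ into half of the contraction term; your remark that both points must lie in $\Omega$ with $\mathbf{v}\parallel\mathbf{n}$ states a hypothesis the paper leaves implicit.
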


\begin{proof}
Decompose $\mathbf{s}_\theta(\mathbf{y})=\tilde{\mathbf{s}}(\mathbf{y})+\boldsymbol{\xi}(\mathbf{y})$ with $\|\boldsymbol{\xi}(\mathbf{y})\|\le\delta$:
\[
\langle \mathbf{s}_\theta(\mathbf{y})-\mathbf{s}_\theta(\mathbf{y}'),\mathbf{v}/\|\mathbf{v}\|\rangle
=
\langle \tilde{\mathbf{s}}(\mathbf{y})-\tilde{\mathbf{s}}(\mathbf{y}'),\mathbf{v}/\|\mathbf{v}\|\rangle
+\langle \boldsymbol{\xi}(\mathbf{y})-\boldsymbol{\xi}(\mathbf{y}'),\mathbf{v}/\|\mathbf{v}\|\rangle\, .
\]
By Lemma~\ref{lem:app-dir-score-star} and $|\langle \xi(\mathbf{y})-\xi(\mathbf{y}'),\cdot\rangle|\le \|\xi(\mathbf{y})\|+\|\xi(\mathbf{y}')\|\le 2\delta$,
we get \eqref{eq:app-dir-score-theta-raw}. If $\|\mathbf{v}\|\ge 4\tilde\sigma^2\delta/\kappa$ then $2\delta\le (\kappa/(2\tilde\sigma^2))\,\|\mathbf{v}\|$
and \eqref{eq:app-dir-score-theta} follows.
\end{proof}

\paragraph{Connection to the main-text condition \eqref{eq-dir}.}
In the main text, a projection $\pi_t$ onto a source manifold $\mathcal{M}_t$ is defined in $\mathbf{x}_t$-space.
Applying the previous derivation in the rescaled additive coordinates  $\mathbf{y}_t=\mathbf{x}_t/\sqrt{\bar\alpha_t}$ yields \eqref{eq-dir} with explicit definitions of $m_t$ and $d_{0,t}$ up to the scaling $\tilde\sigma_t=\sigma_t/\sqrt{\bar\alpha_t}$.

%%%%%%%%%%%%%%%%%%%%%%%%%%%%%%%%%%%%%%%%%%%%%%%%%%%%%%%%%%%%%%%%%%%%%%
\section{GEPC theory: detailed proofs and cross-backbone geometry}
\label{app:gepc-theory}
%%%%%%%%%%%%%%%%%%%%%%%%%%%%%%%%%%%%%%%%%%%%%%%%%%%%%%%%%%%%%%%%%%%%%%

\subsection{Invariance of a distribution and score equivariance}
\label{app:inv_eq}

We work with a finite group $\mathcal{G}$ acting on $\mathbb{R}^d$ via orthogonal matrices $\mathcal{P}_g$,
so $\mathcal{P}_g^{-1}=\mathcal{P}_g^\top$ and $|\det \mathcal{P}_g|=1$.

\begin{lemma}[Invariance $\Leftrightarrow$ score equivariance]
\label{lem:inv_eq}
Let $p$ be a positive $C^1$ density on $\mathbb{R}^d$ with score $\mathbf{s}_p(\mathbf{x})=\nabla_{\mathbf{x}}\log p(\mathbf{x})$.
Then the following are equivalent:
\begin{enumerate}
\item[(i)] $p(\mathcal{P}_g\mathbf{x})=p(\mathbf{x})$ for all $g\in\mathcal{G}$ and all $\mathbf{x}\in\mathbb{R}^d$;
\item[(ii)] $\mathbf{s}_p(\mathcal{P}_g\mathbf{x})=\mathcal{P}_g\mathbf{s}_p(\mathbf{x})$ for all $g\in\mathcal{G}$ and all $\mathbf{x}\in\mathbb{R}^d$.
\end{enumerate}
\end{lemma}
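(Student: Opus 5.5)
The plan is to prove the two implications separately, working with $\log p$, which is well defined and $C^1$ because $p$ is positive and $C^1$.

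\emph{(i) $\Rightarrow$ (ii).} Fix $g\in\mathcal{G}$ and differentiate the identity $\log p(\mathcal{P}_g\mathbf{x})=\log p(\mathbf{x})$ in $\mathbf{x}$. The chain rule gives $\mathcal{P}_g^\top \mathbf{s}_p(\mathcal{P}_g\mathbf{x})=\mathbf{s}_p(\mathbf{x})$. Since $\mathcal{P}_g$ is orthogonal, $\mathcal{P}_g^\top=\mathcal{P}_g^{-1}$, so multiplying on the left by $\mathcal{P}_g$ yields $\mathbf{s}_p(\mathcal{P}_g\mathbf{x})=\mathcal{P}_g\mathbf{s}_p(\mathbf{x})$ for all $\mathbf{x}$.

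\emph{(ii) $\Rightarrow$ (i).} Fix $g$ and set $h(\mathbf{x})\coloneqq \log p(\mathcal{P}_g\mathbf{x})-\log p(\mathbf{x})$, which is $C^1$ on $\mathbb{R}^d$. By the same chain-rule computation together with (ii),
\[
\nabla h(\mathbf{x})=\mathcal{P}_g^\top \mathbf{s}_p(\mathcal{P}_g\mathbf{x})-\mathbf{s}_p(\mathbf{x})=\mathcal{P}_g^\top\mathcal{P}_g\mathbf{s}_p(\mathbf{x})-\mathbf{s}_p(\mathbf{x})=0 .
\]
Because $\mathbb{R}^d$ is connected, $h$ is constant, so $p(\mathcal{P}_g\mathbf{x})=c_g\,p(\mathbf{x})$ for some constant $c_g>0$.

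The only non-routine step is showing $c_g=1$, and normalization handles it. Integrate both sides over $\mathbb{R}^d$. The substitution $\mathbf{y}=\mathcal{P}_g\mathbf{x}$ has $|\det\mathcal{P}_g|=1$, so $\int p(\mathcal{P}_g\mathbf{x})\,d\mathbf{x}=\int p(\mathbf{y})\,d\mathbf{y}=1$. The right-hand side integrates to $c_g$, hence $c_g=1$.

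Alternatively, one can avoid integration by using finiteness of the group. Iterating the relation gives $p(\mathcal{P}_g^k\mathbf{x})=c_g^k\,p(\mathbf{x})$, and taking $k$ equal to the order of $g$ gives $c_g^k=1$, so $c_g=1$. Either argument yields (i) for every $g\in\mathcal{G}$, which completes the equivalence.
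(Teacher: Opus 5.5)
Your proof is correct and matches the paper's argument: the chain rule gives (i)$\Rightarrow$(ii), and for the converse the gradient of $\log p(\mathcal{P}_g\mathbf{x})-\log p(\mathbf{x})$ vanishes, and integrating with $|\det\mathcal{P}_g|=1$ forces the constant to be trivial. Your explicit appeal to the connectedness of $\mathbb{R}^d$ and your finite-order alternative are valid small additions not in the paper; the finite-order route avoids normalization, so it also covers unnormalized positive densities.
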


\begin{proof}
(i)$\Rightarrow$(ii). If $p(\mathcal{P}_g\mathbf{x})=p(\mathbf{x})$, then $\log p(\mathcal{P}_g\mathbf{x})=\log p(\mathbf{x})$.
Differentiating w.r.t.\ $\mathbf{x}$ and using the chain rule gives
\[
\nabla_{\mathbf{x}}\log p(\mathcal{P}_g\mathbf{x})
=\mathcal{P}_g^\top \nabla_{\mathbf{y}}\log p(\mathbf{y})\big|_{\mathbf{y}=\mathcal{P}_g\mathbf{x}}
=\mathcal{P}_g^\top \mathbf{s}_p(\mathcal{P}_g\mathbf{x}).
\]
The left-hand side equals $\nabla_{\mathbf{x}}\log p(\mathbf{x})=\mathbf{s}_p(\mathbf{x})$, hence
$\mathbf{s}_p(\mathbf{x})=\mathcal{P}_g^\top \mathbf{s}_p(\mathcal{P}_g\mathbf{x})$, i.e.\ $\mathbf{s}_p(\mathcal{P}_g\mathbf{x})=\mathcal{P}_g\mathbf{s}_p(\mathbf{x})$.

(ii)$\Rightarrow$(i). Assume $\mathbf{s}_p(\mathcal{P}_g\mathbf{x})=\mathcal{P}_g\mathbf{s}_p(\mathbf{x})$.
Define $h_g(\mathbf{x})\coloneqq \log p(\mathcal{P}_g\mathbf{x})-\log p(\mathbf{x})$.
Then
\[
\nabla_{\mathbf{x}} h_g(\mathbf{x})
=\mathcal{P}_g^\top \mathbf{s}_p(\mathcal{P}_g\mathbf{x})-\mathbf{s}_p(\mathbf{x})
=\mathcal{P}_g^\top \mathcal{P}_g\mathbf{s}_p(\mathbf{x})-\mathbf{s}_p(\mathbf{x})
=\mathbf{0},
\]
so $h_g(\mathbf{x})$ is constant in $\mathbf{x}$: $h_g(\mathbf{x})=c_g$. Hence
$p(\mathcal{P}_g\mathbf{x})=e^{c_g}p(\mathbf{x})$.
Integrating both sides over $\mathbb{R}^d$ and using $|\det \mathcal{P}_g|=1$ yields
$1=\displaystyle\int p(\mathcal{P}_g\mathbf{x})d\mathbf{x}=e^{c_g}\int p(\mathbf{x})d\mathbf{x}=e^{c_g}$, so $c_g=0$ and $p(\mathcal{P}_g\mathbf{x})=p(\mathbf{x})$.
\end{proof}

\subsection{Residual decomposition and expectation bounds}
\label{app:gepc-bounds}

Recall the residual operator from \eqref{eq:delta-def}:
\[
\Delta_g f(\mathbf{x},t)\coloneqq \mathcal{P}_g^{-1} f(\mathcal{P}_g\mathbf{x},t)-f(\mathbf{x},t).
\]
For orthogonal transforms, $\mathcal{P}_g^{-1}=\mathcal{P}_g^\top$ and $\|\mathcal{P}_g^{-1}\mathbf{v}\|_2=\|\mathbf{v}\|_2$.
For a backbone score $\mathbf{s}_\theta(\cdot,t)$ we define
\begin{equation}
R_t(\mathbf{x},g)\coloneqq \|\Delta_g \mathbf{s}_\theta(\mathbf{x},t)\|_2^2.
\label{eq:def-Rt}
\end{equation}

Fix any absolutely continuous test marginal $p_t$ with score
$\mathbf{s}_{p_t}(\mathbf{x})\coloneqq \nabla_{\mathbf{x}}\log p_t(\mathbf{x})$
and define the score error
$\mathbf{e}_{p_t}(\mathbf{x},t)\coloneqq \mathbf{s}_\theta(\mathbf{x},t)-\mathbf{s}_{p_t}(\mathbf{x})$.
Then for all $\mathbf{x},g$,
\begin{equation}
\Delta_g \mathbf{s}_\theta(\mathbf{x},t)
=
\Delta_g \mathbf{s}_{p_t}(\mathbf{x},t)
+
\Delta_g \mathbf{e}_{p_t}(\mathbf{x},t),
\qquad
R_t(\mathbf{x},g)
=
\|\Delta_g \mathbf{s}_{p_t}(\mathbf{x},t)+\Delta_g \mathbf{e}_{p_t}(\mathbf{x},t)\|_2^2.
\label{eq:app-decomp}
\end{equation}

We also recall
\begin{equation}
\mathcal{B}^{(\mathcal{G})}(p_t)\coloneqq
\mathbb{E}_{\mathbf{x}\sim p_t,\,g\sim\nu_\mathcal{G}}
\big[\|\Delta_g \mathbf{s}_{p_t}(\mathbf{x},t)\|_2^2\big],
\label{eq:app-symbreak}
\end{equation}
and define
\begin{equation}
\Delta_E(p_t,t)
\;\coloneqq\;
\mathbb{E}_{\mathbf{x}\sim p_t,\,g\sim\nu_\mathcal{G}}
\Big[
\|\mathbf{e}_{p_t}(\mathcal{P}_g\mathbf{x},t) - \mathbf{e}_{p_t} (\mathbf{x},t)\|_2^2
\Big].
\label{eq:app-DeltaE}
\end{equation}

\paragraph{Proof of Proposition~\ref{prop-bounds}.}
Expanding the squared norm in \eqref{eq:app-decomp} gives
\begin{align}
R_t(\mathbf{x},g)
&=
\|\Delta_g \mathbf{s}_{p_t}(\mathbf{x},t)\|_2^2
+
\|\Delta_g \mathbf{e}_{p_t}(\mathbf{x},t)\|_2^2
+
2\big\langle \Delta_g \mathbf{s}_{p_t}(\mathbf{x},t),\,\Delta_g \mathbf{e}_{p_t}(\mathbf{x},t)\big\rangle.
\label{eq:app-expand}
\end{align}

\emph{Upper bound.}
Using Cauchy--Schwarz and inequality $2\langle \mathbf{a},\mathbf{b}\rangle\le \|\mathbf{a}\|_2^2+\|\mathbf{b}\|_2^2$, for any vectors $\mathbf{a}$ and $\mathbf{b}$,  in \eqref{eq:app-expand} leads to
\[
R_t(\mathbf{x},g)
\le
2\|\Delta_g \mathbf{s}_{p_t}(\mathbf{x},t)\|_2^2
+
2\|\Delta_g \mathbf{e}_{p_t}(\mathbf{x},t)\|_2^2.
\]
Taking expectation over $\mathbf{x}\sim p_t$ and $g\sim\nu_\mathcal{G}$ yields
\[
\mathbb{E}[R_t(\mathbf{x},g)]
\le
2\mathcal{B}^{(\mathcal{G})}(p_t)
+
2\,\mathbb{E}\|\Delta_g \mathbf{e}_{p_t}(\mathbf{x},t)\|_2^2.
\]
Finally, since
$\Delta_g \mathbf{e}_{p_t}(\mathbf{x},t)=\mathcal{P}_g^{-1}\mathbf{e}_{p_t}(\mathcal{P}_g\mathbf{x},t)-\mathbf{e}_{p_t}(\mathbf{x},t)$
and $\|\mathbf{u}-\mathbf{v}\|_2^2\le 2\|\mathbf{u}\|_2^2+2\|\mathbf{v}\|_2^2$,
\[
\|\Delta_g \mathbf{e}_{p_t}(\mathbf{x},t)\|_2^2
\le
2\|\mathbf{e}_{p_t}(\mathcal{P}_g\mathbf{x},t)\|_2^2
+
2\|\mathbf{e}_{p_t}(\mathbf{x},t)\|_2^2,
\]
which gives the stated $u_b(p_t)$ in Proposition~\ref{prop-bounds}.

\emph{Lower bound.}
From \eqref{eq:app-expand} and Cauchy--Schwarz,
\[
R_t(\mathbf{x},g)
\ge
\|\Delta_g \mathbf{s}_{p_t}(\mathbf{x},t)\|_2^2
+
\|\Delta_g \mathbf{e}_{p_t}(\mathbf{x},t)\|_2^2
-
2\|\Delta_g \mathbf{s}_{p_t}(\mathbf{x},t)\|_2\,\|\Delta_g \mathbf{e}_{p_t}(\mathbf{x},t)\|_2.
\]
Taking expectation and applying Cauchy--Schwarz to the cross term yields
\[
\mathbb{E}[R_t(\mathbf{x}, g)]
\ge
\mathcal{B}^{(\mathcal{G})}(p_t)
+
\mathbb{E}\|\Delta_g \mathbf{e}_{p_t}(\mathbf{x},t)\|_2^2
-
2\sqrt{\mathcal{B}^{(\mathcal{G})}(p_t)}\,
\sqrt{\mathbb{E}\|\Delta_g \mathbf{e}_{p_t}(\mathbf{x},t)\|_2^2}.
\]
Noting that $\Delta_g \mathbf{e}_{p_t}(\mathbf{x},t)=\mathcal{P}_g^{-1}\mathbf{e}_{p_t}(\mathcal{P}_g\mathbf{x},t)-\mathbf{e}_{p_t}(\mathbf{x},t)$
and $\|\mathcal{P}_g^{-1}\mathbf{v}\|_2=\|\mathbf{v}\|_2$, we obtain
$\mathbb{E}\|\Delta_g \mathbf{e}_{p_t}(\mathbf{x},t)\|_2^2=\Delta_E(p_t,t)$, which yields the lower bound of proposition ~\ref{prop-bounds}.
\qed

\subsection{Cross-backbone bounds: proof of Proposition~\ref{prop-cross-bounds}}
\label{app:cross-backbone-proof}

Fix $t$ and consider $\mathbf{x}\in\mathcal{N}_t$. Let $\mathbf{z}=\pi_t(\mathbf{x})\in\mathcal{M}_t$, so that
$d_t(\mathbf{x})=\|\mathbf{x}-\mathbf{z}\|_2$.
Assume $\pi_t$ commutes with the group action: $\pi_t(\mathcal{P}_g\mathbf{x})=\mathcal{P}_g\mathbf{z}$, and $\mathcal{P}_g$ is orthogonal,
so $\|\mathcal{P}_g\mathbf{x}-\mathcal{P}_g\mathbf{z}\|_2=\|\mathbf{x}-\mathbf{z}\|_2=d_t(\mathbf{x})$.

Define the off-manifold deviation $\boldsymbol{\delta}(\mathbf{x})\coloneqq \mathbf{s}_\theta(\mathbf{x},t)-\mathbf{s}_\theta(\mathbf{z},t)$.
By Lipschitzness \eqref{eq-lip}, $\|\boldsymbol{\delta}(\mathbf{x})\|_2\le L_t d_t(\mathbf{x})$ and
$\|\boldsymbol{\delta}(\mathcal{P}_g\mathbf{x})\|_2\le L_t d_t(\mathbf{x})$.

\paragraph{Upper bound \eqref{eq:cross-upper}.}
Using add-and-subtract around $\mathbf{z}$ and $\mathcal{P}_g\mathbf{z}$:
\begin{align*}
\Delta_g \mathbf{s}_\theta(\mathbf{x},t)
&=\mathcal{P}_g^\top \mathbf{s}_\theta(\mathcal{P}_g\mathbf{x},t)-\mathbf{s}_\theta(\mathbf{x},t) \\
&=\underbrace{\big(\mathcal{P}_g^\top \mathbf{s}_\theta(\mathcal{P}_g\mathbf{z},t)-\mathbf{s}_\theta(\mathbf{z},t)\big)}_{\Delta_g \mathbf{s}_\theta(\mathbf{z},t)}
+\underbrace{\mathcal{P}_g^\top\big(\mathbf{s}_\theta(\mathcal{P}_g\mathbf{x},t)-\mathbf{s}_\theta(\mathcal{P}_g\mathbf{z},t)\big)
-\big(\mathbf{s}_\theta(\mathbf{x},t)-\mathbf{s}_\theta(\mathbf{z},t)\big)}_{\mathbf{b}_g(\mathbf{x})}.
\end{align*}
Thus,
$
R_t(\mathbf{x},g)=\|\Delta_g \mathbf{s}_\theta(\mathbf{z},t)+\mathbf{b}_g(\mathbf{x})\|_2^2.
$
Using the inequality $\|\mathbf{a}+\mathbf{b}\|_2^2\le 2\|\mathbf{a}\|_2^2+2\|\mathbf{b}\|_2^2$ for any vectors $\mathbf{a}$ and $\mathbf{b}$ gives
\[
R_t(\mathbf{x},g)\le 2R_t(\mathbf{z},g)+2\|\mathbf{b}_g(\mathbf{x})\|_2^2.
\]
Moreover, by the triangle inequality and orthogonality of $\mathcal{P}_g$,
\[
\|\mathbf{b}_g(\mathbf{x})\|_2
\le
\|\mathbf{s}_\theta(\mathcal{P}_g\mathbf{x},t)-\mathbf{s}_\theta(\mathcal{P}_g\mathbf{z},t)\|_2
+
\|\mathbf{s}_\theta(\mathbf{x},t)-\mathbf{s}_\theta(\mathbf{z},t)\|_2
\le
2L_t d_t(\mathbf{x}),
\]
so $\|\mathbf{b}_g(\mathbf{x})\|_2^2\le 4L_t^2 d_t(\mathbf{x})^2$ and therefore
\[
R_t(\mathbf{x},g)\le 2R_t(\mathbf{z},g)+8L_t^2 d_t(\mathbf{x})^2.
\]
Taking expectation over $g\sim\nu_{\mathcal{G}}$ yields \eqref{eq:cross-upper}.

\paragraph{Lower bound \eqref{eq-cross-lower}.}
Let $\mathbf{z}=\pi_t(\mathbf{x})$. By the reverse triangle inequality,
\[
\|\Delta_g \mathbf{s}_\theta(\mathbf{x},t)\|_2
=
\|\mathcal{P}_g^\top \mathbf{s}_\theta(\mathcal{P}_g\mathbf{x},t)-\mathbf{s}_\theta(\mathbf{x},t)\|_2
\ge
\|\mathbf{s}_\theta(\mathbf{x},t)-\mathbf{s}_\theta(\mathbf{z},t)\|_2
-
\|\mathcal{P}_g^\top \mathbf{s}_\theta(\mathcal{P}_g\mathbf{x},t)-\mathbf{s}_\theta(\mathbf{z},t)\|_2.
\]
Using $\|\mathcal{P}_g^\top \mathbf{u}-\mathbf{v}\|_2=\|\mathbf{u}-\mathcal{P}_g\mathbf{v}\|_2$ and adding/subtracting $\mathbf{s}_\theta(\mathcal{P}_g\mathbf{z},t)$,
\[
\|\mathcal{P}_g^\top \mathbf{s}_\theta(\mathcal{P}_g\mathbf{x},t)-\mathbf{s}_\theta(\mathbf{z},t)\|_2
=
\|\mathbf{s}_\theta(\mathcal{P}_g\mathbf{x},t)-\mathcal{P}_g\mathbf{s}_\theta(\mathbf{z},t)\|_2
\le
\|\mathbf{s}_\theta(\mathcal{P}_g\mathbf{x},t)-\mathbf{s}_\theta(\mathcal{P}_g\mathbf{z},t)\|_2
+
\|\Delta_g \mathbf{s}_\theta(\mathbf{z},t)\|_2
\le
L_t d_t(\mathbf{x})+\|\Delta_g \mathbf{s}_\theta(\mathbf{z},t)\|_2.
\]
Hence,
\[
\|\Delta_g \mathbf{s}_\theta(\mathbf{x},t)\|_2
\ge
\|\mathbf{s}_\theta(\mathbf{x},t)-\mathbf{s}_\theta(\mathbf{z},t)\|_2
-
L_t d_t(\mathbf{x})
-
\|\Delta_g \mathbf{s}_\theta(\mathbf{z},t)\|_2.
\]
If $d_t(\mathbf{x})\ge d_{0,t}$ and \eqref{eq-dir} holds, then
$\|\mathbf{s}_\theta(\mathbf{x},t)-\mathbf{s}_\theta(\mathbf{z},t)\|_2\ge m_t d_t(\mathbf{x})$ (since $\|v\|\ge |\langle v,u\rangle|$ for unit $u$).
Thus,
\[
\|\Delta_g \mathbf{s}_\theta(\mathbf{x},t)\|_2
\ge
(m_t-L_t)\,d_t(\mathbf{x}) - \|\Delta_g \mathbf{s}_\theta(\mathbf{z},t)\|_2.
\]
Let $a\coloneqq (m_t-L_t)\,d_t(\mathbf{x})$. %and $B_g\coloneqq \|\Delta_g \mathbf{s}_\theta(\mathbf{z},t)\|_2$.
Then $R_t(\mathbf{x},g)\ge \left(a-\|\Delta_g \mathbf{s}_\theta(\mathbf{z},t)\|_2\right)^2$, %where $(\cdot)_+=\max(\cdot,0)$.
define $\varphi(y)\coloneqq (a-y)^2$, which is convex; by Jensen's inequality \cite{boyd2004convex},
\[
\mathbb{E}_g R_t(\mathbf{x},g)
\ge
\mathbb{E}_g \varphi(\|\Delta_g \mathbf{s}_\theta(\mathbf{z},t)\|_2)
\ge
\varphi(\mathbb{E}_g \|\Delta_g \mathbf{s}_\theta(\mathbf{z},t)\|_2)
=
\big(a-\mathbb{E}_g \|\Delta_g \mathbf{s}_\theta(\mathbf{z},t)\|_2\big)^2.
\]
Finally, $\mathbb{E}_g \|\Delta_g \mathbf{s}_\theta(\mathbf{z},t)\|_2\le \sqrt{\mathbb{E}_g \|\Delta_g \mathbf{s}_\theta(\mathbf{z},t)\|_2^2}=\sqrt{\mathbb{E}_g R_t(\mathbf{z},g)}$ by Cauchy--Schwarz, yielding
\[
\mathbb{E}_g R_t(\mathbf{x},g)
\ge
\Big(a-\sqrt{\mathbb{E}_g R_t(\mathbf{z},g)}\Big)^2
=
\Big(\,(m_t-L_t)\,d_t(\mathbf{x}) - \sqrt{\mathbb{E}_g R_t(\pi_t(\mathbf{x}),g)}\,\Big)^2,
\]
% which is \eqref{eq-cross-lower}.
which proves the claimed lower bound in Proposition~\ref{prop-cross-bounds}.
\qed

%%%%%%%%%%%%%%%%%%%%%%%%%%%%%%%%%%%%%%%%%%%%%%%%%%%%%%%%%%%%%%%%%%%%%%
\section{Gaussian sanity checks (mean shift and $90^\circ$ rotation)}
\label{app:gaussian-sanity}
%%%%%%%%%%%%%%%%%%%%%%%%%%%%%%%%%%%%%%%%%%%%%%%%%%%%%%%%%%%%%%%%%%%%%%

We provide closed-form computations of the \emph{ideal} GEPC residual for a simple Gaussian, illustrating that GEPC captures equivariance-breaking information even when the score magnitude remains insensitive.

\subsection{Mean shift with $\mathcal{G}=\{\mathcal{I}d,-\mathcal{I}d\}$}

Let $p=\mathcal N(\boldsymbol{\mu},\sigma^2 \mathbf{I})$. Then
$s(\mathbf{x})=\nabla_{\mathbf{x}}\log p(\mathbf{x})=-(\mathbf{x}-\boldsymbol{\mu})/\sigma^2$.
For $\mathcal{G}=\{\mathcal{I}d,-\mathcal{I}d\}$, take $\mathcal{P}_{-\mathcal{I}d}=-\mathcal{I}d$.
For $g=-\mathcal{I}d$ (orthogonal, hence $\mathcal{P}_g^{-1}=\mathcal{P}_g^\top$):
\[
\Delta_g s(\mathbf{x})
=\mathcal{P}_g^{-1}s(\mathcal{P}_g\mathbf{x})-s(\mathbf{x})
=(-\mathcal{I})\,s(-\mathbf{x})-s(\mathbf{x})
= -\frac{2}{\sigma^2}\,\boldsymbol{\mu}.
\]
Hence
\[
R(\mathbf{x},g)\;=\;\left\|-\frac{2}{\sigma^2}\, \boldsymbol{\mu}\right\|_2^2
\;=\;\frac{4}{\sigma^4}\,\|\boldsymbol{\mu}\|_2^2,
\qquad
\mathbb{E}_{g\sim\nu_\mathcal{G}}\,R(\mathbf{x},g)\;=\;\frac{2}{\sigma^4}\,\|\boldsymbol{\mu}\|_2^2,
\]

since the $g=\mathcal{I}d$ term is $0$ and $\nu_\mathcal{G}$ is uniform.
Meanwhile,
$\E_{\mathbf{x}\sim u}\|s(\mathbf{x})\|_2^2=d/\sigma^2$ is independent of $\boldsymbol{\mu}$.
Thus, GEPC separates mean-shifts invisible to the score magnitude.

\subsection{Anisotropic covariance with $90^\circ$ rotations ($\mathcal{G}=C_4$)}

For $d=2$, let $\mathbf{p}=\mathcal N(\mathbf{0},\boldsymbol{\Sigma})$ and $\boldsymbol{\Sigma}=\mathrm{diag}(\sigma_1^2,\sigma_2^2)$. Then $s(\mathbf{x})=-\boldsymbol{\Sigma}^{-1} \, \mathbf{x}$.

Let $\mathcal{G}=C_4=\{\mathcal{I}d,\mathcal{R},\mathcal{R}^2,\mathcal{R}^3\}$ where
\[
\mathcal{R}=\begin{pmatrix}0&-1\\1&0\end{pmatrix},
\quad \mathcal{R}^2=-\mathcal{I}d,\quad \mathcal{R}^3=\mathcal{R}^\top.
\]
For any $g\in \mathcal{G}$,
\[
\Delta_g s(\mathbf{x})
=\mathcal{P}_g^{-1} s(\mathcal{P}_g \mathbf{x})-s(\mathbf{x})
= -\big(\mathcal{P}_g^\top\boldsymbol{\Sigma}^{-1}\mathcal{P}_g-\boldsymbol{\Sigma}^{-1}\big)\mathbf{x}.
\]
Since $C_4$ is orthogonal, $\mathcal{P}_g^{-1}=\mathcal{P}_g^\top$.

\paragraph{Compute $\mathcal{P}_g^\top\boldsymbol{\Sigma}^{-1}\mathcal{P}_g$.}
For $g=\mathcal{R}$,
\[
\mathcal{R}^\top\boldsymbol{\Sigma}^{-1}\mathcal{R}
=
\begin{pmatrix}0&1\\-1&0\end{pmatrix}
\begin{pmatrix}\sigma_1^{-2}&0\\0&\sigma_2^{-2}\end{pmatrix}
\begin{pmatrix}0&-1\\1&0\end{pmatrix}
=
\begin{pmatrix}\sigma_2^{-2}&0\\0&\sigma_1^{-2}\end{pmatrix}
\]
Therefore
\[
\mathcal{R}^\top\boldsymbol{\Sigma}^{-1}\mathcal{R}-\boldsymbol{\Sigma}^{-1}
=
\begin{pmatrix}\sigma_2^{-2}-\sigma_1^{-2}&0\\0&\sigma_1^{-2}-\sigma_2^{-2}\end{pmatrix}
=
\left(\sigma_2^{-2}-\sigma_1^{-2}\right)\,\begin{pmatrix}1&0\\0&-1\end{pmatrix}\,.
\]
Hence
\[
\Delta_\mathcal{R} s(\mathbf{x})
=-\left(\sigma_2^{-2}-\sigma_1^{-2}\right)\,\begin{pmatrix}1&0\\0&-1\end{pmatrix}\,\mathbf{x}\, ,
\]
and since $\left\|\begin{pmatrix}1&0\\0&-1\end{pmatrix}\, \mathbf{x}\right\|^2=\mathbf{x}_1^2+\mathbf{x}_2^2$,
\begin{equation}
R(\mathbf{x},\mathcal{R})
\;\coloneqq\;\|\Delta_{\mathcal{R}} s(\mathbf{x})\|_2^2
\;=\;\left(\sigma_2^{-2}-\sigma_1^{-2}\right)^2\left(\mathbf{x}_1^2+\mathbf{x}_2^2\right)\, .
\label{eq:rot-residual-pointwise}
\end{equation}

For $g=\mathcal{R}^3$, the same computation gives the same residual.
For $g=\mathcal{R}^2=-\mathcal{I}d$, we have $(-\mathcal{I}d)^\top \boldsymbol{\Sigma}^{-1} (-\mathcal{I}d)=\boldsymbol{\Sigma}^{-1}$,
hence $R(\mathbf{x},\mathcal{R}^2)=0$.
Also $R(\mathbf{x},\mathcal{I}d)=0$.

\paragraph{Expectation under $\mathbf{x}\sim\mathcal N(\mathbf{0},\boldsymbol{\Sigma})$.}
We have $\E\left[\mathbf{x}_1^2+\mathbf{x}_2^2\right]=\mathrm{tr}(\boldsymbol{\Sigma})=\sigma_1^2+\sigma_2^2$.
Thus from \eqref{eq:rot-residual-pointwise},
\[
\mathbb{E}_{\mathbf{x}\sim p}\, R(\mathbf{x},\mathcal{R})
=
\left(\sigma_2^{-2}-\sigma_1^{-2}\right)^2\left(\sigma_1^2+\sigma_2^2\right)\,,
\]
and averaging over $g\sim \nu_\mathcal{G}$ (uniform over four elements) yields
\begin{equation}
\E_{\mathbf{x}\sim p,\,g\sim\nu_\mathcal{G}} \, R(\mathbf{x},g)
=
\frac{1}{2}\,\left(\sigma_2^{-2}-\sigma_1^{-2}\right)^2\left(\sigma_1^2+\sigma_2^2\right)\, ,
\label{eq:rot-residual-exp}
\end{equation}
since only $\mathcal{R}$ and $\mathcal{R}^3$ contribute.
This quantity is zero iff $\sigma_1=\sigma_2$ (isotropy), i.e.\ iff the Gaussian is rotation-invariant.
Hence, GEPC detects anisotropy relative to the $90^\circ$ rotation group.

%%%%%%%%%%%%%%%%%%%%%%%%%%%%%%%%%%%%%%%%%%%%%%%%%%%%%%%%%%%%
\section{Experimental details and reproducibility}
\label{app:exp-details}
%%%%%%%%%%%%%%%%%%%%%%%%%%%%%%%%%%%%%%%%%%%%%%%%%%%%%%%%%%%%

\paragraph{Implementation.}
All methods are evaluated using the same pretrained diffusion checkpoints (CelebA-$32$ and LSUN-$256$) with no fine-tuning.
For GEPC, we follow the ID-only protocol of Section~\ref{sec:experiments}: ID-train is used for timestep selection, weighting, and density calibration; ID-test and OOD-test are used only for evaluation.

\paragraph{Hardware and software.}
Unless stated otherwise, experiments are run on a single GPU (NVIDIA GeForce RTX 4060 Laptop GPU) with PyTorch on Linux.

\paragraph{Determinism.}
We fix seeds for Python, NumPy, and PyTorch, disable TF32, and optionally enable PyTorch deterministic algorithms.
DataLoaders use an explicit \texttt{torch.Generator} with a fixed seed and \texttt{worker\_init\_fn} to ensure stable shuffling across workers.
We report exact command lines and YAML configs in the released code.

\paragraph{Compute accounting.}
We report compute as F+J, where $F$ is a forward evaluation of $\mathbf{s}_\theta(\cdot,t)$ and $J$ is a Jacobian--vector product counted as a forward-equivalent operation.
For methods using $T$ reverse diffusion steps, we report the corresponding number of sequential score evaluations.

%%%%%%%%%%%%%%%%%%%%%%%%%%%%%%%%%%%%%%%%%%%%%%%%%%%%%%%%%%%%
\section{GEPC feature variants and fusion}
\label{app:gepc-features}
%%%%%%%%%%%%%%%%%%%%%%%%%%%%%%%%%%%%%%%%%%%%%%%%%%%%%%%%%%%%
Let $\mathbf{x}_t\sim q(\mathbf{x}_t\mid \mathbf{x}_0)$. Define the transported score residual field
\begin{equation}
\mathbf r_t(\mathbf{x}_t,g)\coloneqq \mathcal{P}_g^{-1}\mathbf{s}_\theta(\mathcal{P}_g\mathbf{x}_t,t)-\mathbf{s}_\theta(\mathbf{x}_t,t)\in\mathbb{R}^{C\times h\times w},
\end{equation}
and the transported score in the canonical frame
\begin{equation}
\tilde{\mathbf{s}}_\theta(\mathbf{x}_t,t;g)\coloneqq \mathcal{P}_g^{-1}\mathbf{s}_\theta(\mathcal{P}_g\mathbf{x}_t,t)\in\mathbb{R}^{C\times h\times w},
\end{equation}
so that $\mathbf r_t(\mathbf{x}_t,g)=\tilde{\mathbf{s}}_\theta(\mathbf{x}_t,t;g)-\mathbf{s}_\theta(\mathbf{x}_t,t)$.
% We use $\mathrm{pool}(\|\mathbf{u}\|_2^2)\coloneqq \left\langle \tfrac{1}{C}\sum_c u_{c,h,w}^2\right\rangle_{h,w}$ with mean or top-$k$ pooling.
Throughout, $\mathrm{pool}(\cdot)$ denotes the following convention: for $A\in\mathbb{R}^{C\times h\times w}$ we first average over channels and then pool over spatial locations by either mean-pooling or top-$k$ pooling (top-$k$ averages the $k$ largest spatial responses). We apply this to pointwise energies, e.g. $\mathrm{pool}(\|\mathbf{u}\|_2^2)$.

\paragraph{Baseline normaliser.}
We use the pooled score energy
\begin{equation}
b_t(\mathbf{x}_0)\;\coloneqq\;\mathrm{pool}\!\Big(\|\mathbf{s}_\theta(\mathbf{x}_t,t)\|_2^2\Big).
\end{equation}

\paragraph{GEPC$_s$ (base-normalised residual energy).}
\begin{equation}
z^{(s)}_t(\mathbf{x}_0)\;\coloneqq\;\mathbb{E}_{g\sim\mathrm{Unif}(\mathcal{G})}\left[\frac{\mathrm{pool}\!\Big(\|\mathbf r_t(\mathbf{x}_t,g)\|_2^2\Big)}{b_t(\mathbf{x}_0)}\right].
\end{equation}

\paragraph{GEPC$_{\cos}$ (global cosine inconsistency).}
Let $\langle a,b\rangle$ denote the dot product after vectorising over $(c,h,w)$, and $\|a\|$ the corresponding Euclidean norm. We use
\begin{equation}
z^{(\cos)}_t(\mathbf{x}_0)\;\coloneqq\;\mathbb{E}_{g\sim\mathrm{Unif}(\mathcal{G})}\left[1-\frac{\langle \tilde{\mathbf{s}}_\theta(\mathbf{x}_t,t;g),\mathbf{s}_\theta(\mathbf{x}_t,t)\rangle}{\|\tilde{\mathbf{s}}_\theta(\mathbf{x}_t,t;g)\|\,\|\mathbf{s}_\theta(\mathbf{x}_t,t)\|}\right],
\end{equation}
which is scale-invariant and thus requires no additional base normalisation.

\paragraph{GEPC$_{\mathrm{pair}}$ (pairwise dispersion, base-normalised).}
We also use explicit pair enumeration:
\begin{equation}
z^{(\mathrm{pair})}_t(\mathbf{x}_0)\;\coloneqq\;\mathbb{E}_{g<g'}\left[\frac{\mathrm{pool}\!\Big(\|\tilde{\mathbf{s}}_\theta(\mathbf{x}_t,t;g)-\tilde{\mathbf{s}}_\theta(\mathbf{x}_t,t;g')\|_2^2\Big)}{b_t(\mathbf{x}_0)}\right].
\end{equation}

\paragraph{ID-only calibration and fusion.}
Let $\mathcal{F}=\{s,\cos,\mathrm{pair}\}$ denote the enabled feature set.
In the default scalar-density mode (\texttt{vector\_mode=none}), we fit an ID-only model per $(t,f)$ on ID-train:
(i) KDE (\texttt{density\_mode=kde}) provides $\log p_{t,f}(z)$,
(ii) z-score (\texttt{density\_mode=zscore}) provides $\ell_{t,f}(z)=-\tfrac12((z-\mu_{t,f})/\sigma_{t,f})^2$,
or (iii) raw (\texttt{density\_mode=none}) uses $z$ directly.
Within a timestep, we aggregate per-feature scores using \texttt{agg\_feat} (sum/mean), then aggregate across timesteps using \texttt{agg\_t} (default: inverse-CV weighted mean).
For KDE/z-score, the ID score is
\begin{equation}
L(\mathbf{x}_0)\;\coloneqq\;\sum_{t\in\mathcal{T}} w_t\,\mathrm{AggFeat}\big(\{\ell_{t,f}(z^{(f)}_t(\mathbf{x}_0))\}_{f\in\mathcal{F}}\big)\, ,
\end{equation}
and the final anomaly score is $S(\mathbf{x}_0)\coloneqq -L(\mathbf{x}_0)$ (OOD-high). In raw mode, we directly set $S(\mathbf{x}_0)$ to the corresponding aggregated one-sided statistic.

\paragraph{Vector MVN (optional).}
In \texttt{vector\_mode=mvn}, we fit a single Gaussian on the concatenated feature vector over all kept $(t,f)$ on ID-train and score with the corresponding Mahalanobis distance (OOD-high).
Importantly, all three features reuse the same score-network evaluations, so enabling multiple features does not change the NFE.

\paragraph{Timestep Selection Algorithm.} %Stability-Based Timestep Selection
\begin{algorithm}[t]
\caption{Stability-Based Timestep Selection (ID-only)}
\label{alg:tselect}
\begin{algorithmic}[1]
\STATE \textbf{Input:} ID-train set $\mathcal{X}$, candidate timesteps $\mathcal{T}_{\mathrm{cand}}$, integer $K$, small $\epsilon>0$
\STATE \textbf{Output:} selected timesteps $\mathcal{T}$ and weights $\{w_t\}$
\FOR{$t\in\mathcal{T}_{\mathrm{cand}}$}
  \STATE Compute scores $\{z_t(\mathbf{x})\}_{\mathbf{x}\in\mathcal{X}}$ (default: $z^{(s)}_t(\mathbf{x})$)
  \STATE $\mathrm{CV}(t)\leftarrow \mathrm{std}(z_t)/(|\mathrm{mean}(z_t)|)$
\ENDFOR
\STATE $\mathcal{T}\leftarrow$ the $K$ timesteps with smallest $\mathrm{CV}(t)$
\STATE $w_t \propto 1/(\mathrm{CV}(t))$ for $t\in\mathcal{T}$ and normalise $\displaystyle\sum_{t\in\mathcal{T}}w_t=1$
\STATE \textbf{return} $\mathcal{T}$ and $\{w_t\}$
\end{algorithmic}
\end{algorithm}

%%%%%%%%%%%%%%%%%%%%%%%%%%%%%%%%%%%%%%%%%%%%%%%%%%%%%%%%%%%%
\section{Additional ablations and runtime}
\label{app:timestep-ablation}
%%%%%%%%%%%%%%%%%%%%%%%%%%%%%%%%%%%%%%%%%%%%%%%%%%%%%%%%%%%%

This appendix reports comprehensive ablations for GEPC on the $32\times32$ setting.
Unless stated otherwise, ablations follow the default configuration in Section~\ref{sec:setup} and are reported for \textbf{all 9 ID/OOD pairs}.
For readability, we additionally provide representative plots for one pair (SVHN as ID, CIFAR-100 as OOD) in Figs.~\ref{fig:ablation_c10_svhn}--\ref{fig:hists_svhn_c100}.

\subsection{SNR-to-timestep mapping}
\label{app:snr-map}
For DDPM-style schedules, we use $\mathrm{SNR}(t)\coloneqq \bar\alpha_t/(1-\bar\alpha_t)$ and map each target SNR level
(\texttt{snr\_levels}) to the closest discrete index $t$ by nearest-neighbour matching on the precomputed schedule.
This yields a small candidate set $\mathcal{T}_{\mathrm{cand}}$.

\subsection{ID-only timestep selection and weighting}
\label{app:tselect}
For each $t\in\mathcal{T}_{\mathrm{cand}}$, we compute an ID-only stability score using the coefficient of variation
\[
\mathrm{CV}(t)=\frac{\mathrm{std}\!\left(z_t(\mathbf{x})\right)}{|\mathrm{mean}\!\left(z_t(\mathbf{x})\right)|}\, ,
\]
over ID-train samples (default: $z^{(s)}_t$).
We keep the $K$ most stable timesteps (lowest CV), yielding $\mathcal{T}$, and set
\[
w_t \propto \frac{1}{\mathrm{CV}(t)}\, ,
\]
(\texttt{weight\_t=inv\_cv}), normalised to sum to one.
We use \texttt{agg\_t=wmean} unless stated otherwise, and fix $K$ across datasets in the main table to keep compute comparable.

\noindent\textbf{Two per-$t$ diagnostics.}
We distinguish (i) a component-level diagnostic that reports AUROC of the raw transported gap at each single timestep (Figure~\ref{fig:ablation_c10_svhn}c), and (ii) the AUROC of the final GEPC score when evaluated using a \emph{single} timestep (stored alongside the $K$-sweep in Table~\ref{tab:tselect_sweep_allpairs}).
The former explains \emph{where} symmetry-breaking arises; the latter supports the ID-only selection rule.

\paragraph{Selected timesteps and weights (9 pairs).}
Table~\ref{tab:tselect_kept_allpairs} reports $\mathcal{T}_{\mathrm{cand}}$, the default kept set ($K=2$, \texttt{inv\_cv}), and the corresponding weights (normalised over kept timesteps).

\begin{table*}[t]
  \centering
  \caption{Timestep candidates, selected timesteps, and (kept-only normalised) weights for the default configuration ($K=2$, \texttt{weight\_t=inv\_cv}).}
  \label{tab:tselect_kept_allpairs}
  \scriptsize
  \setlength{\tabcolsep}{3pt}
  \begin{adjustbox}{max width=1.13\textwidth}
  \begin{tabular}{lccc|ccc|ccc}
    \toprule
    & \multicolumn{3}{c|}{CIFAR-10 (ID)} & \multicolumn{3}{c|}{SVHN (ID)} & \multicolumn{3}{c}{CelebA (ID)} \\
    \cmidrule(lr){2-4}\cmidrule(lr){5-7}\cmidrule(lr){8-10}
    & vs SVHN & vs CelebA & vs C100 & vs C10 & vs CelebA & vs C100 & vs C10 & vs SVHN & vs C100 \\
    \midrule
    $\mathcal{T}_{\mathrm{cand}}$
      & $\{5,15,136,172\}$ & $\{5,15,136,172\}$ & $\{5,15,136,172\}$
      & $\{5,15,136,172\}$ & $\{5,15,136,172\}$ & $\{5,15,136,172\}$
      & $\{5,86,172,332\}$ & $\{5,86,172,332\}$ & $\{5,86,172,332\}$ \\
    kept $\mathcal{T}$ ($K=2$)
      & $\{5,136\}$ & $\{5,136\}$ & $\{5,136\}$
      & $\{5,15\}$  & $\{5,15\}$  & $\{5,15\}$
      & $\{86,172\}$ & $\{86,172\}$ & $\{86,172\}$ \\
    % weights on kept
    %   & $(w_5,w_{136})=(0.520,0.480)$ & $(0.520,0.480)$ & $(0.522,0.478)$
    %   & $(w_5,w_{15})=(0.429,0.571)$  & $(0.429,0.571)$  & $(0.428,0.572)$
    %   & $(w_{86},w_{172})=(0.501,0.499)$ & $(0.501,0.499)$ & $(0.502,0.498)$ \\
     weights on kept
      & $(0.520,0.480)$ & $(0.520,0.480)$ & $(0.522,0.478)$
      & $(0.429,0.571)$ & $(0.429,0.571)$ & $(0.428,0.572)$
      & $(0.501,0.499)$ & $(0.501,0.499)$ & $(0.502,0.498)$ \\
    \bottomrule
  \end{tabular}
  \end{adjustbox}
\end{table*}

\paragraph{Sweep over $K$ and weighting (all 9 pairs).}
Table~\ref{tab:tselect_sweep_allpairs} reports a sweep over $K\in\{1,2,3,4\}$ and weighting choices for \emph{all} 9 ID/OOD pairs.
We include the implied NFE per input ($=(1+|\mathcal{G}|)\,K=8K$ with $|\mathcal{G}|=7$).

\begin{table*}[t]
  \centering
  \caption{Timestep selection sweep across 9 ID/OOD pairs. We report AUROC and the implied NFE per input ($8K$).}
  \label{tab:tselect_sweep_allpairs}
  \scriptsize
  \setlength{\tabcolsep}{3pt}
  \begin{tabular}{ccclccc|ccc|ccc}
    \toprule
    $K$ & $w_t$ & NFE/img
      & \multicolumn{3}{c|}{CIFAR-10 (ID)} & \multicolumn{3}{c|}{SVHN (ID)} & \multicolumn{3}{c}{CelebA (ID)} \\
    \cmidrule(lr){4-6}\cmidrule(lr){7-9}\cmidrule(lr){10-12}
    & & & vs SVHN & vs CelebA & vs C100 & vs C10 & vs CelebA & vs C100 & vs C10 & vs SVHN & vs C100 \\
    \midrule
    1 & none   & 8  & 0.871026 & 0.933444 & 0.533699 & 0.756208 & 0.999877 & 0.799473 & 0.999333 & 0.999469 & 0.999215 \\
    2 & none   & 16 & 0.835228 & 0.998620 & 0.554310 & 0.890817 & 0.999886 & 0.902899 & 0.999641 & 0.999771 & 0.999526 \\
    3 & none   & 24 & 0.785096 & 0.998993 & 0.553874 & 0.864898 & 0.999978 & 0.883712 & 0.999583 & 0.999751 & 0.999537 \\
    4 & none   & 32 & 0.757782 & 0.998891 & 0.565282 & 0.842397 & 0.999981 & 0.863905 & 0.999567 & 0.999863 & 0.999361 \\
    \midrule
    1 & inv\_cv & 8  & 0.870275 & 0.933246 & 0.538961 & 0.759835 & 0.999875 & 0.800912 & 0.999540 & 0.999835 & 0.999351 \\
    2 & inv\_cv & 16 & 0.841246 & 0.998688 & 0.557769 & 0.879337 & 0.999903 & 0.893768 & 0.999667 & 0.999792 & 0.999521 \\
    3 & inv\_cv & 24 & 0.791385 & 0.998860 & 0.556345 & 0.863403 & 0.999985 & 0.880403 & 0.999626 & 0.999749 & 0.999441 \\
    4 & inv\_cv & 32 & 0.769219 & 0.998881 & 0.566261 & 0.845128 & 0.999989 & 0.867956 & 0.999607 & 0.999863 & 0.999449 \\
    \bottomrule
  \end{tabular}
\end{table*}

\paragraph{Sweep over $K$ and weighting (representative pair).}
For direct comparison with the plots in Figs.~\ref{fig:ablation_c10_svhn}--\ref{fig:hists_svhn_c100}, Table~\ref{tab:tselect_sweep_rep} reports the same sweep for SVHN (ID) vs CIFAR-100 (OOD).

\begin{table}[t]
  \centering
  \caption{Timestep selection sweep (SVHN as ID, CIFAR-100 as OOD). We report AUROC and predicted NFE per input ($=8K$). Best is \best{bold}, second best is \second{underlined}.}
  \label{tab:tselect_sweep_rep}
  \small
  \setlength{\tabcolsep}{6pt}
  \begin{tabular}{cccc}
    \toprule
    $K$ & weighting $w_t$ & AUROC & NFE/img \\
    \midrule
    1 & none    & 0.799473 & 8  \\
    2 & none    & \best{0.902899} & 16 \\
    3 & none    & 0.883712 & 24 \\
    4 & none    & 0.863905 & 32 \\
    \midrule
    1 & inv\_cv & 0.800912 & 8  \\
    2 & inv\_cv & \second{0.893768} & 16 \\
    3 & inv\_cv & 0.880403 & 24 \\
    4 & inv\_cv & 0.867956 & 32 \\
    \bottomrule
  \end{tabular}
\end{table}

\subsection{Per-transform ablation (group elements)}
\label{app:per-g}
Let $\mathcal{G}$ denote the set of transported inputs used by GEPC.
We compute an AUROC for each $g\in\mathcal{G}$ by isolating the corresponding group-consistency gap, and compare it to the AUROC obtained by averaging over all transforms.
Figure~\ref{fig:ablation_c10_svhn} (middle) shows a representative example.

\noindent\textbf{What is varied in the per-$g$ plot.}
For interpretability, per-transform AUROCs are computed from the \emph{raw} transported-gap component (i.e.\ without KDE/z-score calibration), averaged over the retained timesteps.
The dashed horizontal line corresponds to averaging the same raw gap over all $g\in\mathcal{G}$ ("mean over $g$" in Figure~\ref{fig:ablation_c10_svhn}b).
This diagnostic checks that performance is not driven by a single transform.

\paragraph{9-pair summary table.}
Table~\ref{tab:per_g_allpairs} summarises the AUROC obtained by averaging the raw gap over $g\in\mathcal{G}$.
Since this diagnostic is \emph{unsigned} (the raw gap can be ID-high or OOD-high depending on the pair), we report $\max(\mathrm{AUROC}, 1-\mathrm{AUROC})$ as a sign-invariant separability score.

\begin{table*}[t]
  \centering
  \caption{Per-transform ablation summary (9 ID/OOD pairs). We report sign-invariant AUROC of the group-averaged raw statistic: $\max(\mathrm{AUROC},1-\mathrm{AUROC})$.}
  \label{tab:per_g_allpairs}
  \small
  \setlength{\tabcolsep}{6pt}
  \begin{tabular}{lccc|ccc|ccc}
    \toprule
    & \multicolumn{3}{c|}{CIFAR-10 (ID)} & \multicolumn{3}{c|}{SVHN (ID)} & \multicolumn{3}{c}{CelebA (ID)} \\
    \cmidrule(lr){2-4}\cmidrule(lr){5-7}\cmidrule(lr){8-10}
    Metric & vs SVHN & vs CelebA & vs C100 & vs C10 & vs CelebA & vs C100 & vs C10 & vs SVHN & vs C100 \\
    \midrule
    $\max(\mathrm{AUROC},1-\mathrm{AUROC})$ & 0.857944 & 0.999119 & 0.539358 & 0.915203 & 0.999911 & 0.923161 & 0.999707 & 0.999844 & 0.999618 \\
    \bottomrule
  \end{tabular}
\end{table*}

\subsection{Calibration variants and feature fusion}
\label{app:calib-fusion}
We compare KDE calibration (\texttt{density\_mode=kde}) against z-score normalisation and the uncalibrated score (\texttt{density\_mode=none}).
We also evaluate a Gaussian/Mahalanobis model on multi-$t$ feature vectors (\texttt{vector\_mode=mvn}).

\paragraph{Calibration variants (9 pairs).}
Table~\ref{tab:calib_allpairs} reports AUROC for calibration choices using the single feature GEPC$_s$.

\begin{table*}[t]
  \centering
  \caption{Calibration variants for GEPC$_s$ across 9 ID/OOD pairs. Values are AUROC. Best is \best{bold}, second best is \second{underlined} \emph{within each column}.}
  \label{tab:calib_allpairs}
  \small
  \setlength{\tabcolsep}{5pt}
  \begin{tabular}{lccc|ccc|ccc}
    \toprule
    & \multicolumn{3}{c|}{CIFAR-10 (ID)} & \multicolumn{3}{c|}{SVHN (ID)} & \multicolumn{3}{c}{CelebA (ID)} \\
    \cmidrule(lr){2-4}\cmidrule(lr){5-7}\cmidrule(lr){8-10}
    Calibration & vs SVHN & vs CelebA & vs C100 & vs C10 & vs CelebA & vs C100 & vs C10 & vs SVHN & vs C100 \\
    \midrule
    KDE (ID-only)      & \second{0.839844} & 0.998618 & 0.555951 & 0.878912 & \second{0.999915} & \second{0.894024} & \best{0.999644} & \best{0.999781} & \best{0.999529} \\
    z-score            & \best{0.841080}   & \best{0.998967} & \second{0.556618} & 0.853889 & \best{0.999921} & 0.873391 & \second{0.999635} & \second{0.999760} & \second{0.999528} \\
    none (raw)         & 0.136300          & \second{0.998956} & 0.537980 & \best{0.911203} & 0.999914 & \best{0.917708} & 0.000346 & 0.000225 & 0.000427 \\
    MVN (Mahalanobis)  & 0.837607          & 0.998929 & \best{0.559150} & \second{0.881477} & 0.999913 & 0.890725 & 0.999313 & 0.999514 & 0.998825 \\
    \bottomrule
  \end{tabular}
\end{table*}

\subsection{Feature variants (single-feature ablations)}
\label{app:feat-variants}
We ablate the three GEPC statistics used in the paper (Appendix~\ref{app:gepc-features} for definitions).
For compactness, Table~\ref{tab:feat_allpairs} reports the single-feature AUROC for each statistic across 9 pairs.
Figure~\ref{fig:ablation_c10_svhn} (left) visualises a representative case.

\begin{table*}[t]
  \centering
  \caption{Single-feature ablations across 9 ID/OOD pairs (three GEPC statistics). Values are AUROC under KDE calibration. Best is \best{bold}, second best is \second{underlined} \emph{within each column}.}
  \label{tab:feat_allpairs}
  \small
  \setlength{\tabcolsep}{5pt}
  \begin{tabular}{lccc|ccc|ccc}
    \toprule
    & \multicolumn{3}{c|}{CIFAR-10 (ID)} & \multicolumn{3}{c|}{SVHN (ID)} & \multicolumn{3}{c}{CelebA (ID)} \\
    \cmidrule(lr){2-4}\cmidrule(lr){5-7}\cmidrule(lr){8-10}
    Feature & vs SVHN & vs CelebA & vs C100 & vs C10 & vs CelebA & vs C100 & vs C10 & vs SVHN & vs C100 \\
    \midrule
    GEPC$_s$              & \best{0.838699} & \second{0.998862} & \second{0.556210} & \best{0.879254} & \best{0.999888} & \best{0.896374} & \second{0.999610} & 0.999764 & \best{0.999492} \\
    GEPC$_{\cos}$         & 0.584358        & \best{0.999177}   & 0.546293        & 0.873489 & \second{0.999852} & \second{0.896193} & 0.999596 & \best{0.999820} & \second{0.999476} \\
    GEPC$_{\mathrm{pair}}$& 0.819882 & 0.996907         & 0.549942 & 0.861242        & 0.999298        & 0.877116        & 0.998608        & 0.998695        & 0.998003 \\
    Fusion (mean) & \second{0.831308} & 0.998614 & \best{0.557173} & \second{0.876319} & 0.999842 & 0.893592 & \best{0.999617} & \second{0.999787} & 0.999416 \\
    \bottomrule
  \end{tabular}
\end{table*}

\subsection{Runtime and NFEs}
\label{app:runtime}
For each timestep $t$, GEPC uses one reference evaluation $\mathbf{s}_\theta(\mathbf{x}_t,t)$ and one batched evaluation over transported inputs
$\{\mathcal{P}_g \mathbf{x}_t\}_{g\in\mathcal{G}}$, hence $(1+|\mathcal{G}|)$ forward evaluations and $0$ JVPs per timestep.
With $m$ Monte-Carlo noise samples and $K=|\mathcal{T}|$ retained timesteps, total cost is $(1+|\mathcal{G}|)\,K\,m$ forward passes.
This computation is parallelisable over $g$ and (when memory allows) over $t$.

\begin{table}[t]
  \centering
  \caption{Measured runtime for a representative $32\times 32$ pair (SVHN as ID, CIFAR-100 as OOD) on a single GPU, alongside implied NFE.
  Timing is reported as milliseconds per image (lower is better).
  Hardware: NVIDIA GeForce RTX 4060 Laptop GPU (Linux, PyTorch).}
  \label{tab:runtime_rep}
  \small
  \setlength{\tabcolsep}{6pt}
  \begin{tabular}{lccc}
    \toprule
    Variant & AUROC & ms/img (ID) & ms/img (OOD) \\
    \midrule
    GEPC$_s$ + KDE     & 0.894024 & 69.92 & 69.96 \\
    GEPC$_s$ + z-score & 0.873391 & 70.02 & 69.72 \\
    GEPC$_s$ raw       & 0.917708 & 69.68 & 69.66 \\
    GEPC$_s$ + MVN     & 0.890725 & 69.86 & 69.69 \\
    \bottomrule
  \end{tabular}
\end{table}

\subsection{Representative plots and score distributions}
\label{app:rep-plots}
We provide representative plots for one pair (CIFAR-10 as ID, SVHN as OOD).
Figure~\ref{fig:ablation_c10_svhn} shows feature variants, per-transform AUROC, and single-timestep AUROC vs.\ $t$.
Figure~\ref{fig:hists_svhn_c100} shows the separation of score distributions for baseline energy, transported energy gap, and the final GEPC$_s$ score.

\begin{figure*}[t]
  \centering
  \begin{subfigure}[t]{0.32\linewidth}
    \centering
    \includegraphics[width=\linewidth]{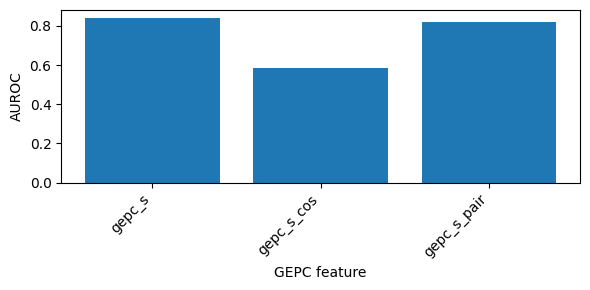}
    \caption{Feature variants (single-feature AUROC).}
  \end{subfigure}
  \hfill
  \begin{subfigure}[t]{0.32\linewidth}
    \centering
    \includegraphics[width=\linewidth]{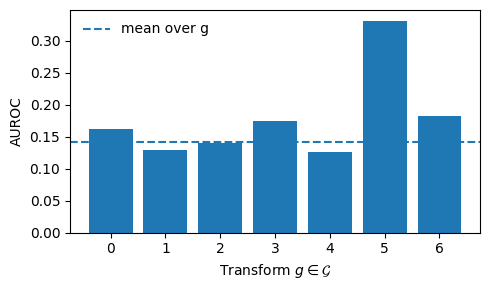}
    \caption{Per-transform AUROC (raw gap component).}
  \end{subfigure}
  \hfill
  \begin{subfigure}[t]{0.32\linewidth}
    \centering
    \includegraphics[width=\linewidth]{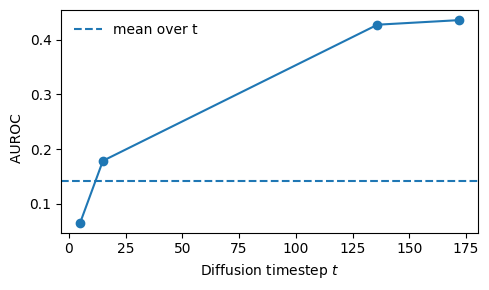}
    \caption{Single-timestep AUROC vs.\ $t$ (raw gap component).}
  \end{subfigure}
  \caption{
  Representative ablations for GEPC (CIFAR10 as ID, SVHN as OOD).
  (a) Single-feature variants under the same ID-only protocol.
  (b) Per-transform AUROC computed from the raw transported-gap component (no calibration); the dashed line averages the same component over $g\in\mathcal{G}$.
  (c) Single-timestep AUROC computed from the raw transported-gap component; the dashed line averages the same component over the retained timesteps.}
  \label{fig:ablation_c10_svhn}
\end{figure*}

\begin{figure*}[t]
  \centering
  \begin{subfigure}[t]{0.32\linewidth}
    \centering
    \includegraphics[width=\linewidth]{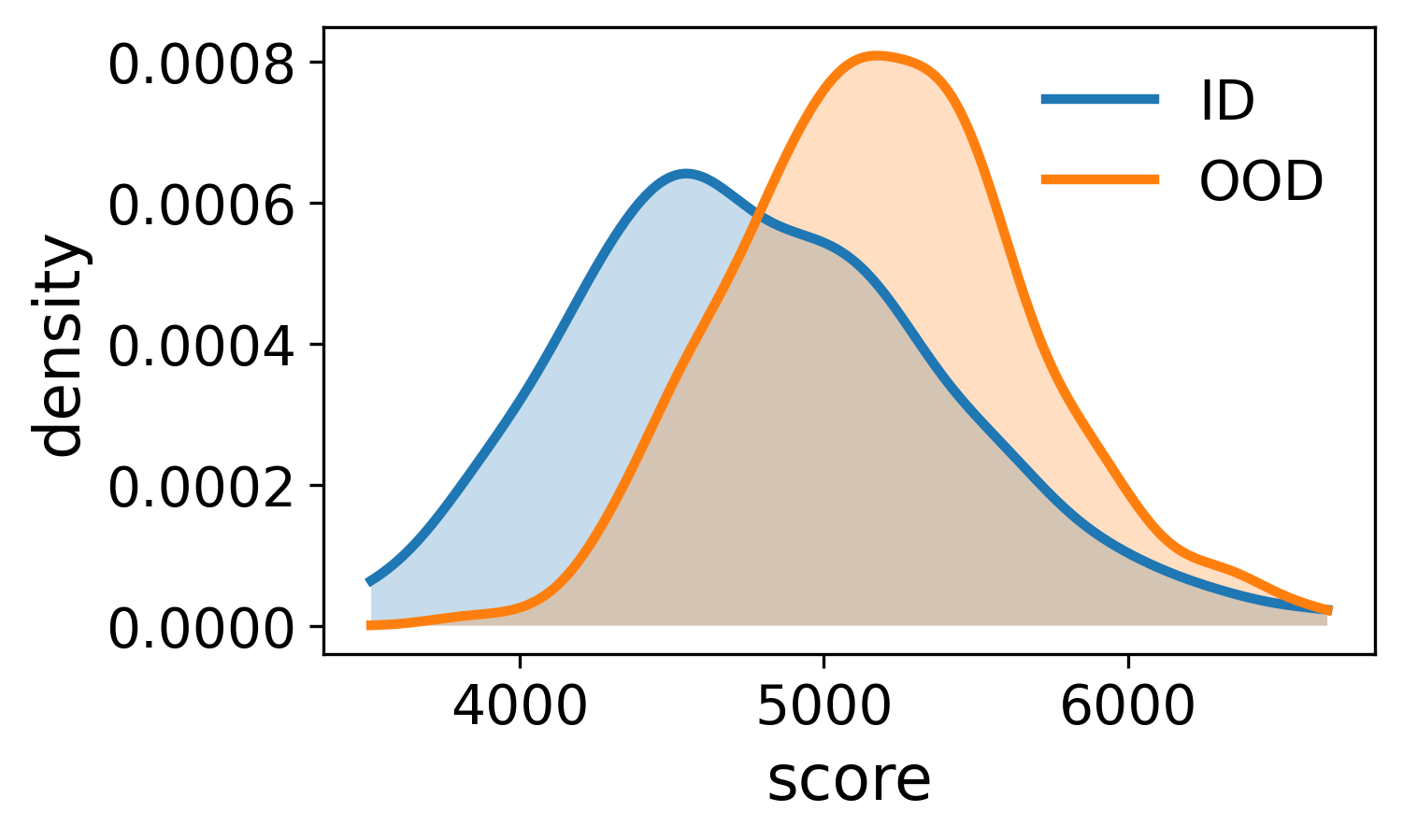}
    \caption{\textbf{Score magnitude (non-GEPC).}
    $E_t(\mathbf{x}_t)\coloneqq \|\mathbf{s}_\theta(\mathbf{x}_t,t)\|_2^2$,
    with $\mathbf{x}_t\sim q(\cdot\mid \mathbf{x}_0)$.}
    \label{fig:hists_svhn_c100_base}
  \end{subfigure}
  \hfill
  \begin{subfigure}[t]{0.32\linewidth}
    \centering
    \includegraphics[width=\linewidth]{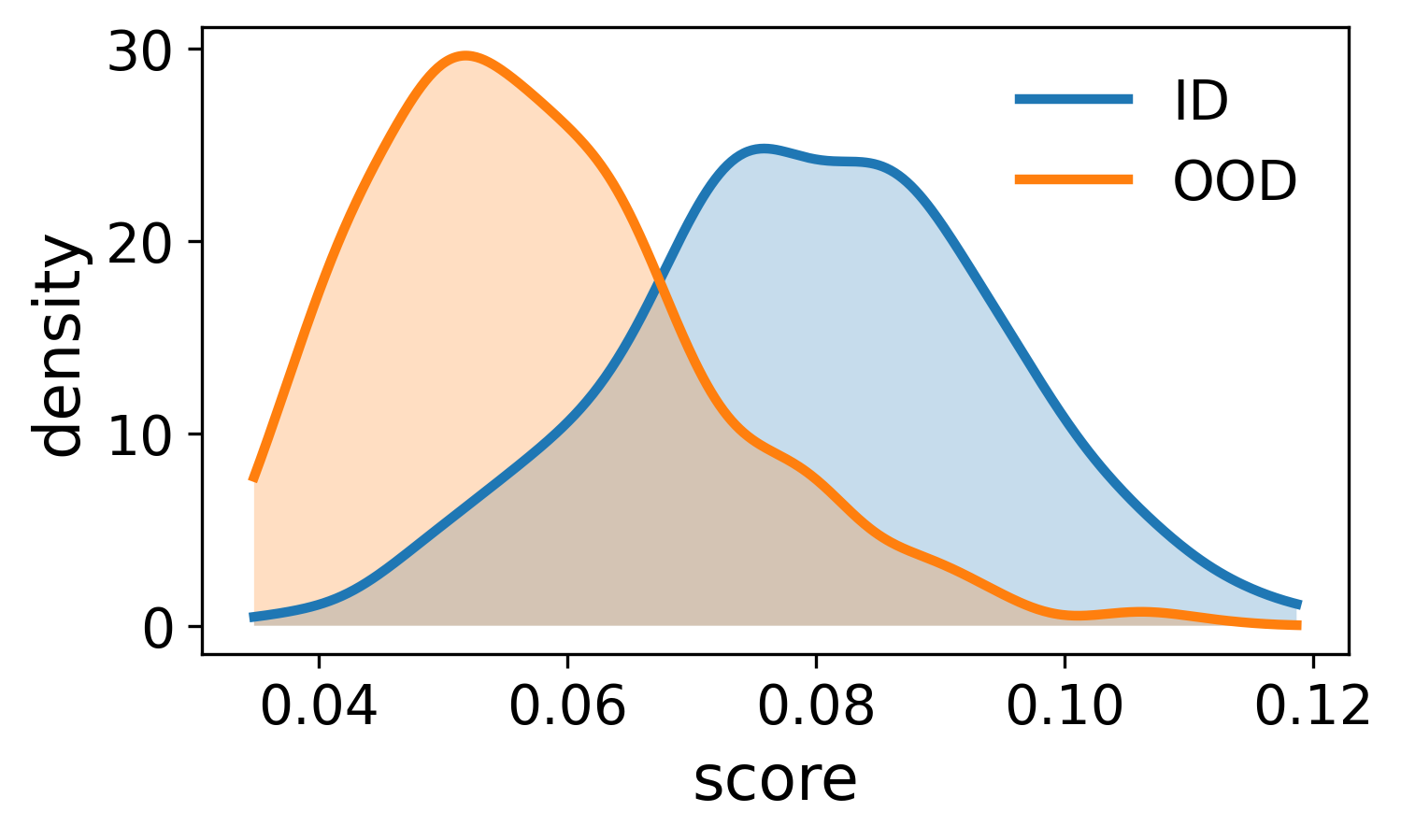}
    \caption{\textbf{Equivariance residual energy (single-step).}
    $R_t(\mathbf{x}_t,g)\coloneqq \|\Delta_g \mathbf{s}_\theta(\mathbf{x}_t,t)\|_2^2$,
    where $\Delta_g f(\mathbf{x},t)\coloneqq \mathcal{P}_g^{-1}f(\mathcal{P}_g\mathbf{x},t)-f(\mathbf{x},t)$.}
    \label{fig:hists_svhn_c100_rt}
  \end{subfigure}
  \hfill
  \begin{subfigure}[t]{0.32\linewidth}
    \centering
    \includegraphics[width=\linewidth]{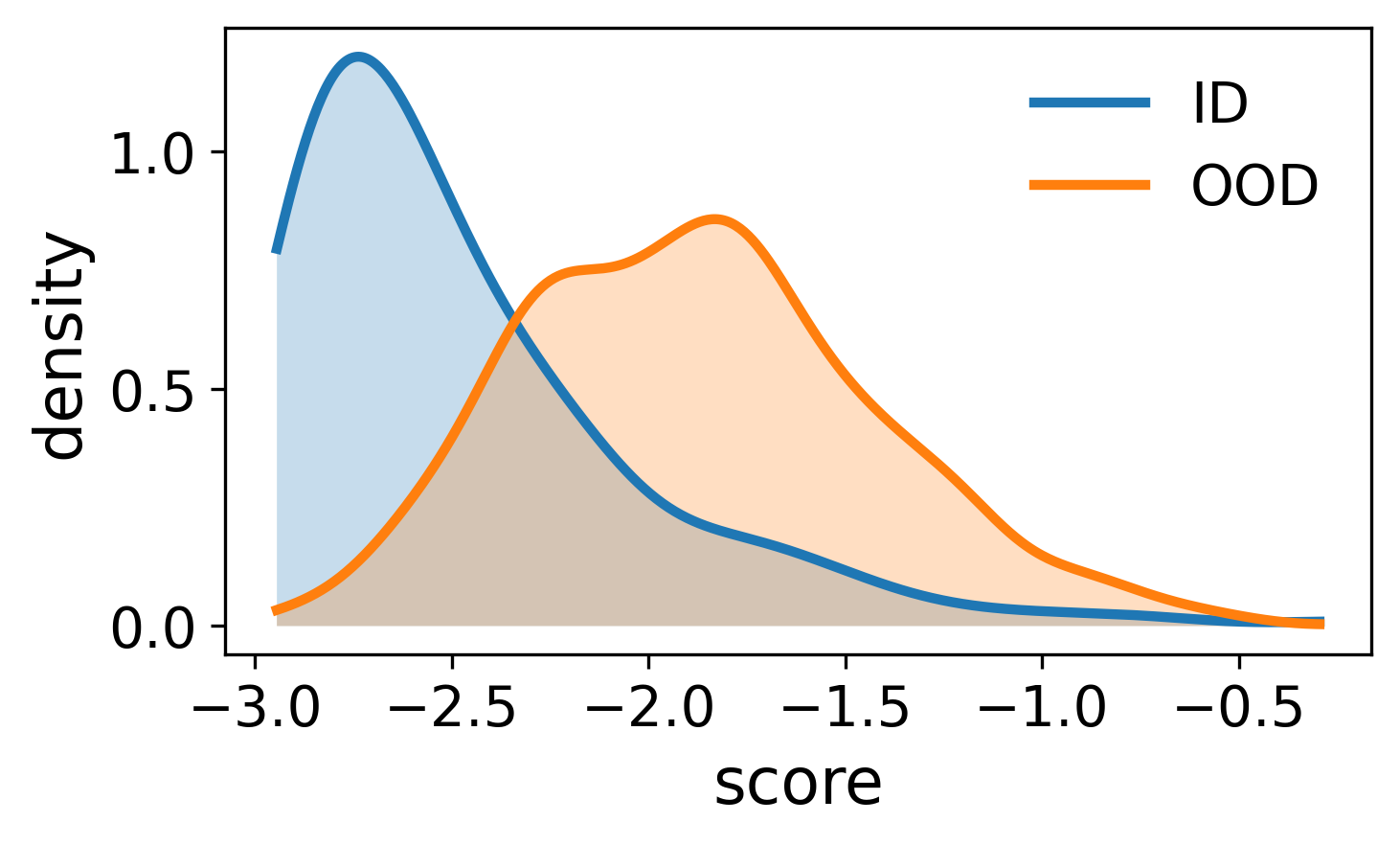}
    \caption{\textbf{Final GEPC score (time-averaged).}
    $\mathrm{GEPC}(\mathbf{x}_0)\coloneqq \sum_{t\in\mathcal T} w_t\,
    \mathbb{E}_{\mathbf{x}_t\sim q(\cdot\mid \mathbf{x}_0),\;g\sim \nu_{\mathcal G}}
    \!\left[R_t(\mathbf{x}_t,g)\right]$.}
    \label{fig:hists_svhn_c100_gepc}
  \end{subfigure}

  \caption{\textbf{Score distributions (ID vs OOD) for a representative pair (SVHN as ID, CIFAR-100 as OOD).}
  Left: score magnitude $E_t(\mathbf{x}_t)$ (a baseline diagnostic, not GEPC).
  Middle: single-step equivariance residual energy $R_t(\mathbf{x}_t,g)$.
  Right: time-averaged GEPC score $\mathrm{GEPC}(\mathbf{x}_0)$ aggregating $R_t$ over $t\in\mathcal T$ with weights $w_t$ and uniform $g\sim\nu_{\mathcal G}$.}
  \label{fig:hists_svhn_c100}
\end{figure*}

%%%%%%%%%%%%%%%%%%%%%%%%%%%%%%%%%%%%%%%%%%%%%%%%%%%%%%%%%%%%
\section{Radar SAR details}
\label{app:radar-details}
%%%%%%%%%%%%%%%%%%%%%%%%%%%%%%%%%%%%%%%%%%%%%%%%%%%%%%%%%%%%

\paragraph{SAR background (context).}
Synthetic Aperture Radar (SAR) is an active microwave imaging modality producing high-resolution reflectivity maps under all-weather and day/night conditions.
SAR images are coherent and typically exhibit speckle and strong intensity dynamics; we therefore visualise and process patches in log-magnitude.

\paragraph{Datasets and OOD task.}
We use HRSID and SSDD, two public SAR datasets commonly used for ship detection.
We form an OOD task where \emph{sea-clutter-only} patches are in-distribution (ID) and patches containing at least one annotated ship (and wake when visible in the patch) are out-of-distribution (OOD).

% \paragraph{Quantitative results.}
% Table~\ref{tab:sar_ood_metrics} reports patch-level AUROC, AUPR (OOD as positive), and FPR@95\%TPR for ID sea-clutter patches from HRSID against target-containing patches from HRSID and SSDD.

\paragraph{Quantitative results.}
Patch-level OOD detection metrics are reported in Table~\ref{tab:sar_ood_metrics} for ID sea-clutter patches from HRSID against target-containing patches from HRSID (intra-dataset) and SSDD (cross-dataset).

\paragraph{Preprocessing and patching.}
For each SAR patch, we convert intensities to log-magnitude, apply per-patch normalisation, and resize/crop to $256\times256$ to match the LSUN-$256$ diffusion backbone input.
If the backbone expects 3 channels, we replicate the single-channel SAR patch across channels.
No SAR-specific fine-tuning is performed.

\paragraph{Equivariance residual maps and normalisation.}
Beyond the scalar GEPC score, we visualise the \emph{pre-pooling} equivariance residual magnitude map $|\Delta(\mathbf{x})|$, highlighting spatial regions where equivariance breaks (typically ships/wakes) while remaining low on homogeneous sea clutter.
For magnitude comparison across examples and datasets, we export globally normalised maps using a fixed
$
v_{\mathrm{global}} = \mathrm{median}_{\mathbf{x}\in\mathcal{P}_{\mathrm{ID}}}\, q_{0.99}(|\Delta(\mathbf{x})|),
$
computed over an ID candidate pool $\mathcal{P}_{\mathrm{ID}}$.
We also export per-image normalised maps and raw residual maps for inspection (see exported files and metadata).

\begin{figure*}[t]
  \centering

  \textbf{ID (HRSID sea clutter)}\\[-1mm]
  \begin{subfigure}[b]{0.29\linewidth}
    \centering
    \includegraphics[width=\linewidth]{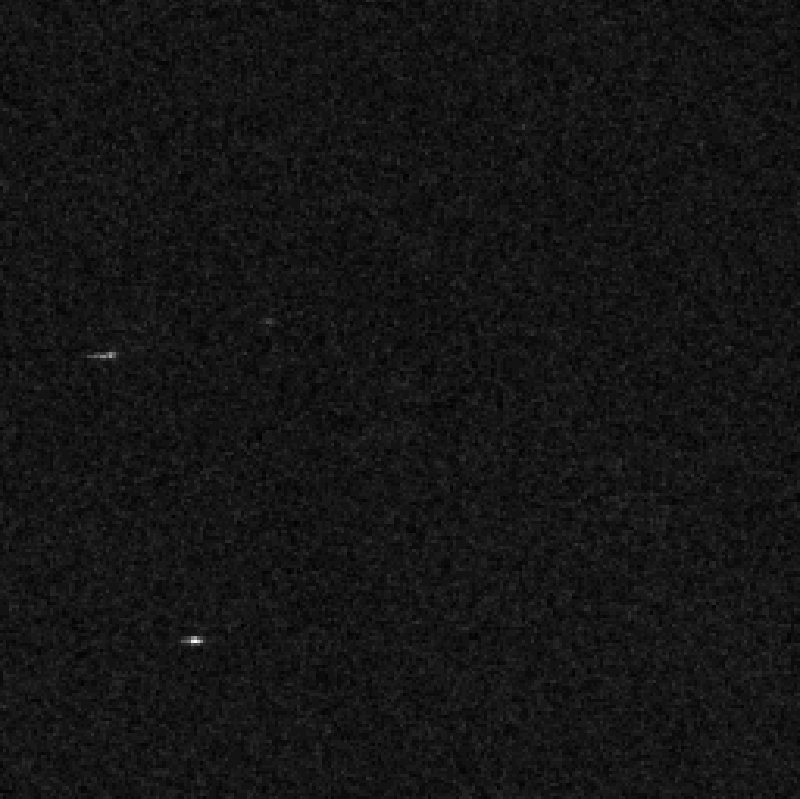}
    \caption{Log-magnitude}
  \end{subfigure}
  \hfill
  \begin{subfigure}[b]{0.34\linewidth}
    \centering
    \includegraphics[width=\linewidth]{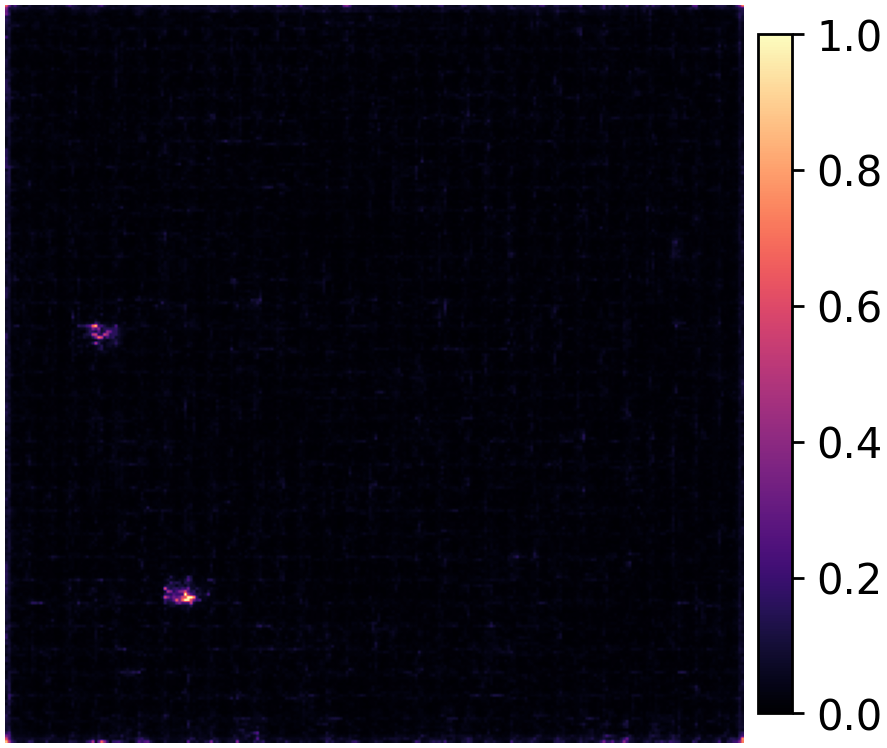}
    \caption{GEPC residual}
  \end{subfigure}
  \hfill
  \begin{subfigure}[b]{0.34\linewidth}
    \centering
    \includegraphics[width=\linewidth]{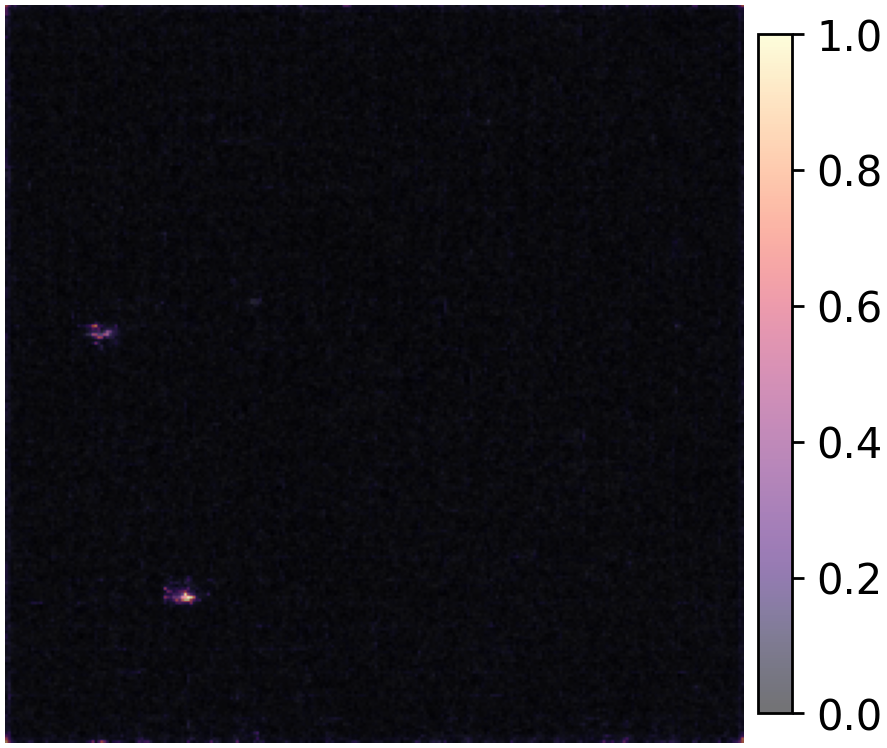}
    \caption{Overlay}
  \end{subfigure}

  \vspace{2mm}
  \textbf{OOD (HRSID targets)}\\[-1mm]
  \begin{subfigure}[b]{0.29\linewidth}
    \centering
    \includegraphics[width=\linewidth]{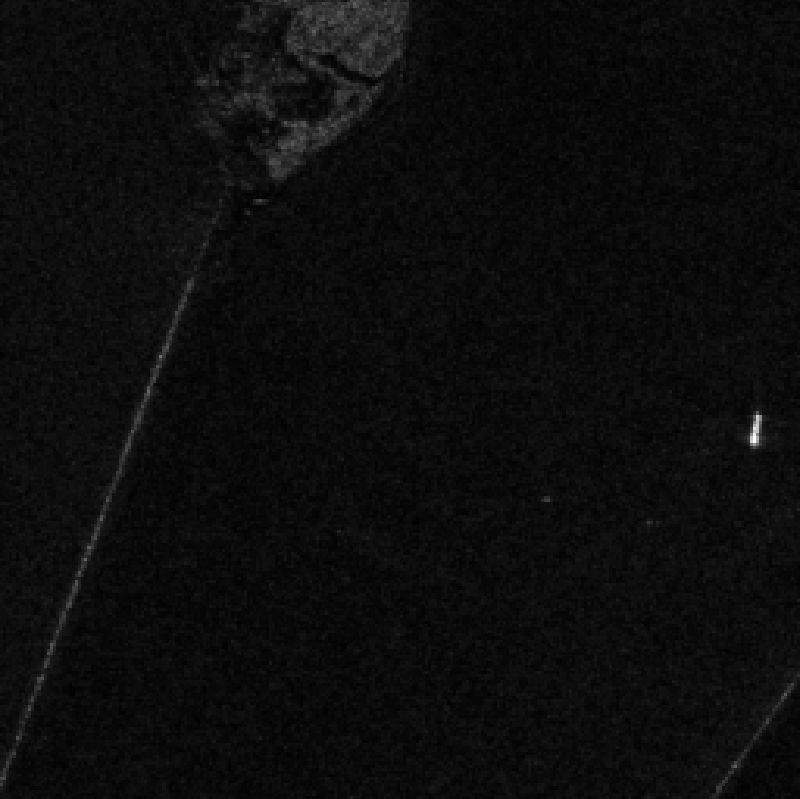}
    \caption{Log-magnitude}
  \end{subfigure}
  \hfill
  \begin{subfigure}[b]{0.34\linewidth}
    \centering
    \includegraphics[width=\linewidth]{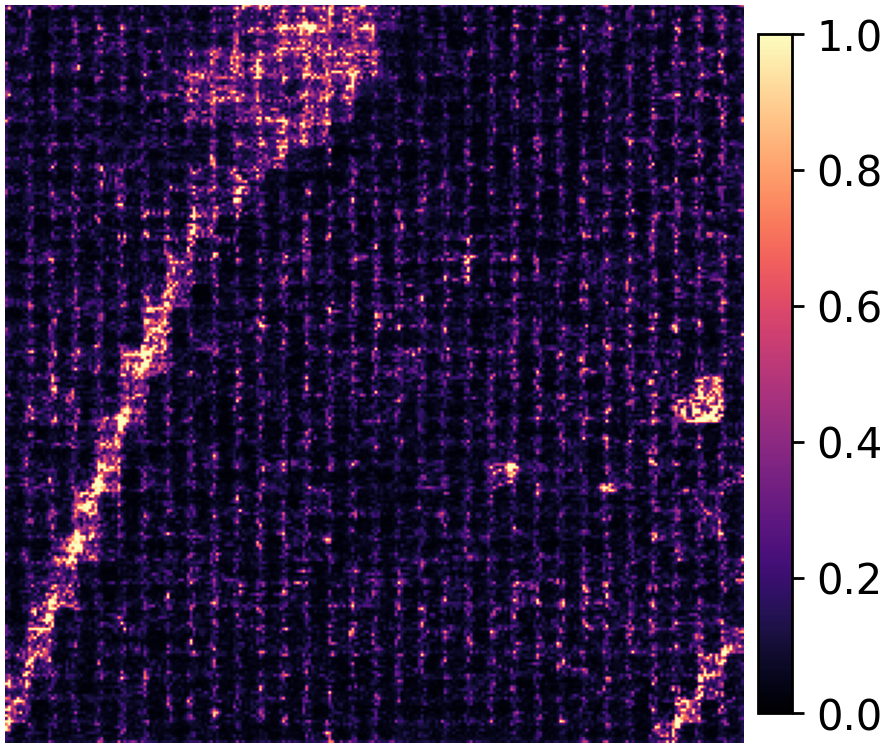}
    \caption{GEPC residual}
  \end{subfigure}
  \hfill
  \begin{subfigure}[b]{0.34\linewidth}
    \centering
    \includegraphics[width=\linewidth]{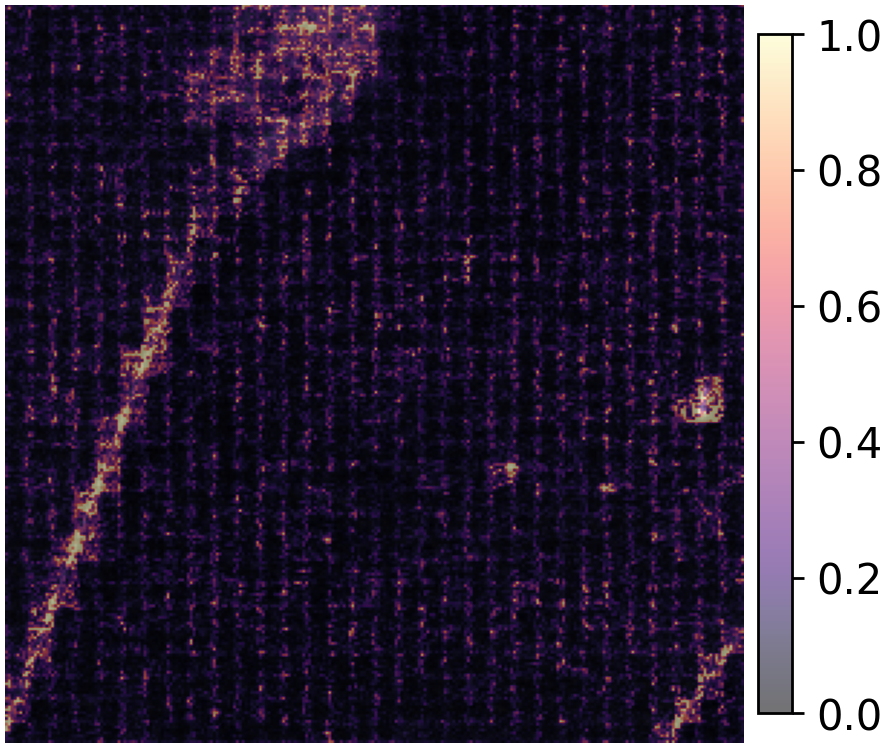}
    \caption{Overlay}
  \end{subfigure}

  \vspace{2mm}
  \textbf{OOD (SSDD targets)}\\[-1mm]
  \begin{subfigure}[b]{0.29\linewidth}
    \centering
    \includegraphics[width=\linewidth]{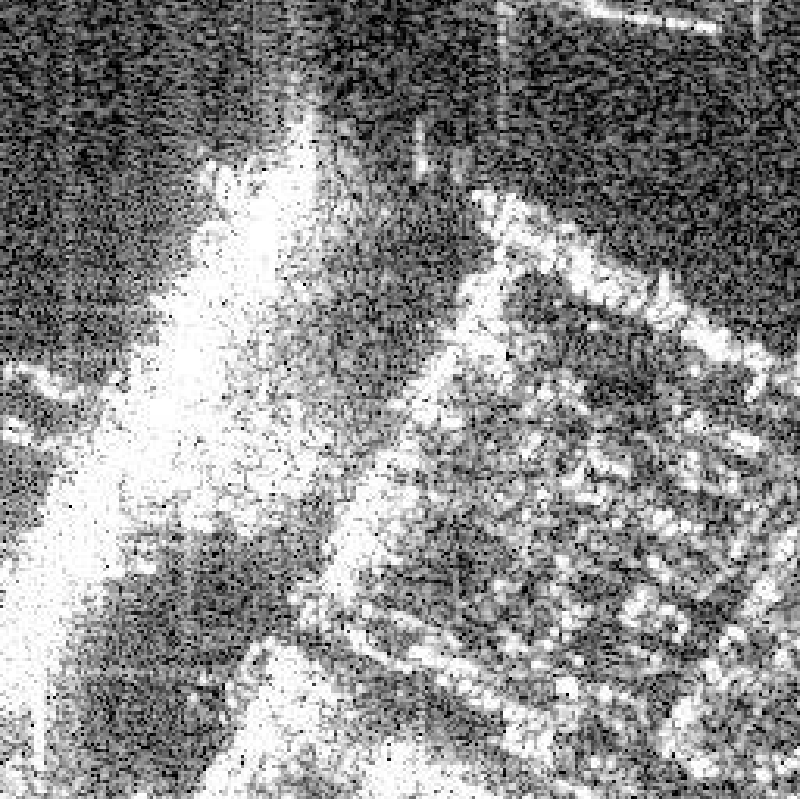}
    \caption{Log-magnitude}
  \end{subfigure}
  \hfill
  \begin{subfigure}[b]{0.34\linewidth}
    \centering
    \includegraphics[width=\linewidth]{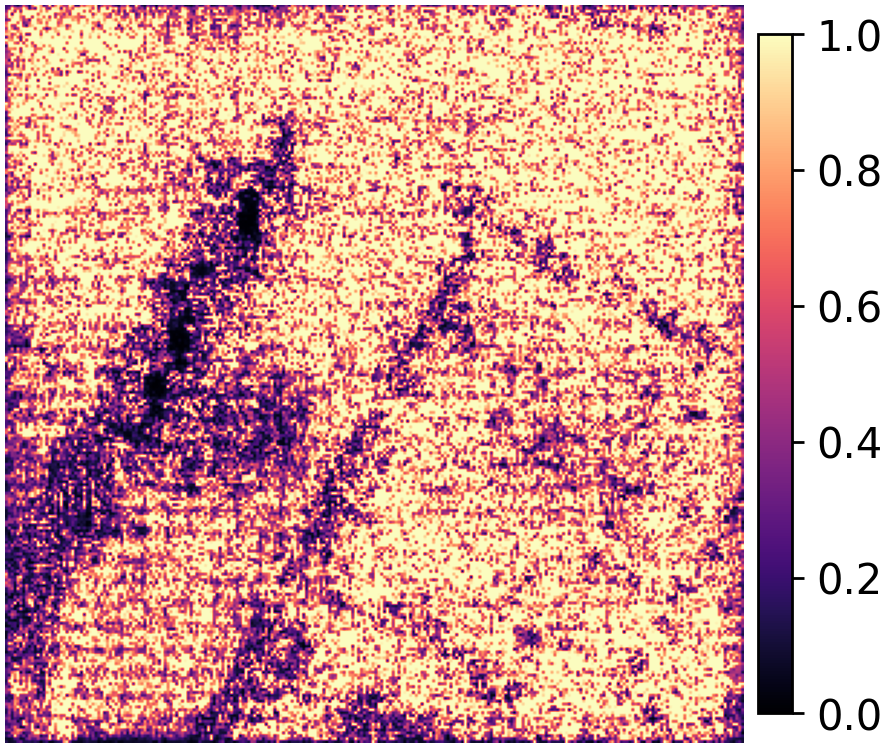}
    \caption{GEPC residual}
  \end{subfigure}
  \hfill
  \begin{subfigure}[b]{0.34\linewidth}
    \centering
    \includegraphics[width=\linewidth]{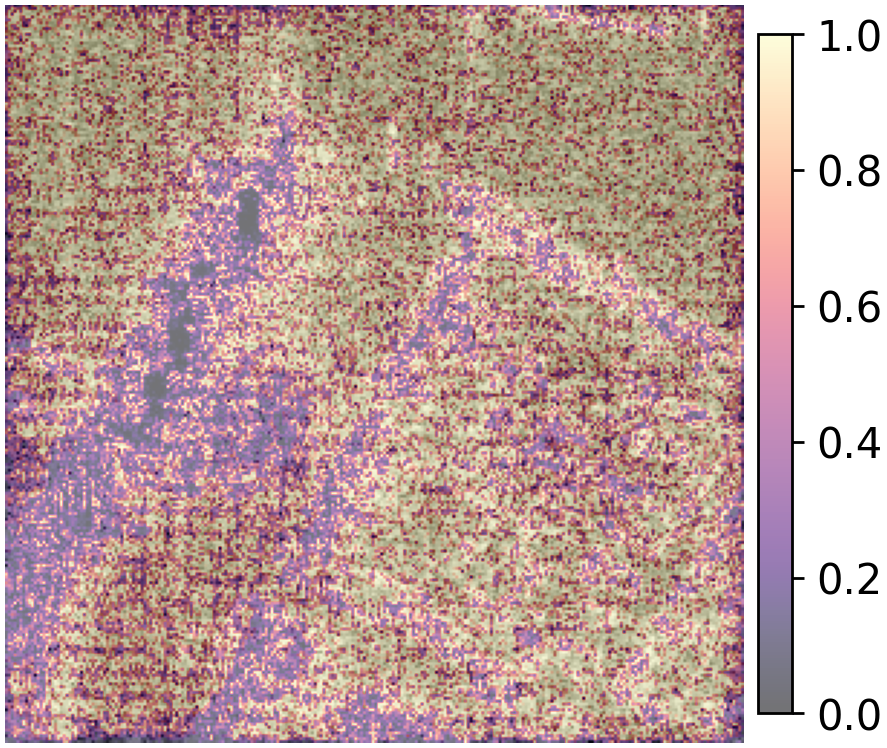}
    \caption{Overlay}
  \end{subfigure}

  \caption{
  Qualitative GEPC localisation on SAR patches (LSUN-$256$, no SAR fine-tuning).
  Residual maps are globally normalised by a shared $v_{\mathrm{global}}$ (computed on an ID pool) to enable comparison across ID/OOD and across datasets.
  }
  \label{fig:sar_qual_appendix}
\end{figure*}

\begin{table}[t]
  \centering
  \caption{Patch-level OOD detection on SAR. ID is sea-clutter patches from HRSID; OOD are target-containing patches from HRSID and SSDD. Higher AUROC/AUPR is better; lower FPR@95\%TPR is better.}
  \label{tab:sar_ood_metrics}
  \small
  \setlength{\tabcolsep}{6pt}
  \begin{tabular}{lccc}
    \toprule
    OOD split (targets) & AUROC $\uparrow$ & FPR@95 $\downarrow$ & AUPR $\uparrow$ \\
    \midrule
    HRSID-ship/wake & 0.853 & 0.000 & 0.619 \\
    SSDD-ship       & 1.000 & 0.000 & 1.000 \\
    \bottomrule
  \end{tabular}
\end{table}

\end{document}